\newcommand{\bx}{\mathbf{x}}
\newcommand{\by}{\mathbf{y}}
\newcommand{\bz}{\mathbf{z}}
\newcommand{\bM}{\mathbf{M}}
\DeclareMathOperator*{\argmin}{argmin}
\newcommand{\myheight}{4cm}
\newcommand{\mywidth}{0.95\linewidth}
\theoremstyle{plain}
\newtheorem{theorem}{Theorem}[section]
\newtheorem{lemma}[theorem]{Lemma}
\theoremstyle{definition}
\newtheorem{assumption}[theorem]{Assumption}
\theoremstyle{remark}
\newtheorem{remark}[theorem]{Remark}
\newif\ifshowtodos
\newcommand{\annainline}[1]{\todo[inline,linecolor=red,backgroundcolor=red!20]{#1}}
\newcommand{\anna}[1]{\todo[color=red!15,linecolor=red]{#1}}
\newcommand{\annainline}[1]{}
\newcommand{\anna}[1]{}
\title{Fast and Efficient Gossip Algorithms for \\ Robust and  Non-smooth Decentralized Learning}
\author{
  Anna van Elst \quad Igor Colin \quad Stephan Cl\'emen\c{c}on \\
  LTCI, T\'el\'ecom Paris, Institut Polytechnique de Paris\\
  \texttt{\{anna.vanelst, igor.colin, stephan.clemencon\}@telecom-paris.fr}
}
\begin{document}

\maketitle

\begin{abstract}
Decentralized learning on resource-constrained edge devices demands algorithms that are communication-efficient, robust to data corruption, and lightweight in memory. State-of-the-art gossip-based methods address communication efficiency, but achieving robustness remains challenging. Methods for robust estimation and optimization typically rely on non-smooth objectives (\textit{e.g.}, pinball loss, $\ell_1$ loss), yet standard gossip methods are primarily designed for smooth losses. Asynchronous decentralized ADMM-based methods have been proposed to handle such non-smooth objectives; however, existing approaches require memory that scales with node degree, making them impractical when memory is limited. We propose AsylADMM, a novel asynchronous gossip algorithm for decentralized non-smooth optimization requiring only two variables per node. We provide a new theoretical analysis for the synchronous variant and leverage it to prove convergence of AsylADMM in a simplified setting based on the squared loss. Empirically, AsylADMM converges faster than existing baselines on challenging non-smooth problems, including quantile and geometric median estimation, lasso regression, and robust regression. More broadly, our novel gossip framework opens a practical pathway toward robust and non-smooth decentralized learning.

\end{abstract}

\section{Introduction}

Decentralized learning has become essential for large-scale machine learning, driven by the rapid development of edge AI systems. In many applications, data is generated and processed directly on resource-constrained devices such as connected sensors or mobile phones, which cannot rely on centralized architectures due to communication, energy, or privacy constraints. Gossip learning methods are particularly appealing in this context: they rely on lightweight peer-to-peer communication, avoid global coordination, and scale naturally with network size \cite{boyd2006randomized, boyd2011distributed}. Despite these advantages, most gossip learning algorithms lack robustness to corrupted data---even a small fraction of faulty measurements can significantly degrade performance \cite{van2025robust, ayadi2017outlier}. Achieving robustness typically requires the use of robust statistics such as the median, quantiles, or trimmed means, which are less sensitive to extreme values and thus more robust against corrupted nodes \cite{huber2011robust}. 

Recently, decentralized algorithms have been proposed to compute trimmed means in this context \cite{van2025robust}, while for medians, and more generally quantiles, optimization-based approaches ($M$-estimation) are more natural, as such statistics minimize a pinball loss, which is non-smooth. To handle such non-smooth objectives, proximal methods are preferred over slow-converging subgradient approaches. ADMM~\cite{boyd2011distributed} is particularly well suited here, as it offers two key advantages: as a proximal method, it effectively handles non-smoothness and, as a splitting method, it circumvents the non-linearity of the proximal operator by decoupling consensus constraints from local optimization. However, existing decentralized ADMM algorithms are either synchronous \cite{wei2012distributed} or, when asynchronous, require edge-specific auxiliary states and incur significant memory overhead \cite{iutzeler2017distributed, bianchi2015coordinate}.

To address these challenges, we make the following contributions:

\noindent $\bullet$ We propose \textbf{AsylADMM}, a novel \textbf{Asy}nchronous and \textbf{L}ite \textbf{ADMM}-based gossip algorithm for non-smooth convex decentralized optimization. Compared to existing methods, our algorithm offers several advantages:  (a) \textit{memory efficiency:} each node stores only two variables, versus the $2d+1$ required by prior methods, where $d$ denotes the node degree; (b) \textit{fast convergence:} extensive experiments across diverse network topologies and data distributions demonstrate that AsylADMM converges faster than competing methods on median and quantile estimation problems.

\noindent $\bullet$ We provide two novel theoretical results: (a) a new convergence analysis for the synchronous variant; (b) a simplified convergence analysis for AsylADMM based on the squared loss for step size $\rho \leq 1$. In the special case $\rho = 1$, we prove that our algorithm recovers classical pairwise averaging, establishing a clear link to a well-studied gossip algorithm. Moreover, empirical results suggest that $\rho = 2$ can significantly accelerate convergence for mean estimation on the geometric graph.

\noindent $\bullet$ We empirically show that AsylADMM extends naturally to other challenging non-smooth problems such as geometric median, lasso regression and robust regression (via least trimmed squares), consistently converging faster than existing methods while remaining memory-efficient.

\begin{figure*}[t]
    \centering
    \begin{subfigure}{0.32\textwidth}
        \centering
        \includegraphics[height=\myheight, width=\mywidth]{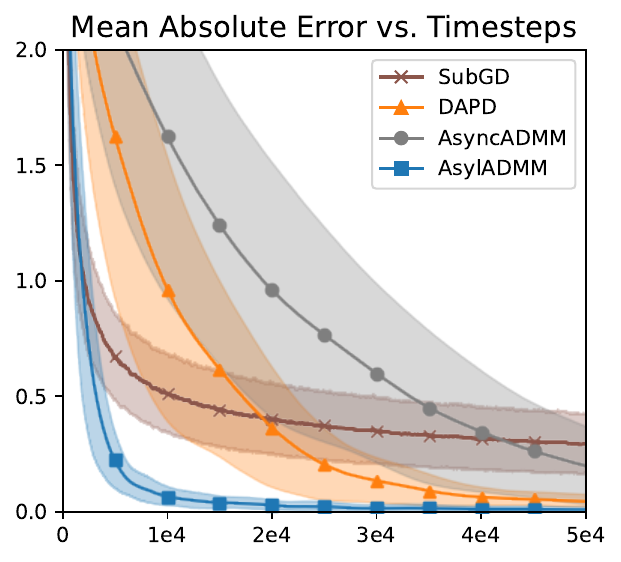}
        \caption{Median estimation}
        \label{subfig:median}
    \end{subfigure}\hfill
    \begin{subfigure}{0.32\textwidth}
        \centering
        \includegraphics[height=\myheight, width=\mywidth]{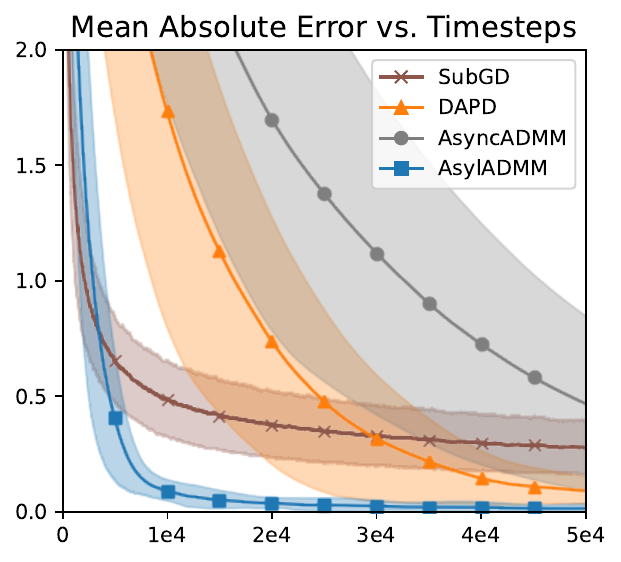}
        \caption{Quantile estimation}
        \label{subfig:quantile}
    \end{subfigure}\hfill
    \begin{subfigure}{0.32\textwidth}
        \centering
        \includegraphics[height=\myheight, width=\mywidth]{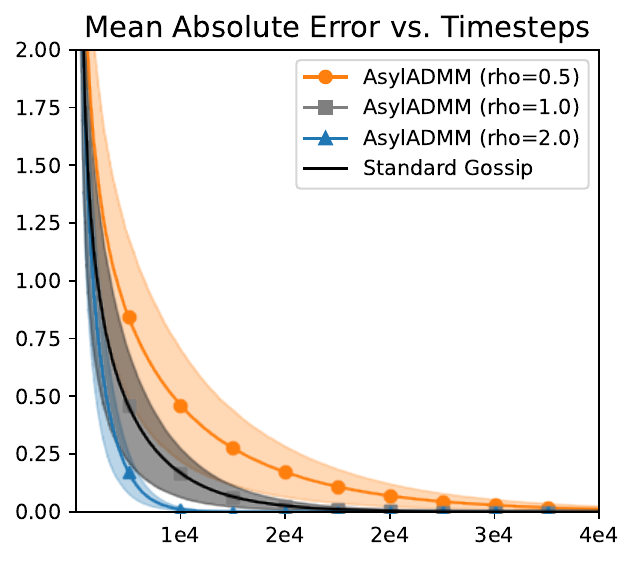}
        \caption{Mean Estimation}
        \label{subfig:mean}
    \end{subfigure}
    \caption{Convergence of AsylADMM on a geometric graph with contaminated Gaussian data. Plots (a) and (b) show that AsylADMM outperforms existing methods for median and quantile ($\alpha=0.3$) estimation, respectively. Plot (c) illustrates the effect of the step-size $\rho$ on convergence for mean estimation, where $\rho=1$ recovers pairwise averaging. All plots report the mean absolute error versus iteration count, averaged over 100 trials, with shaded regions indicating one standard deviation.}
    \label{fig:first_fig}
\end{figure*}

This paper is organized as follows. Section~\ref{sec:preliminaries} reviews related work on non-smooth decentralized optimization and quantile estimation methods. In Section~\ref{sec:theory}, we introduce our gossip algorithm for non-smooth decentralized learning, provide a simplified convergence analysis, and validate it through numerical experiments. Section~\ref{sec:experiments} empirically demonstrates the applicability of AsylADMM to other non-smooth problems, including geometric median estimation, lasso regression, and robust regression. We discuss limitations and potential extensions in Section~\ref{sec:conclusion}. 

\section{Background and Preliminaries}
\label{sec:preliminaries}

This section reviews the decentralized optimization and quantile estimation methods documented in the literature, against which our approaches compare favorably.
\subsection{Decentralized Optimization}
\label{sec:back-opt}
We begin by reviewing the main methods of decentralized optimization for non-smooth convex objectives. Here and throughout, we denote scalars by lowercase letters \(x \in \mathbb{R}\), vectors by boldface lowercase letters \(\mathbf{x} \in \mathbb{R}^n\), and matrices by boldface uppercase letters \(\mathbf{X} \in \mathbb{R}^{m \times n}\). We set $[n]:=\{ 1, \dots, n \}$ and write $\{\mathbf{e}_k:\; k\in [n]\}$ for $\mathbb{R}_n$'s canonical basis, \(\mathbb{I}\{{\mathcal{A}}\}\in\{0,1\}\) for the indicator function of any event \(\mathcal{A}\), \(\mathbf{M}^{\top}\) for the transpose of any matrix \(\mathbf{M}\), \(|F|\) the cardinality of any finite set $F$, \(\mathbf{I}_n\) for the identity matrix in \(\mathbb{R}^{n \times n}\), \(\mathbf{1}_n\) for the vector in \(\mathbb{R}^n\) whose coordinates are all equal to $1$, \(\|\cdot\|\) for the usual \(\ell_2\)-norm, and $\lfloor \cdot \rfloor$ for the floor function. The set of edges defining the network is denoted by $E\subset [n]^2$.

\textbf{Problem Formulation.} Consider a distributed optimization problem over a connected network of $n\geq 1$ agents, where each agent $i\in \{1,\; \ldots,\; n\}$ has a closed, proper, convex, and non-smooth local objective $f_i$. The goal is to solve the minimization problem below collaboratively:
\begin{equation}
\label{eq:1}
\argmin_{\bx \in  \mathbb{R}^d}  \sum_{i=1}^n f_i(\bx)   \enspace.
\end{equation}
Gossip methods address this problem by alternating local updates with neighbor averaging \cite{nedic2009distributed, duchi2011dual}. These approaches typically fall into two main families based on how they handle the non-smoothness of the objectives: primal-based subgradient methods and proximal-based splitting methods.

\textbf{Subgradient Descent.} Each node performs local (sub)gradient updates and averages its estimate through pairwise communication. Introduced in \citet{nedic2009distributed}, the method converges at a rate of $\mathcal{O}(1/\sqrt{t})$, where $t\geq 1$ denotes the number of iterations. However, it is sensitive to step-size selection, especially for non-smooth objectives, and often converges slowly in practice.

\textbf{Proximal Methods.} Proximal methods are well-suited for non-smooth objectives. A canonical example is the proximal point algorithm (PPA) with step size $\rho > 0$,
$$
\bx^{t+1} = \operatorname{prox}_{\rho f}(\bx^t) := \argmin_{\bx \in \mathbb{R}^d} \left\{ f(\bx) + \frac{1}{2\rho} \|\bx - \bx^t\|^2 \right\} \enspace,
$$
which can be interpreted as an implicit gradient descent.  Distributed variants such as P-EXTRA have been proposed \cite{shi2015proximal}, but they do not naturally extend to fully decentralized gossip-based settings due to the non-linearity of the proximal operator. In contrast, decentralized ADMM methods are widely used for minimizing sums of convex non-smooth functions in a decentralized setting, as they decouple local optimization from consensus constraints and hence support gossip-based asynchronous implementations \cite{shi2014linear, yang2022survey, boyd2011distributed}. An empirical comparison with P-EXTRA is provided in Appendix~\ref{app:proximal-related}.

\textbf{Synchronous ADMM.} We focus on edge-based (gossip) methods, as studied in \cite{iutzeler2013asynchronous}. For clarity of notation, we consider the scalar case in the remainder of the paper, as motivated by quantile estimation; most of the results extend to the general setting. Problem~\eqref{eq:1} can be reformulated as $\operatorname{argmin}_{\bx \in  \mathbb{R}^n}  \sum_{i=1}^n f_i(x_i)$ subject to $x_i=x_j \text { for all } (i, j) \in E .$ These constraints ensure that the nodes' estimates reach consensus (assuming the graph is connected), while keeping the problem decentralized. This can be written in ADMM form as in \cite{iutzeler2013asynchronous}:
\begin{mini}|l|
    {\mathbf{x} \in \mathbb{R}^n, \mathbf{z} \in \mathbb{R}^{2m}}
    {f(\mathbf{x}) + g(\mathbf{z})}
    {\label{eq:2}}
    {}
    \addConstraint{\mathbf{M}\mathbf{x}}{= \mathbf{z}}
\end{mini}
 where $m=\vert E\vert$ is the number of edges and $f(\bx) \triangleq \sum_{i=1}^n f_i(x_i)$. The matrix $\bM \in \mathbb{R}^{2m \times n}$ is constructed as $\bM = [\bM_e]_{e \in E}$, where for each edge $e = (i,j)$, the submatrix $\bM_e \in \mathbb{R}^{2 \times n}$ extracts the corresponding node variables: $\bM_e \bx = [x_i, x_j]^{\top} = \bz_e$. Denoting by $\iota_C:\mathbb{R}^{2m}\to \{0,\; +\infty\}$ the convex indicator function of $C=\operatorname{span}(\mathbf{1}_2)$, we define $g(\bz) \triangleq \sum_{e \in E} \iota_C(\mathbf z_e)$.  The indicator functions $\iota_C(\mathbf  z_e)$ enforce the hard constraints $x_i = x_j$ for each edge $e = (i,j)$, ensuring consensus on a connected network. Note that, by construction, the matrix $\bM$ is full column-rank, so Problem~\eqref{eq:2} is well-posed. Finally, we recall that the proximal operator of $\iota_C$ coincides with the orthogonal projection $\Pi_C$ onto $C$. A synchronous decentralized ADMM for the above problem has been proposed \cite{shi2014linear, iutzeler2013asynchronous} and \citet{iutzeler2015explicit} prove that $x^{\star}$ is a minimizer of Problem \eqref{eq:1} if and only if $\left(x^{\star}, \ldots, x^{\star}\right)$ is a minimizer of Problem~\eqref{eq:2}.

\textbf{Asynchronous ADMM.} Asynchronous methods are often more practical than synchronous ones, as they avoid global coordination, tolerate communication delays, and scale well with large networks. An asynchronous decentralized ADMM was proposed in \citet{iutzeler2013asynchronous}. Its main limitation is that each node $k$ must store $2d_k$ auxiliary variables associated with its incident edges: this memory requirement becomes significant when the graph is dense or when nodes have limited memory capacity. Another asynchronous ADMM method has been proposed in \cite{wei2013o1kconvergenceasynchronousdistributed}. However, when applied to our problem, its edge-based reformulation does not converge, see the illustrative experimental results in Appendix~\ref{app:wei-divergence}. In addition, an asynchronous decentralized primal-dual algorithm, referred to as DAPD, has been proposed in \cite{bianchi2015coordinate}. Like the previous methods, DAPD incurs significant memory overhead at each node, as it requires the additional variables $\bar{\bx} \in \mathbb{R}^{2m}$ and $\boldsymbol{\lambda} \in \mathbb{R}^{2m}$. These algorithms are reviewed in more detail in Appendix~\ref{app:proximal-related}.

\textbf{Convergence Analysis of Decentralized ADMM.} In the synchronous setting, the convergence analysis of ADMM reduces to standard results established for centralized ADMM \cite{boyd2011distributed}. In the asynchronous setting, several methods have been applied to derive convergence guarantees: by using random Gauss-Seidel iterations applied to the Douglas-Rachford operator \cite{iutzeler2013asynchronous}, or by framing the updates as stochastic coordinate descent on an averaged operator \cite{bianchi2015coordinate}. However, these techniques do not directly apply to the approach we promote here. Convergence is proved in \citet{wei2013o1kconvergenceasynchronousdistributed} using martingale limit theory. However, as pointed out therein, the edge-based formulation does not satisfy one of the principal assumptions required for the convergence proof. This seems consistent with the empirical results presented in Appendix~\ref{app:proximal-related}, which clearly show the divergence of the method. 

\subsection{Quantile Estimation}

\label{sec:quantile}
Here, we review the main methods for quantile and median estimation \cite{wasserman2013all}, which can be adapted to a decentralized framework and serve as our non-smooth convex optimization examples.

\textbf{$M$-estimation.} The $\alpha$-quantiles of a set of observed data points $a_1, a_2, \ldots, a_n$ can be recovered as minimizers of the following optimization problem:
\begin{equation}
\label{eq:quantile}
 \argmin _{x \in \mathbb{R}} f_n(x) \triangleq \sum_{i=1}^n L_{\alpha}(a_i - x) \enspace,
\end{equation}
where $L_{\alpha}(z) \triangleq (\alpha - \mathbb{I}\{{z \leq 0\}}) z$ is the \textit{pinball loss}. We note that the problem is equivalent to that presented in \citet{iutzeler2017distributed} after dividing the pinball loss by $1-\alpha$. Hence, we adopt the notation and results from that work, defining $f_a^\beta: x \mapsto L_{\alpha}(a - x)/(1-\alpha)$ with $\beta = \alpha/(1-\alpha)$. For $a \in \mathbb{R}$ and $\beta>0$, $f_a^\beta$ is convex and continuous. Although it is non-differentiable, its proximal operator has a known closed-form solution given by \citet{iutzeler2017distributed}:
$$
\begin{aligned}
\operatorname{prox}_{\gamma f_a^\beta}(z) & \triangleq \underset{w \in \mathbb{R}}{\operatorname{argmin}}\left\{f_a^\beta(w)+\frac{1}{2 \gamma}\|w-z\|^2\right\}  = \begin{cases}z+\gamma \beta, & \text { if } z<a-\gamma \beta \\
z-\gamma, & \text { if } z>a+\gamma \\
a, & \text { if } z \in[a-\gamma \beta , a+\gamma] \, \end{cases}
\enspace,
\end{aligned}
$$
which naturally leads us to employ proximal methods.

\textbf{Rank-based methods.} Quantiles can be viewed as specific $L$-statistics, \textit{i.e.} linear combinations of order statistics: the $\alpha$-quantile corresponds to the $\alpha$-th order statistic (or an interpolation thereof). As order statistics are determined simply by ranking the observations, which can be done quickly in the centralized framework, estimating the $\alpha$-quantile reduces to identifying the observation with rank $\lfloor \alpha n \rfloor$. Asynchronous gossip methods have been developed for distributed ranking, enabling each node to estimate the rank of its local observation within the network, and have been extended to estimate $L$-statistics \cite{van2025asynchronous, van2025robust}. However, as they require estimating the ranks of all observations rather than directly computing the target quantile, such methods are slow when applied in a decentralized setting, in comparison to approaches based on decentralized $M$-estimation.

\section{AsylADMM for Non-smooth Convex Decentralized Optimization}
\label{sec:theory}

This section presents the problem formulation, describes the proposed gossip algorithm with its empirical evaluation on quantile estimation, and analyzes the convergence of both the synchronous variant and a squared-loss instance of AsylADMM.

\subsection{Problem Formulation and Framework}

We consider a decentralized setting with $n \geq 2$ observations $a_1, \ldots, a_n$, each held by a distinct node in a communication network. The network topology is modeled by an undirected, connected, non-bipartite graph $G = ([n], E)$, where node $k \in [n]$ holds observation $a_k$. Communication follows the randomized gossip protocol introduced by \cite{boyd2006randomized}: at each time step, an edge $e \in E$ is selected with probability $p_e > 0$, and the two incident nodes exchange information. We use asynchronous updates with standard edge sampling: $p_e = (1/n)\cdot (1/d_i + 1/d_j)$ for $e = (i,j) \in E$, where $d_i, d_j$ are the node degrees. Our objective is to derive an asynchronous gossip algorithm for non-smooth convex decentralized optimization. To this end, we adopt the asynchronous ADMM framework of \citet{iutzeler2013asynchronous} with the formulation given in Problem~\eqref{eq:2}. We evaluate the proposed algorithm on median and quantile estimation in this decentralized setting, using the pinball loss minimization approach described in \cref{sec:quantile}. However, the algorithm readily generalizes to other problems (\textit{e.g.}, geometric median, lasso regression), as demonstrated in Section~\ref{sec:experiments}.

\subsection{The AsylADMM Gossip Algorithm and its Synchronous Variant }

A major limitation of both Async-ADMM and DAPD is their considerable memory overhead, as discussed in \cref{sec:back-opt}. We propose a gossip algorithm that addresses this issue by: (i) using a single aggregate for the dual variables, and (ii) eliminating storage of neighbor values. These modifications simplify primal and dual updates while reducing auxiliary variables per node. We derive our algorithm by returning to Problem~\eqref{eq:2}, formulated via the \textit{simplified} augmented Lagrangian:
\[
  \mathcal{L}_{\rho}(\bx, \bz, \by) = f(\bx) + \by^\top(\bz-\bM \bx) +\frac{\rho}{2}\|\bz-\bM \bx\|^2 \enspace, 
\]
where $g(\bz) = 0$, since the consensus step, as detailed hereafter, ensures that $\mathbf{z}_e = (z_e, z_e) \in C$ for all $e \in E$. We note, however, that the theoretical derivation still relies on the function $g$. Denote by $d_k$ the degree of node $k$, by $N_k$ the set of edges incident to node $k$, and by $\mathcal{N}(k)$ the set of its neighbors. The Lagrangian admits a decomposition across nodes. Considering the local Lagrangian for node $k$
\[
  \mathcal{L}_{\rho}^{(k)}(x_k, \bz_k, \by_k) = f_k(x_k) + \sum_{e \in N_k}y_{e, k}(z_e-x_k) + \frac{\rho}{2}(z_e -x_k)^2 \enspace, 
\]
where $\bz_k =(z_e)_{e \in N_k}$ and $\by_k = (y_{e, k})_{e \in N_k}$, we have by construction: $ \mathcal{L}_{\rho}(\bx, \bz, \by) = \sum_{k=1}^n \mathcal{L}_{\rho}^{(k)}(x_k, \bz_k, \by_k) $. To develop our asynchronous algorithm AsylADMM, we begin with its synchronous counterpart. In the synchronous setting, all nodes perform their updates simultaneously and exchange information with their neighbors at each iteration. The algorithm initializes the primal variables as $x_k = a_k$ for all $k$, where $a_k$ denotes the observation at node $k$, and the dual variables as $y_{e,k} = 0$ for all $e \in N_k$. Given the current iterates $(\bx, \bz, \by)$, the algorithm proceeds as follows: 

\begin{itemize} 
\item \textbf{Consensus step:} Compute $\bz^+ = [ z_e^+ \mathbf 1_2]_{e\in E}$ where for $e = (i, j)$, $z_e^+ = (x_i + x_j)/2$. 
\item \textbf{Dual update:} For all nodes $k \in [n]$ and $e \in N_k$, set $y_{e, k}^+ = y_{e, k} + \rho (z_e^+ -x_k)$. 
\item \textbf{Primal update:} For all nodes $k \in [n]$, set $x_k^+ \in \operatorname{argmin}_{x_k} \mathcal{L}_{\rho}^{(k)}(x_k, z_k^+, y_k^+)$ where $z_k^+ =(z_e^+)_{e \in N_k}$ and $y_k^+ = (y^+_{e, k})_{e \in N_k}$.
\end{itemize}

This formulation employs a reordered variant of ADMM: rather than the standard $x$-$z$-$y$ update sequence \cite{boyd2011distributed}, we adopt a $z$-$y$-$x$ ordering similar to \citet{bianchi2015coordinate}, which offers two advantages. First, it eliminates additional variables storage, as the $x$-update depends only on quantities from the current iteration. Second, it yields a particularly simple update structure consisting of a consensus step between connected nodes, followed by local updates at each node. To reduce the number of stored auxiliary variables, we introduce the reparameterization $\hat z_k= \sum_{e \in N_k} z_e/d_k$ and $\hat \mu_k = \sum_{e \in N_k} y_{e, k}/d_k$. This choice ensures the consensus step simplifies to $\hat{z}_k^+ = (\hat{x}_k + x_k)/2$, where $\hat{x}_k$ denotes the average of the neighboring iterates. Moreover, the dual update only depends on $\hat{z}_k$ and $x_k$, and the primal update reduces to a simple proximal step, as formalized in the following lemma.

\begin{lemma} \label{lem:primal_equivalence} 
The following statements are equivalent: \\
\hspace*{1em} (a) $x_k \in \argmin_{x \in \mathbb{R}} \mathcal{L}_{\rho}^{(k)}(x, \bz_k, \by_k)$ ; \\
\hspace*{1em} (b) $x_k = \operatorname{prox}_{f_k/(\rho d_k)}\left (\hat z_k + \hat \mu_k/\rho\right)  $ , where $\hat{z}_k = \mathbf{1}_{d_k}^\top \bz_k / d_k$ and $\hat{\mu}_k = \mathbf{1}_{d_k}^\top \by_k / d_k$ . 
\end{lemma}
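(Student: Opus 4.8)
The plan is to directly expand the local Lagrangian $\mathcal{L}_{\rho}^{(k)}(x, \bz_k, \by_k) = f_k(x) + \sum_{e \in N_k}\left[ y_{e,k}(z_e - x) + \frac{\rho}{2}(z_e - x)^2\right]$ and collect the terms that depend on $x$. The key observation is that this is a sum of $d_k$ quadratics in $x$ plus the convex function $f_k$, so completing the square should fold everything into a single proximal step. First I would drop the $y_{e,k} z_e$ and $\frac{\rho}{2}z_e^2$ pieces, which are constants with respect to $x$. What remains is $f_k(x) - \left(\sum_{e \in N_k} y_{e,k}\right) x + \frac{\rho}{2}\sum_{e\in N_k}(x^2 - 2 z_e x) = f_k(x) - d_k\hat\mu_k x + \frac{\rho d_k}{2}x^2 - \rho\left(\sum_{e\in N_k} z_e\right) x = f_k(x) + \frac{\rho d_k}{2}x^2 - \left(d_k \hat\mu_k + \rho d_k \hat z_k\right)x$, using the definitions $\hat z_k = \mathbf{1}_{d_k}^\top \bz_k/d_k$ and $\hat\mu_k = \mathbf{1}_{d_k}^\top \by_k/d_k$.

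Next I would complete the square in the quadratic part: $\frac{\rho d_k}{2}x^2 - (d_k\hat\mu_k + \rho d_k \hat z_k)x = \frac{\rho d_k}{2}\left(x - (\hat z_k + \hat\mu_k/\rho)\right)^2 + \text{const}$, where the constant again does not involve $x$. Therefore minimizing $\mathcal{L}_\rho^{(k)}(\cdot, \bz_k, \by_k)$ is equivalent to minimizing $f_k(x) + \frac{\rho d_k}{2}\left(x - (\hat z_k + \hat\mu_k/\rho)\right)^2$, and dividing the objective by the positive constant $\rho d_k$ (which does not change the argmin) gives $f_k(x)/(\rho d_k) + \frac{1}{2}\left(x - (\hat z_k + \hat\mu_k/\rho)\right)^2$. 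By the definition of the proximal operator recalled in \cref{sec:quantile}, the unique minimizer of this last expression is exactly $\operatorname{prox}_{f_k/(\rho d_k)}(\hat z_k + \hat\mu_k/\rho)$, which establishes the equivalence of (a) and (b). Uniqueness (so that the argmin in (a), a priori a set, coincides with the single point in (b)) follows from strong convexity of the quadratic term, since $\rho d_k > 0$ and $f_k$ is convex.

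I do not anticipate a genuine obstacle here — the statement is essentially a bookkeeping identity. The only points requiring mild care are: (i) tracking which terms are constant in $x$ at each stage so that the argmin is genuinely preserved; (ii) noting explicitly that $d_k \geq 1$ and $\rho > 0$ so the rescaling by $\rho d_k$ is legitimate and the proximal parameter $1/(\rho d_k)$ is well-defined and positive; and (iii) observing that the reparameterized quantities $\hat z_k, \hat\mu_k$ capture all the dependence of the cross terms on $x$ through their sums alone, which is what makes the reduction from $d_k$ auxiliary variables to the two aggregates possible. One could optionally remark that the same computation shows the per-node objective depends on $(\bz_k, \by_k)$ only via $(\hat z_k, \hat\mu_k)$, which is the structural fact the reparameterization in the algorithm exploits.
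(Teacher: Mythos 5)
Your proposal is correct and follows essentially the same path as the paper's Appendix~B.2 proof: expand the local Lagrangian, discard the terms constant in $x$, collect the remaining linear terms via the aggregates $\hat z_k, \hat\mu_k$, complete the square, and identify the result with $\operatorname{prox}_{f_k/(\rho d_k)}$. The only additions you make beyond the paper — the explicit remark on uniqueness via strong convexity of the quadratic term (justifying that the argmin in (a) is a singleton so the equivalence is genuinely two-sided) and the observation that the local objective depends on $(\bz_k,\by_k)$ only through $(\hat z_k,\hat\mu_k)$ — are minor but welcome clarifications, not a different route.
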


\begin{proof} Expanding the terms in $\mathcal{L}_{\rho}^{(k)}(x_k, \bz_k, \by_k)$ and dropping those independent of $x_k$, we apply the reparameterization and complete the square to obtain the proximal form. The detailed derivation is provided in Appendix~\ref{app:sync-proof}. \end{proof}

This reformulation replaces edge-based variables with node-based quantities, reducing storage requirements. Using the reparameterization and Lemma~\ref{lem:primal_equivalence}, the algorithm can be reformulated as outlined in \cref{alg:sync-admm}. We now derive our asynchronous algorithm using the following heuristics. As in the synchronous case, we rely on Lemma \ref{lem:primal_equivalence} by using an aggregate of the dual variables in the proximal update. However, we keep $z_e$ as is, without aggregation. This choice is motivated by the fact that $\hat \mu_k$ admits a simple update in terms of the constraint associated with edge $e$, whereas $\hat z_k$ does not.  Moreover, we observed that using $z_e$ instead of $\hat{z}_k$ in the update does not significantly affect convergence in practice. The resulting algorithm is called \textbf{AsylADMM} and is outlined in Algorithm~\ref{alg:asyl-admm}. A notable advantage is the simplicity of the update rules, which require only one communication step and reduce storage. Empirically, we observed that using only $y_{e,k}$ instead of the aggregate $\hat \mu_k$ leads to divergence. This is because $\hat \mu_k$ encodes information about consensus constraints with all neighboring nodes, rather than just the currently active edge.

\begin{figure}[ht]
\vspace{-1em}
\begin{minipage}[t]{0.48\textwidth}
\begin{algorithm}[H]
    \caption{Synchronous Variant}
    \label{alg:sync-admm}
    \begin{algorithmic}[1]
        \STATE \textbf{Input:} Initial vectors $a_1, \ldots, a_n$; step size $\rho > 0$. 
        \STATE \textbf{Initialization:} $\forall k$, $x_k \gets a_k$, $\hat \mu_k \gets 0$. 
        \FOR{$t=0, 1, \ldots$ } 
        \FOR{all $k = 1, \ldots, n$ \textbf{in parallel}}
            \STATE Average: $\hat x_k \gets \sum_{l \in \mathcal{N}(k)} x_l/d_k$.
            \STATE Set $\hat z_k \gets (\hat{x}_k + x_k)/2$. 
            \STATE Set $\hat \mu_k \gets \hat \mu_k + \rho (\hat z_k - x_k)$.
            \STATE Set $x_k \gets \operatorname{prox}_{f_k/(\rho d_k)}\left (\hat z_k + \hat \mu_k/\rho\right)  $.
            \ENDFOR
         \ENDFOR
    \end{algorithmic}
    \vspace{0.15em}
\end{algorithm}
\end{minipage}
\hfill
\begin{minipage}[t]{0.48\textwidth}
\begin{algorithm}[H]
    \caption{AsylADMM}
    \label{alg:asyl-admm}
    \begin{algorithmic}[1]
        \STATE \textbf{Input:} Initial vectors $a_1, \ldots, a_n$; step size $\rho > 0$. 
        \STATE \textbf{Initialization:} For all nodes $k=1, \ldots, n$:
        \STATE \quad $x_k \gets a_k$, $\hat\mu_k \gets 0$. 
        \FOR{$t=0, 1, \ldots$ }
            \STATE Draw $e = (i,j) \in E$ w.p. $p_e$.
            \STATE Compute average: $z_e \gets \frac{1}{2}(x_i + x_j)$ .
            \STATE Agents $k \in \{i,j\}$ update: \\
            $\hat \mu_k \gets \hat \mu_k + \rho(z_e-x_k )/d_k$ \, ; \\
            $x_k \gets \operatorname{prox}_{\frac{f_k}{\rho d_k}}\left (z_e + \hat \mu_k/\rho\right)$.
        \ENDFOR
    \end{algorithmic}
\end{algorithm}
\end{minipage}
\vspace{-1em}
\end{figure}

\subsection{Empirical Comparison of AsylADMM with Existing Methods}

\label{sec:exp-admm}

We present the experiment setup used for \cref{fig:first_fig}, where we compare AsylADMM with several existing distributed optimization methods: DAPD \cite{bianchi2015coordinate}, AsyncADMM \cite{iutzeler2013asynchronous}, and distributed subgradient descent \cite{nedic2009distributed}. We evaluate performance on quantile estimation, as well as mean estimation for our case-specific theoretical analysis. Our implementation is provided in the supplementary material.

\textbf{Setup.} We evaluate distributed optimization methods on quantile estimation problems using contaminated data. Unless otherwise specified, experiments use a network of $n=101$ nodes arranged in a geometric graph with $507$ edges. Each node observes data from a contaminated Gaussian distribution: 80\% from $\mathcal{N}(10, 3^2)$ and 20\% from $\mathcal{N}(30, 5^2)$. Sample size is odd to ensure a unique median. All results report mean absolute error (MAE) over $100$ independent trials (see Appendix~\ref{app:mae-discussion} for the rationale). For each trial, the step size $\rho$ is drawn uniformly from $[0.1, 1.0]$ and data assignments are randomly shuffled across nodes. This design evaluates robustness to both hyperparameter selection and data distribution on the graph.


\textbf{\cref{subfig:median}.} We compare four methods: AsylADMM, DAPD, AsyncADMM, and subgradient descent. AsylADMM converges significantly faster, followed by DAPD and AsyncADMM. DAPD, AsyncADMM, and subgradient descent show noticeably higher variance across runs, while our gossip algorithm proves more stable and easier to tune. The performance gap between the ADMM variants comes from how each estimates the neighborhood average $\hat{z}_k$. AsyncADMM uses outdated values: $\hat{z}_k = \sum_{e \in N_k} z_e^{\text{old}}/d_k$ with $z_e^{\text{old}} = (x_k^{\text{old}} + x_l^{\text{old}})/2$. DAPD combines current local and outdated neighbor information: $\hat{z}_k = (x_k^{\text{new}} + \bar{x}_k^{\text{old}})/2$. AsylADMM uses current information from both endpoints: $\hat{z}_k = (x_k^{\text{new}} + x_l^{\text{new}})/2$. This heuristic performs remarkably well in practice, thanks to its use of up-to-date information and the underlying gossip principle, in which repeated pairwise averaging naturally converges toward an accurate global average.

\textbf{\cref{subfig:quantile}.} We conduct a similar experiment targeting the $\alpha=0.3$ quantile under the same contaminated Gaussian distribution, network topology and settings. All algorithms converge more slowly when estimating the quantile, suggesting this problem is sightly harder, as a result of the contaminated distribution. Here, DAPD and AsyncADMM show even higher variability, while AsylADMM remains consistently stable and competitive.

\textbf{\cref{subfig:mean}.} We evaluate AsylADMM on the mean estimation problem to benchmark its performance against a well-established baseline; details on the proximal operator are provided in Appendix \ref{app:conv-asyladmm} (Eq. \ref{eq:prox}). We empirically confirm that $\rho=1$ recovers standard pairwise averaging, while $\rho < 1$ leads to slower convergence. Notably, choosing $\rho > 1$ can substantially accelerate convergence in geometric graphs, offering new insights into the design of faster gossip-based protocols.

\textbf{Additional experiments.} Appendix~\ref{app:more-exp-quantile} presents empirical results demonstrating the scalability of our methods to larger networks (\textit{e.g.,} $ 10{,}000$ nodes) and their robustness to data contamination.

\subsection{A Novel Convergence Analysis for Distributed ADMM}

In this section, we present a novel convergence analysis for the synchronous setting, which serves as a foundation for the simplified theoretical analysis of the asynchronous case. Throughout, we assume that the unaugmented Lagrangian admits a saddle point. Under this assumption,\ \citet{boyd2011distributed} established two convergence results for standard ADMM: residual convergence, whereby the iterates approach feasibility ($\mathbf{z}^t - \mathbf M \mathbf x^t \to 0$ as $t \to \infty$), and objective convergence, whereby the objective value approaches the optimal value ($f(\mathbf x^t) + g( \mathbf z^t) \to p^{\star}$ as $t \to \infty$). We establish analogous results for our reordered variant using  a modified Lyapunov function. To this end, we first establish two inequalities that bound $f(\mathbf x) - f(\mathbf x^{\star})$ and $f(\mathbf x^{\star}) - f(\mathbf x)$, respectively. The complete derivation and detailed steps are provided in Appendix~\ref{app:sync-proof}. Let $(\mathbf{x}^{\star}, \mathbf{z}^{\star}, \mathbf{y}^{\star})$ be a saddle point of $\mathcal{L}_0$. Define the primal residual $\mathbf{r}^{t+1} := \mathbf{z}^{t+1} - \mathbf{M} \mathbf{x}^{t+1}$ and the errors $\tilde{\mathbf{x}}^{t+1} = \mathbf{x}^{t+1} - \mathbf{x}^{\star}$ and $\tilde{\mathbf{y}}^{t+1} = \mathbf{y}^{t+1} - \mathbf{y}^{\star}$. The first inequality follows from the optimality condition of the proximal operator:
\begin{equation}
\label{eq:residual-condition-i}
f\left(\mathbf{x}^{t+1}\right) - f\left(\mathbf{x}^{\star}\right) \leq (\rho \mathbf{M} \mathbf{\tilde x}^{t+1} - \mathbf{y}^{t+1})^\top \mathbf{r}^{t+1} \enspace,\tag{I}    
\end{equation}
while the second follows from the existence of a saddle point:
\begin{equation}
\label{eq:residual-condition-ii}
   f(\mathbf{x}^\star) - f(\mathbf{x}^{t+1}) \leq  {\mathbf{y}^\star}^\top \mathbf{r}^{t+1} \enspace. \tag{II}
\end{equation}
The following lemma establishes that a suitably chosen function decreases at each iteration.

\begin{lemma}
\label{lem:lyapunov}
Define $V^t = \|\mathbf{\tilde  y}^{t} - \rho \mathbf{M} \mathbf{\tilde  x}^{t}\|^2$ . Then, for all $t\geq 0$, $ {V^{t+1} - V^t \leq -\rho^2\|\mathbf{r}^{t+1}\|^2}$ .
\end{lemma}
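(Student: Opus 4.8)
The plan is to expand $V^{t+1}=\|\tilde\by^{t+1}-\rho\bM\tilde\bx^{t+1}\|^2$ and relate it to $V^t$ by exploiting the dual update. Since $\by^{t+1}=\by^t+\rho\,\mathbf r^{t+1}$ (the dual update in aggregated form reads $\by^{t+1}=\by^t+\rho(\bz^{t+1}-\bM\bx^{t+1})$), and since at a saddle point feasibility gives $\bz^\star=\bM\bx^\star$, we have $\tilde\by^{t+1}=\tilde\by^t+\rho\,\mathbf r^{t+1}$. Writing $\mathbf w^t:=\tilde\by^t-\rho\bM\tilde\bx^t$, I would express $\mathbf w^{t+1}=\mathbf w^t+\rho\,\mathbf r^{t+1}-\rho\bM(\tilde\bx^{t+1}-\tilde\bx^t)$ and then expand the square:
\begin{equation*}
V^{t+1}=V^t+\|\rho\,\mathbf r^{t+1}-\rho\bM(\tilde\bx^{t+1}-\tilde\bx^t)\|^2+2\,(\mathbf w^t)^\top\big(\rho\,\mathbf r^{t+1}-\rho\bM(\tilde\bx^{t+1}-\tilde\bx^t)\big).
\end{equation*}
The target $-\rho^2\|\mathbf r^{t+1}\|^2$ suggests that the cleanest route is actually to avoid the $\bx^{t+1}-\bx^t$ difference and instead directly bound $V^{t+1}-V^t$ using the two inequalities (I) and (II). So the cleaner plan: add (I) and (II) to get $0\le (\rho\bM\tilde\bx^{t+1}-\by^{t+1}+\by^\star)^\top\mathbf r^{t+1}=-(\mathbf w^{t+1})^\top\mathbf r^{t+1}$, i.e. $(\mathbf w^{t+1})^\top\mathbf r^{t+1}\le 0$, where I use $\mathbf w^{t+1}=\tilde\by^{t+1}-\rho\bM\tilde\bx^{t+1}$ and $\tilde\by^{t+1}=\by^{t+1}-\by^\star$.

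Next I would combine this with the dual-update identity $\mathbf w^{t+1}-\mathbf w^t=\rho\,\mathbf r^{t+1}-\rho\bM(\tilde\bx^{t+1}-\tilde\bx^t)$. The standard ADMM Lyapunov trick is to write $V^{t+1}-V^t=\|\mathbf w^{t+1}\|^2-\|\mathbf w^t\|^2=\|\mathbf w^{t+1}-\mathbf w^t\|^2+2(\mathbf w^{t+1})^\top(\mathbf w^t-\mathbf w^{t+1})$, but I think the version that lands exactly on $-\rho^2\|\mathbf r^{t+1}\|^2$ uses $V^{t+1}-V^t = -\|\mathbf w^{t+1}-\mathbf w^t\|^2+2(\mathbf w^{t+1})^\top(\mathbf w^{t+1}-\mathbf w^t)$. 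Substituting $\mathbf w^{t+1}-\mathbf w^t=\rho\,\mathbf r^{t+1}-\rho\bM(\tilde\bx^{t+1}-\tilde\bx^t)$ and $(\mathbf w^{t+1})^\top\mathbf r^{t+1}\le 0$, the cross term $2\rho(\mathbf w^{t+1})^\top\mathbf r^{t+1}\le 0$ is discarded, and one is left needing $-\|\mathbf w^{t+1}-\mathbf w^t\|^2-2\rho(\mathbf w^{t+1})^\top\bM(\tilde\bx^{t+1}-\tilde\bx^t)\le-\rho^2\|\mathbf r^{t+1}\|^2$. To close this I would use the primal optimality of $\bx^{t+1}$: the $x$-update is $\bx^{t+1}\in\operatorname{argmin}_\bx\mathcal L_\rho(\bx,\bz^{t+1},\by^{t+1})$, whose optimality condition together with the convexity of $f$ gives a monotonicity inequality controlling $(\mathbf w^{t+1})^\top\bM(\tilde\bx^{t+1}-\tilde\bx^t)$; in particular the reordered update makes $\bM^\top\mathbf w^{t+1}$ a subgradient-type quantity so that $(\bM^\top\mathbf w^{t+1})^\top(\bx^{t+1}-\bx^t)$ has a definite sign, which is what converts the remaining terms into the desired bound after using $\bM^\top\bM$'s positive definiteness.

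The main obstacle I anticipate is precisely getting the bookkeeping of the reordered $\bz$-$\by$-$\bx$ updates right: in standard ADMM the Lyapunov argument balances the $\bz$-optimality and $\bx$-optimality conditions, but here the order is swapped, so I must re-derive which iterate each KKT-type inequality is evaluated at, and make sure the $g$-term (the consensus indicator, present in the derivation even though $g(\bz^t)=0$ along the trajectory) is handled via its own optimality condition. A secondary subtlety is that the algorithm uses the aggregated dual $\hat\mu_k$ rather than the full $\by$; I would work at the level of the original edge variables $\by$ (as the excerpt says the theoretical derivation ``still relies on the function $g$''), and only at the end note the equivalence via Lemma~\ref{lem:primal_equivalence}. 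Once inequalities (I) and (II) and the primal optimality inequality are in hand, the remaining computation is routine completion-of-squares, so I would present it compactly and defer full details to Appendix~B.
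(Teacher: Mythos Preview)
Your overall strategy---summing inequalities (I) and (II) to get $(\tilde\by^{t+1}-\rho\bM\tilde\bx^{t+1})^\top\mathbf r^{t+1}\le 0$, then plugging this into the polarization identity $V^{t+1}-V^t=2(\mathbf w^{t+1})^\top(\mathbf w^{t+1}-\mathbf w^t)-\|\mathbf w^{t+1}-\mathbf w^t\|^2$---is exactly the paper's. The gap is a bookkeeping mistake in the dual update, and this mistake is what creates the ``obstacle'' you anticipate.

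In the reordered $\bz$-$\by$-$\bx$ scheme, the dual update is performed \emph{before} the primal update, so it uses the \emph{old} primal iterate:
\[
\by^{t+1}=\by^t+\rho\,(\bz^{t+1}-\bM\bx^{t}),
\]
not $\by^{t+1}=\by^t+\rho(\bz^{t+1}-\bM\bx^{t+1})$ as you wrote. With the correct update,
\[
\mathbf w^{t+1}-\mathbf w^t=(\by^{t+1}-\by^t)-\rho\bM(\bx^{t+1}-\bx^t)=\rho(\bz^{t+1}-\bM\bx^t)-\rho\bM(\bx^{t+1}-\bx^t)=\rho(\bz^{t+1}-\bM\bx^{t+1})=\rho\,\mathbf r^{t+1}.
\]
The extra term $-\rho\bM(\tilde\bx^{t+1}-\tilde\bx^t)$ that you carry along is an artifact of the wrong update; it vanishes identically once the update is written correctly. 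Substituting $\mathbf w^{t+1}-\mathbf w^t=\rho\,\mathbf r^{t+1}$ into the polarization identity gives
\[
V^{t+1}-V^t=2\rho(\mathbf w^{t+1})^\top\mathbf r^{t+1}-\rho^2\|\mathbf r^{t+1}\|^2\le -\rho^2\|\mathbf r^{t+1}\|^2,
\]
and the proof is complete. No primal monotonicity argument, no control of $(\bM^\top\mathbf w^{t+1})^\top(\bx^{t+1}-\bx^t)$, and no appeal to positive definiteness of $\bM^\top\bM$ is needed.
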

\begin{proof}
Define $\tilde{\mathbf{q}}^t := \tilde{\mathbf{y}}^t - \rho \mathbf{M} \tilde{\mathbf{x}}^t$. Applying the identity $\|\mathbf{a}\|^2 - \|\mathbf{b}\|^2 = 2\mathbf{a}^\top(\mathbf{a}-\mathbf{b}) - \|\mathbf{a} - \mathbf{b}\|^2$ with $\mathbf{a} = \tilde{\mathbf{q}}^{t+1}$, $\mathbf{b} = \tilde{\mathbf{q}}^t$, and $ \mathbf{a}-\mathbf{b} =\rho \mathbf{r}^{t+1}$, we obtain 
\[ V^{t+1} - V^t=2 \rho ( \mathbf{\tilde y}^{t+1} - \rho \mathbf{M}\mathbf{ \tilde  x}^{t+1})^\top \mathbf{r}^{t+1} -\| \rho \mathbf{r}^{t+1}\|^2 \enspace.
\]
Summing inequalities~\eqref{eq:residual-condition-i} and~\eqref{eq:residual-condition-ii} yields $\left(\tilde{\mathbf{y}}^{t+1} - \rho \mathbf{M} \tilde{\mathbf{x}}^{t+1}\right)^\top \mathbf{r}^{t+1} \leq 0$, and substituting into the expression above completes the proof.     
\end{proof}
We can now establish the convergence of \cref{alg:sync-admm}.

\begin{theorem}\label{thm:main}
Let $\{(\mathbf x^t, \mathbf z^t, \mathbf y^t)\}_{t>0}$ be the sequence generated by \cref{alg:sync-admm}. Then, for $t>0$:
\[
  \lim_{t \to \infty}\|\mathbf{r}^t\|  = 0, \quad \lim_{t \to \infty} f(\mathbf x^t) = f(\mathbf x^\star) \enspace.
\]
\end{theorem}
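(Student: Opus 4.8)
The plan is to bootstrap from Lemma~\ref{lem:lyapunov}. Since $V^t \geq 0$ and $V^{t+1} - V^t \leq -\rho^2\|\mathbf{r}^{t+1}\|^2 \leq 0$, the sequence $\{V^t\}$ is nonincreasing and bounded below, hence convergent; telescoping the inequality over $t$ gives $\rho^2 \sum_{t\geq 1}\|\mathbf{r}^t\|^2 \leq V^0 < \infty$, so the summand must vanish, i.e. $\|\mathbf{r}^t\| \to 0$. This is the residual (feasibility) convergence and it is essentially immediate.

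For the objective convergence, I would sandwich $f(\bx^{t+1}) - f(\bx^\star)$ between the two inequalities~\eqref{eq:residual-condition-i} and~\eqref{eq:residual-condition-ii}. From~\eqref{eq:residual-condition-ii} we have $f(\bx^\star) - f(\bx^{t+1}) \leq {\by^\star}^\top\mathbf{r}^{t+1}$, whose right-hand side is bounded in absolute value by $\|\by^\star\|\,\|\mathbf{r}^{t+1}\| \to 0$; from~\eqref{eq:residual-condition-i} we have $f(\bx^{t+1}) - f(\bx^\star) \leq (\rho\bM\tilde\bx^{t+1} - \by^{t+1})^\top\mathbf{r}^{t+1} = -\tilde{\mathbf{q}}^{t+1\,\top}\mathbf{r}^{t+1}$, bounded by $\|\tilde{\mathbf{q}}^{t+1}\|\,\|\mathbf{r}^{t+1}\|$. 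Since $\|\tilde{\mathbf{q}}^{t+1}\|^2 = V^{t+1} \leq V^0$ is uniformly bounded, this upper bound also tends to $0$. Combining the two one-sided bounds, $|f(\bx^{t+1}) - f(\bx^\star)| \to 0$, which is the claim.

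The one subtlety — and what I expect to be the main point needing care rather than a genuine obstacle — is the interplay with the auxiliary variable $\bz^t$ and the function $g$: the excerpt works with the simplified Lagrangian where $g(\bz)=0$, but the derivation of inequalities~\eqref{eq:residual-condition-i}–\eqref{eq:residual-condition-ii} (done in Appendix~B) relies on $g$ and on the saddle-point structure of $\mathcal{L}_0$. I would simply invoke those two inequalities as given and make sure the boundedness of $\|\tilde{\mathbf{q}}^{t+1}\|$ is correctly extracted from Lemma~\ref{lem:lyapunov} (it is, since $V^{t+1}=\|\tilde{\mathbf{q}}^{t+1}\|^2$ and $V$ is nonincreasing). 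No compactness or strong convexity is needed: the whole argument is driven by the monotone Lyapunov decrease plus the two pinball-loss inequalities, and Cauchy–Schwarz does the rest.
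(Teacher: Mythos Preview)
Your proof is correct and follows essentially the same route as the paper's: telescope Lemma~\ref{lem:lyapunov} to get $\sum_t\|\mathbf{r}^t\|^2<\infty$ and hence residual convergence, then use the boundedness of $V^t=\|\tilde{\mathbf q}^t\|^2$ together with inequalities~\eqref{eq:residual-condition-i}--\eqref{eq:residual-condition-ii} and Cauchy--Schwarz to squeeze $f(\bx^t)-f(\bx^\star)$ to zero. One cosmetic slip: $\rho\bM\tilde\bx^{t+1}-\by^{t+1}$ equals $-\tilde{\mathbf q}^{t+1}-\by^\star$, not $-\tilde{\mathbf q}^{t+1}$ (since $\tilde\by^{t+1}=\by^{t+1}-\by^\star$), but as $\|\by^\star\|$ is a fixed constant this does not affect the bound or the conclusion.
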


\begin{proof}
By Lemma \ref{lem:lyapunov}, the Lyapunov function $V^t$ decreases at each iteration by an amount proportional to the squared residual norm. Since $V^t$ is non-negative and bounded for all $t$, the sequence $\{\tilde{\mathbf y}^t - \rho \mathbf M \tilde{\mathbf x}^t\}$ remains bounded. Summing the inequality from Lemma \ref{lem:lyapunov} over all iterations yields $\rho^2 \sum_{t=0}^{\infty} \|\mathbf{r}^{t+1}\|^2 \leq V^0 < +\infty.$ Convergence of this series implies that $\|\mathbf{r}^t\| \to 0$ as $t \to \infty$, establishing the first claim. We combine the boundedness of $\{\tilde{\mathbf y}^t - \rho \mathbf M \tilde{\mathbf x}^t\}$ with the residual convergence just established, and applying inequality~\eqref{eq:residual-condition-i}, we obtain $|f(\mathbf x^t) - f(\mathbf x^\star)| \to 0$. 
\end{proof}

\subsection{Convergence of AsylADMM: A Case Study}

Establishing convergence for AsylADMM is non-trivial for two reasons. First, the proximal operator is generally nonlinear (\textit{e.g.}, piecewise affine for the pinball loss), which prevents the use of standard decentralized optimization analyses that rely on linear updates. Second, our practical heuristics prevent the use of classical coordinate descent convergence arguments as in \cite{bianchi2015coordinate, iutzeler2013asynchronous}. Nevertheless, we offer theoretical insights supporting our approach. In ADMM, the quantity $\hat{z}_k = \sum_{e \in N_k} z_e / d_k$ plays a critical role, as it directly influences the primal update of $x_k$. How this quantity is approximated is precisely what distinguishes the various asynchronous methods: ideally, it should be as close as possible to the value obtained with current neighbor information. Our key observation is that using the local $ z_e $ variables to approximate $ \hat{z}_k $ yields on average a more accurate estimate than relying on previously computed $ \hat{z}_k $ values, which may be significantly outdated (especially in graphs with high average degree), and is unlikely to degrade convergence. 

To improve our theoretical understanding, we analyze a simplified setting based on the squared loss, reducing the problem to mean estimation. We prove convergence for step size $\rho \leq 1$, the range used in all our experiments. This analysis provides strong evidence for the soundness of the general algorithm, though a complete convergence analysis still requires further investigation.

Define the squared loss at node $k$ as $f_k(x)=(x-a_k)^2/2$, whose proximal operator admits a closed-form solution (see Appendix~\ref{app:prox-solution}). It is straightforward to verify that the optimal primal and dual solutions are $x^* = (1/n)\sum_k a_k$ and $\mu^*_k = (x^* - a_k)/d_k$. As in the previous section, we show that a suitable Lyapunov function decreases at each iteration, as summarized in the following lemma.

\begin{lemma}[Lyapunov decrease]
\label{lem:lyapunov-dec}
Let $0< \rho \leq 1$. Define $V(x, \mu) := \sum_{k=1}^n s_k^2 + \alpha \sum_{k=1}^n (x_k - x^*)^2$, where $s_k := d_k(\mu_k - \mu^*_k) -\rho (x_k - x^*)$ and $\alpha := \rho(1-\rho)$. Then for selected edge $e =(i,j) \in E$:
$$\Delta V_e := {\;V(x^+, \mu^+) - V(x, \mu) \;=\; -2 \rho(1-\rho) w_e^2-\sum_{k \in \{i,j\}}C_k\left(s_k-(1-\rho) \tilde z_e\right)^2} \, , $$
where we define $\tilde z_e = (\tilde x_i+ \tilde x_j)/2$, $w_e = (\tilde x_i- \tilde x_j)/2$, with $\tilde x_k = x_k - x^*$ for $k \in \{i,j\}$, and constants $C_k = \rho(2\rho d_k + 1)/{(\rho d_k + 1)^2} > 0$ for $k \in \{i,j\}$.
\end{lemma}

\begin{proof}[Proof Sketch]
The proof proceeds by explicitly deriving the recursions for $s_k$ and $\tilde{x}_k$, followed by careful collection and factoring of terms. The full derivation is deferred to Appendix~\ref{app:conv-asyladmm}.
\end{proof}

We now state the main convergence result for the simplified setting. The proof combines Lemma \ref{lem:lyapunov-dec} with the classical supermartingale convergence theorem of \citet{robbins1971convergence}.

\begin{theorem}[Main Convergence]
Let $\rho \in (0.1)$. Then, we have for all nodes $k$, 
$$ x_k^t - x^* \rightarrow 0, \quad \mu_k^t - \mu_k^* \rightarrow 0 \quad  \text{a.s.}$$
\end{theorem}

\begin{proof}[Proof Sketch]
Averaging the result of Lemma~3.4 over the edge sampling process yields $\mathbb{E}\left[V_{t+1} \mid \mathcal{F}_t\right] \leq V_t - \sum_e p_e \alpha_e$, where $\alpha_e \geq 0$ are nonnegative terms arising from the right-hand side. Since $V_t := V(x, \mu) \geq 0$, the sequence $\{V_t\}$ is a nonnegative supermartingale, so by \citet{robbins1971convergence}, $V_t$ converges almost surely and $\sum_e p_e \alpha_e < \infty$. This in turn implies $\alpha_e \to 0$, from which both primal and dual convergence follow.
\end{proof}

Moreover, we can show a stronger result: with the squared loss and $\rho = 1$, AsylADMM reduces to classical pairwise averaging. Indeed,  the updates simplify as follows: first, the dual update becomes $\hat{\mu}_k^+ = \hat{\mu}_k + (x_l - x_k)/(2d_k)$; second, the primal update reduces to $x_k^+ = (d_k(z_e + \hat{\mu}_k^+) + a_k)/(d_k + 1)$, with $z_e = (x_k + x_l)/2$. Substituting $\hat{\mu}_k^+$ into the primal update and simplifying yields the compact form $x_k^+ = z_e + ({d_k \hat{\mu}_k + a_k - x_k})/{d_k + 1}.$ The main result is stated in the following theorem.

\begin{theorem}[Pairwise Averaging]
For $\rho = 1$, and any iteration $t>0$, the primal update at a selected node $k$ on edge $(k, l)$ reduces to $x_k^{+}=({x_k+x_l})/{2}$ by virtue of the invariant $\hat{\mu}_k=({x_k-a_k})/{d_k}$, showing that AsylADMM recovers the classical gossip algorithm for mean estimation. 
\end{theorem}

\begin{proof}[Proof Sketch]
The proof reduces to showing by induction that $\hat{\mu}_k = (x_k - a_k)/d_k$ holds for all $t \geq 0$ and all nodes $k$. The full argument is given in Appendix~\ref{app:conv-asyladmm}.
\end{proof}

\section{Application of AsylADMM to Additional Non-smooth Convex Problems}
\label{sec:experiments}

This section extends \cref{alg:asyl-admm} to three non-smooth convex problems: geometric median estimation, lasso regression, and robust regression. We first describe each problem and then present empirical results. Unless otherwise specified, we use the same experimental setup as in Section \ref{sec:exp-admm}. Moreover, Appendix \ref{app:add-alg-four} provides additional applications to rank- and depth-based trimming. Appendix~\ref{app:sync-vs} compares AsylADMM with its synchronous variant, confirming the practical efficiency of the asynchronous design. The code is available in the supplementary material.
 
\textbf{Figure \ref{subfig:sec-geo-median}. } A natural multivariate extension of the univariate median is the geometric median, also known as the spatial median. Like the coordinate-wise median, it achieves the highest possible breakdown point of 1/2 \cite{lopuhaa1991breakdown}, but also guarantees that the estimate lies within the convex hull of the datapoints \cite{Oja2013, minsker2015geometric, serfling2006depth}. Similar to the median, its proximal operator has a closed-form solution \cite{parikh2014proximal}, which can be directly used in \cref{alg:asyl-admm}; see Appendix \ref{app:add-alg-four} for further details. We compare existing optimization methods on geometric median estimation using a 2D contaminated Gaussian with contamination level $\varepsilon=0.3$; see Appendix \ref{app:add-exp-4} for details. We observe that AsylADMM outperforms both DAPD and AsyncADMM, consistent with the results for univariate estimation.

\textbf{\cref{subfig:sec-lasso-reg}. } We further extend AsylADMM to lasso regression, where each local objective combines a smooth quadratic loss $f_n(x) = (1/2)\|\mathbf{A}_n x - \mathbf{b}_n\|_2^2$ with an $\ell_1$-regularizer $g_n(\mathbf{x}) = \mu \|\mathbf{x}\|_1$ with $\mu > 0$. Our approach builds on the DAPD framework of \citet{bianchi2015coordinate}, augmented with the heuristics described above. We set $\mu = 0.5$ (regularization parameter), $\rho \in [0.1, 1]$, and select $\tau$ as prescribed by \citet{bianchi2015coordinate}. The generated regression problem has dimension $m = 10$; full algorithm and experiment details are provided in Appendix~\ref{app:lasso}. We observe that our gossip algorithm outperforms DAPD while remaining memory efficient, demonstrating that the approach generalizes well beyond simple median and quantile estimation.

\textbf{\cref{subfig:rob-reg}. } Least trimmed squares (LTS) is a classical robust regression method that guards against the influence of outlier observations \cite{rousseeuw2003robust, giloni2002least}. Unlike ordinary least squares (OLS), which minimizes the sum of all squared residuals and is therefore highly sensitive to corrupted data points, LTS seeks the subset of observations of a given size that yields the smallest sum of squared residuals, effectively discarding the most extreme data points before fitting. In practice, this is often implemented by iteratively fitting OLS, computing residuals, removing the samples with the largest residuals, and refitting on the remaining data. We apply this principle in a decentralized setting where each node holds a single local sample and a fraction of the nodes hold adversarial data. To exclude the influence of these outlier nodes, we maintain a running estimate of the $(1\!-\!\alpha)$-quantile of the residuals across the network using AsylADMM; any node whose residual exceeds this threshold has its gradient discarded from the optimization procedure. As shown in \cref{subfig:rob-reg} the resulting decentralized trimmed least squares procedure is significantly more robust to outliers than standard decentralized OLS and converges faster than decentralized Huber regression (with parameter $\varepsilon \in [0.1, 1.0]$). The full algorithm and experimental setup are detailed in Appendix~\ref{app:rob-reg}. We believe this to be a promising direction for robust decentralized optimization more broadly.

\begin{figure*}[t]
    \centering
    \begin{subfigure}{0.32\textwidth}
        \centering
        \includegraphics[height=\myheight, width=\mywidth]{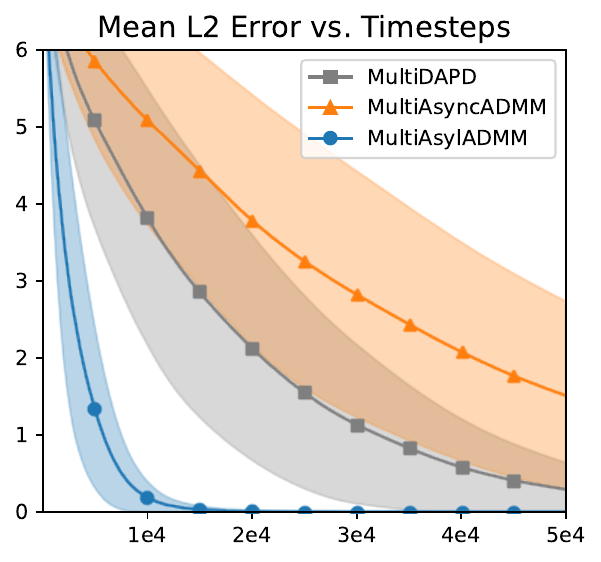}
        \caption{Geometric Median with $\varepsilon=0.3$}
        \label{subfig:sec-geo-median}
    \end{subfigure}\hfill
    \begin{subfigure}{0.32\textwidth}
        \centering
        \includegraphics[height=\myheight, width=\mywidth]{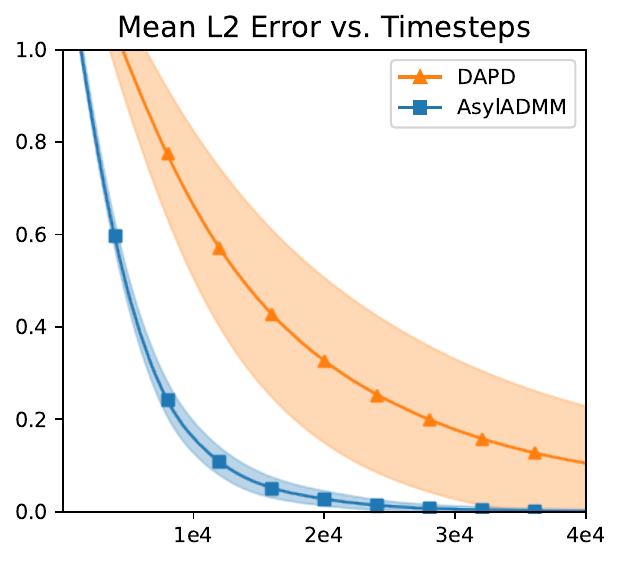}
        \caption{Lasso Regression}
        \label{subfig:sec-lasso-reg}
    \end{subfigure}\hfill
    \begin{subfigure}{0.32\textwidth}
        \centering
        \includegraphics[height=\myheight, width=\mywidth]{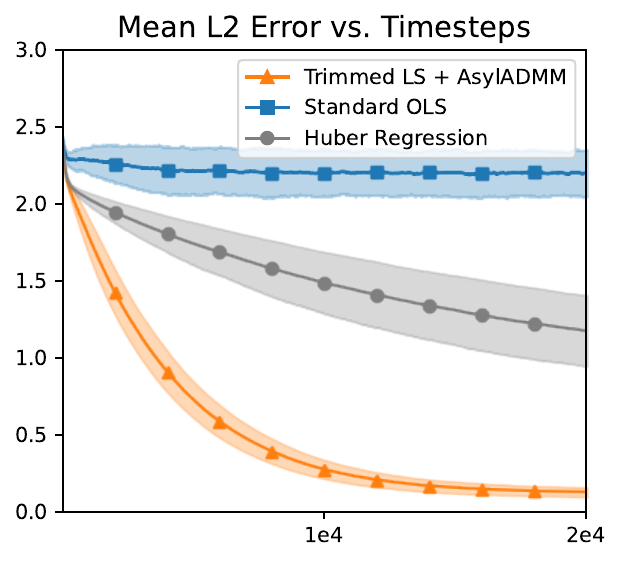}
        \caption{Robust Regression}
        \label{subfig:rob-reg}
    \end{subfigure}
    \caption{(a) Comparison of different methods on geometric median estimation. (b) Comparison of Generalized AsylADMM with DAPD on lasso regression. (c) Comparison of decentralized $\alpha$-trimmed least squares (using AsylADMM for quantiles) with standard decentralized OLS.}
    \label{fig:sec_three_plots}
\end{figure*}

\section{Conclusion}
\label{sec:conclusion}

We introduced AsylADMM, a memory-efficient asynchronous gossip algorithm for non-smooth convex decentralized optimization. Each node stores only two variables, compared to the $2d+1$ required by prior methods, and extensive experiments across diverse topologies and data distributions confirm faster convergence on several non-smooth problems. On the theoretical side, we provided a new convergence analysis for the synchronous variant and a simplified analysis for AsylADMM based on the squared loss with step size $\rho \leq 1$. In the special case $\rho = 1$, we showed that our algorithm recovers classical pairwise averaging, establishing a formal connection to a well-studied gossip scheme. These results suggest several avenues for future work, including a comprehensive convergence analysis, extensions to non-convex objectives, and the application of our robust framework to broader classes of decentralized optimization problems.




\newpage
\bibliography{neurips_2026}

@article{giloni2002least,
  title={Least trimmed squares regression, least median squares regression, and mathematical programming},
  author={Giloni, A and Padberg, M},
  journal={Mathematical and Computer Modelling},
  volume={35},
  number={9-10},
  pages={1043--1060},
  year={2002},
  publisher={Elsevier}
}

@book{rousseeuw2003robust,
  title={Robust regression and outlier detection},
  author={Rousseeuw, Peter J and Leroy, Annick M},
  year={2003},
  publisher={John wiley \& sons}
}

@incollection{robbins1971convergence,
  title={A convergence theorem for non negative almost supermartingales and some applications},
  author={Robbins, Herbert and Siegmund, David},
  booktitle={Optimizing methods in statistics},
  pages={233--257},
  year={1971},
  publisher={Elsevier}
}

@misc{wei2013o1kconvergenceasynchronousdistributed,
      title={On the O(1/k) Convergence of Asynchronous Distributed Alternating Direction Method of Multipliers}, 
      author={Ermin Wei and Asuman Ozdaglar},
      year={2013},
      eprint={1307.8254},
      archivePrefix={arXiv},
      primaryClass={math.OC},
      url={https://arxiv.org/abs/1307.8254}, 
}

@article{serfling2006depth,
  title={Depth functions in nonparametric multivariate inference},
  author={Serfling, Robert},
  journal={DIMACS Series in Discrete Mathematics and Theoretical Computer Science},
  volume={72},
  pages={1},
  year={2006},
  publisher={American Mathematical Society}
}

@article{mosler2013depth,
  title={Depth statistics},
  author={Mosler, Karl},
  journal={Robustness and complex data structures: Festschrift in Honour of Ursula Gather},
  pages={17--34},
  year={2013},
  publisher={Springer}
}

@article{minsker2015geometric,
  title={Geometric median and robust estimation in Banach spaces},
  author={Minsker, Stanislav},
  journal={Bernoulli},
  year={2015}
}

@Inbook{Oja2013,
author="Oja, Hannu",
title="Multivariate Median",
bookTitle="Robustness and Complex Data Structures: Festschrift in Honour of Ursula Gather",
year="2013",
publisher="Springer Berlin Heidelberg",
address="Berlin, Heidelberg",
pages="3--15",
isbn="978-3-642-35494-6",
doi="10.1007/978-3-642-35494-6_1",
url="https://doi.org/10.1007/978-3-642-35494-6_1"
}

@article{duchi2011dual,
  title={Dual averaging for distributed optimization: Convergence analysis and network scaling},
  author={Duchi, John C and Agarwal, Alekh and Wainwright, Martin J},
  journal={IEEE Transactions on Automatic control},
  volume={57},
  number={3},
  pages={592--606},
  year={2011},
  publisher={IEEE}
}

@article{ayadi2017outlier,
  title={Outlier detection approaches for wireless sensor networks: A survey},
  author={Ayadi, Aya and Ghorbel, Oussama and Obeid, Abdulfattah M and Abid, Mohamed},
  journal={Computer Networks},
  volume={129},
  pages={319--333},
  year={2017},
  publisher={Elsevier}
}

@book{huber2011robust,
  title={Robust statistics},
  author={Huber, Peter J and Ronchetti, Elvezio M},
  year={2011},
  publisher={John Wiley \& Sons}
}

@article{boyd2006randomized,
  title={Randomized gossip algorithms},
  author={Boyd, Stephen and Ghosh, Arpita and Prabhakar, Balaji and Shah, Devavrat},
  journal={IEEE transactions on information theory},
  volume={52},
  number={6},
  pages={2508--2530},
  year={2006},
  publisher={IEEE}
}

@article{van2025asynchronous,
  title={Asynchronous Gossip Algorithms for Rank-Based Statistical Methods},
  author={Van Elst, Anna and Colin, Igor and Cl{\'e}men{\c{c}}on, Stephan},
  journal={To appear in International Conference on Federated Learning Technologies and Applications (FLTA)},
  year={2025}
}

@article{van2025robust,
  title={Robust Distributed Estimation: Extending Gossip Algorithms to Ranking and Trimmed Means},
  author={Van Elst, Anna and Colin, Igor and Cl{\'e}men{\c{c}}on, Stephan},
  journal={To appear in the Proceedings of Advances in Neural Information Processing Systems (NeurIPS)},
  year={2025}
}

@article{shi2014linear,
  title={On the linear convergence of the ADMM in decentralized consensus optimization},
  author={Shi, Wei and Ling, Qing and Yuan, Kun and Wu, Gang and Yin, Wotao},
  journal={IEEE Transactions on Signal Processing},
  volume={62},
  number={7},
  pages={1750--1761},
  year={2014},
  publisher={IEEE}
}

@article{yang2022survey,
  title={A survey of ADMM variants for distributed optimization: Problems, algorithms and features},
  author={Yang, Yu and Guan, Xiaohong and Jia, Qing-Shan and Yu, Liang and Xu, Bolun and Spanos, Costas J},
  journal={arXiv preprint arXiv:2208.03700},
  year={2022}
}

@article{bianchi2015coordinate,
  title={A coordinate descent primal-dual algorithm and application to distributed asynchronous optimization},
  author={Bianchi, Pascal and Hachem, Walid and Iutzeler, Franck},
  journal={IEEE Transactions on Automatic Control},
  volume={61},
  number={10},
  pages={2947--2957},
  year={2015},
  publisher={IEEE}
}

@article{nedic2009distributed,
  title={Distributed subgradient methods for multi-agent optimization},
  author={Nedic, Angelia and Ozdaglar, Asuman},
  journal={IEEE Transactions on automatic control},
  volume={54},
  number={1},
  pages={48--61},
  year={2009},
  publisher={IEEE}
}

@article{lopuhaa1991breakdown,
  title={Breakdown points of affine equivariant estimators of multivariate location and covariance matrices},
  author={Lopuha{\"a}, Hendrik P and Rousseeuw, Peter J},
  journal={The Annals of Statistics},
  pages={229--248},
  year={1991},
  publisher={JSTOR}
}

@article{shi2015proximal,
  title={A proximal gradient algorithm for decentralized composite optimization},
  author={Shi, Wei and Ling, Qing and Wu, Gang and Yin, Wotao},
  journal={IEEE Transactions on Signal Processing},
  volume={63},
  number={22},
  pages={6013--6023},
  year={2015},
  publisher={IEEE}
}

@article{iutzeler2015explicit,
  title={Explicit convergence rate of a distributed alternating direction method of multipliers},
  author={Iutzeler, Franck and Bianchi, Pascal and Ciblat, Philippe and Hachem, Walid},
  journal={IEEE Transactions on Automatic Control},
  volume={61},
  number={4},
  pages={892--904},
  year={2015},
  publisher={IEEE}
}

@inproceedings{wei2012distributed,
  title={Distributed alternating direction method of multipliers},
  author={Wei, Ermin and Ozdaglar, Asuman},
  booktitle={2012 IEEE 51st IEEE Conference on Decision and Control (CDC)},
  pages={5445--5450},
  year={2012},
  organization={IEEE}
}

@inproceedings{iutzeler2013asynchronous,
  title={Asynchronous distributed optimization using a randomized alternating direction method of multipliers},
  author={Iutzeler, Franck and Bianchi, Pascal and Ciblat, Philippe and Hachem, Walid},
  booktitle={52nd IEEE conference on decision and control},
  pages={3671--3676},
  year={2013},
  organization={IEEE}
}

@article{boyd2011distributed,
  title={Distributed optimization and statistical learning via the alternating direction method of multipliers},
  author={Boyd, Stephen and Parikh, Neal and Chu, Eric and Peleato, Borja and Eckstein, Jonathan and others},
  journal={Foundations and Trends{\textregistered} in Machine learning},
  volume={3},
  number={1},
  pages={1--122},
  year={2011},
  publisher={Now Publishers, Inc.}
}

@article{parikh2014proximal,
  title={Proximal algorithms},
  author={Parikh, Neal and Boyd, Stephen and others},
  journal={Foundations and trends{\textregistered} in Optimization},
  volume={1},
  number={3},
  pages={127--239},
  year={2014},
  publisher={Now Publishers, Inc.}
}

@book{wasserman2013all,
  title={All of statistics: a concise course in statistical inference},
  author={Wasserman, Larry},
  year={2013},
  publisher={Springer Science \& Business Media}
}

@article{iutzeler2017distributed,
  title={Distributed computation of quantiles via ADMM},
  author={Iutzeler, Franck},
  journal={IEEE Signal Processing Letters},
  volume={24},
  number={5},
  pages={619--623},
  year={2017},
  publisher={IEEE}
}


\newpage

\newpage
\appendix

\section*{Table of contents}

Appendix~\ref{app:proximal-related} details the existing optimization algorithms for non-smooth convex decentralized optimization discussed in Section~\ref{sec:preliminaries}. Appendix \ref{app:sync-proof} presents the convergence analysis for the synchronous variant and Appendix \ref{app:conv-asyladmm} contains the simplified convergence analysis for AsylADMM. Appendix \ref{app:more-exp-quantile} provides additional experiments showing scalability of our methods on larger networks and robustness to several data contamination. Appendix \ref{app:add-alg-four} describes algorithms and details from Section 4.1. 
Appendix \ref{app:lasso} provides additional algorithm and experiment details on Lasso Regression task from Section 4. Appendix \ref{app:add-exp-4} provides additional experiments for Section 4. Appendix \ref{app:sync-vs} provides an empirical comparison using the synchronous variant. In Appendix \ref{app:mae-discussion}, we justify the choice of MAE as a metric (versus optimality gap).  Appendix \ref{app:rob-reg} contains the algorithm and experiments details for the robust decentralized regression problem.

\section{Existing Optimization Methods for Quantile Estimation}
\label{app:proximal-related}
In this section, we describe the baseline algorithms used in our comparative experiments in Section 2. 

\subsection{Distributed Subgradient Descent from \citet{nedic2009distributed}}

\begin{algorithm}
    \caption{Subgradient Descent }
    \label{alg:gd}
    \begin{algorithmic}[1]
        \STATE \textbf{Input:} Initial vectors $a_1, \ldots, a_n$; step size $\rho > 0$, subgradient of pinball loss $g$. 
        \STATE \textbf{Initialization:} For all nodes $k=1, \ldots, n$: $x_k \gets a_k$, $\mu_k \gets 0$. 
        \FOR{$t=0, 1, \ldots$ }
            \STATE For all $k$, update:
            $x_k^{+} \gets x_k - \dfrac{\rho}{\sqrt{t+1}} g_k$.
            \STATE Draw $e = (i,j) \in E$ with probability $p_e$ and compute average: $x_i^+, x_j^+ \gets  \frac{1}{2}(x_i^+ + x_j^+)$ .

        \ENDFOR
    \end{algorithmic}
\end{algorithm}

\subsection{Comparison of synchronous ADMM versus P-EXTRA}

We performed the comparison of synchronous ADMM \cite{boyd2011distributed} with P-EXTRA \cite{shi2015proximal} using the setup shown in Figures 1a and 1b in a synchronous setting. Experiments show that P-EXTRA and synchronous ADMM generally performs similarly, with ADMM ultimately being slightly faster. 

\begin{figure*}[ht]
    \centering
    \begin{subfigure}{0.32\textwidth}
        \centering
        \includegraphics[height=\myheight, width=\mywidth]{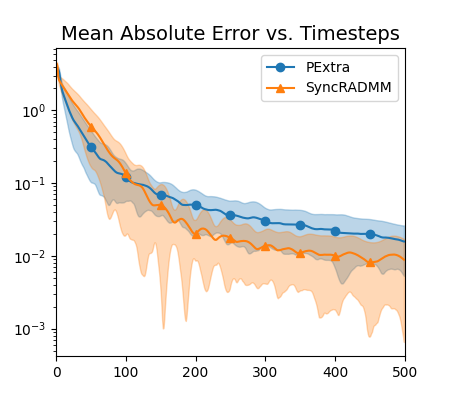}
        \caption{Median estimation}
    \end{subfigure}\hfill
    \begin{subfigure}{0.32\textwidth}
        \centering
        \includegraphics[height=\myheight, width=\mywidth]{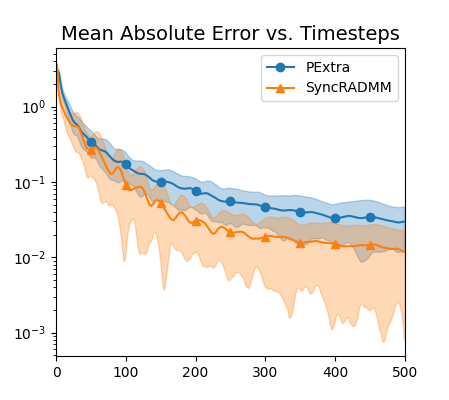}
        \caption{Quantile estimation}
    \end{subfigure}\hfill
    \begin{subfigure}{0.32\textwidth}
        \centering
        \includegraphics[height=\myheight, width=\mywidth]{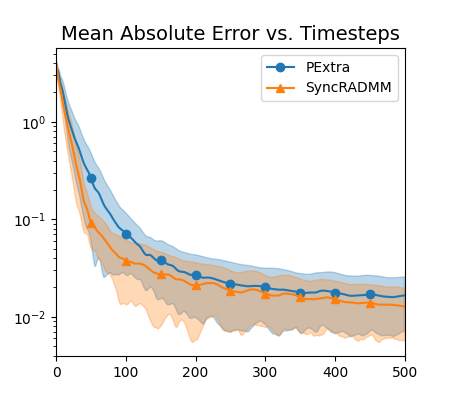}
        \caption{Median Estimation}
    \end{subfigure}
    \caption{Convergence of Synchronous ADMM versus P-EXTRA. Plot (a) and (b) use the Geometric graph while plot (c) uses the Watts-Strogatz graph.}
    \label{fig:pextra}

\end{figure*}
\newpage

\subsection{AsyncADMM from \citet{iutzeler2013asynchronous}}

\begin{algorithm}
    \caption{Async-ADMM}
    \label{alg:asyncadmm}
    \begin{algorithmic}[1]
        \STATE \textbf{Input:} Initial vectors $a_1, \ldots, a_n$; step size $\rho > 0$. 
        \STATE \textbf{Initialization:} For all agents $k=1, \ldots, n$: $x_k \gets a_k$, $\lambda_{kl} \gets 0$, $\bar x_{kl} \gets a_k$ for all $l \in \mathcal{N}_k$.
        \FOR{$t=0, 1, \ldots$ }
            \STATE Draw $(i,j) \in E$ with probability $p_{ij}$.
            \STATE Agents $k \in \{i,j\}$ update their primal variable: $\quad x_k \gets \operatorname{prox}_{\frac{f_k}{\rho d_k}}\left (\frac{1}{d_k}(\sum_{l \in \mathcal{N}_k} \bar{x}_{kl} - \lambda_{kl})\right)$
            \STATE Compute average: $\bar x \gets \frac{x_i + x_j}{2}$ .
            \STATE Update: $\lambda_{ij} \gets \lambda_{ij} + \rho(x_i -  \bar x )$ ; $\lambda_{ji} \gets \lambda_{ji} + \rho(x_j -  \bar x )$.
            \STATE Update $ \bar x_{ij}, \bar x_{ji} \gets \bar x$. 

        \ENDFOR
    \end{algorithmic}
\end{algorithm}

\subsection{DADP with $f=0$ from \citet{bianchi2015coordinate}}

\begin{algorithm}
    \caption{DAPD (adapted\textit{ }for\textit{ $f=0$} using $\tau = \rho/2$ and $\rho \mapsto 1/\rho$ ) }
    \label{alg:dapd}
    \begin{algorithmic}[1]
        \STATE \textbf{Input:} Initial vectors $a_1, \ldots, a_n$; step size $\rho > 0$. 
        \STATE \textbf{Initialization:} For all agents $k=1, \ldots, n$: $x_k \gets a_k$, $\lambda_{kl} \gets 0$, $\bar x_{kl} \gets a_k$ for all $l \in \mathcal{N}_k$.
        \FOR{$t=0, 1, \ldots$ }
            \STATE Draw $(i,j) \in E$ with probability $p_{ij}$.
             \STATE Update dual variables: \\$\lambda_{ij} \gets \frac{1}{2}(\lambda_{ij}-\lambda_{ji}) + \frac{\rho}{2}(x_i -  x_j )$ ;      $\lambda_{ji} \gets -\lambda_{ij}$.
            \STATE Agents $k \in \{i,j\}$ update their primal variable: \\$x_k \gets \operatorname{prox}_{\frac{f_k}{\rho d_k}}\left (\frac{1}{2}x_k {+} \frac{1}{2d_k}\sum_{l \in \mathcal{N}_k} (\bar{x}_{kl} {-} \frac{1}{\rho}\lambda_{kl})\right)$.
             \STATE Update neighbor values: $\bar{x}_{ij} \gets x_j$, $\bar{x}_{ji} \gets x_i$.
        \ENDFOR
    \end{algorithmic}
\end{algorithm}

\subsection{Experiments on the Edge-Based Algorithm of  \citet{wei2013o1kconvergenceasynchronousdistributed}}
\label{app:wei-divergence}

In \citet{wei2013o1kconvergenceasynchronousdistributed}, the ADMM problem was reformulated using an edge-based approach; however, this formulation appears not to converge (see experiments below). The key idea is as follows. For each edge $e = (i, j)$, the constraint $x_i = x_j$ is decomposed using auxiliary variables: $x_i = z_{ei}$, $-x_j = z_{ej}$, and $z_{ei} + z_{ej} = 0$. This can be written compactly as $A_{ei} x_i = z_{ei}$, where $A_{ei}$ denotes the entry in the $e$-th row and $i$-th column of the incidence matrix $A$, which is either 1 or -1. The corresponding algorithm is given below; see Section D in \citet{wei2013o1kconvergenceasynchronousdistributed}:

1. Initialization: choose some arbitrary $x_i^0$ in $X$ and $z^0$ in $Z$, which are not necessarily all equal. Initialize $p_{e i}^0=0$ for all edges $e$ and end points $i$.

2. At time step $k$, the local clock associated with edge $e=(i, j)$ ticks, 

a) Agents $i$ and $j$ update their estimates $x_i^k$ and $x_j^k$ simultaneously as
$$
x_q^{k+1}=\underset{x_q \in X}{\operatorname{argmin}} f_q\left(x_q\right)-\left(p_{e q}^k\right)^{\prime} A_{e q} x_q+\frac{\beta}{2}\left\|A_{e q} x_q-z_{e q}^k\right\|^2
$$
for $q=i, j$. The updated value of $x_i^{k+1}$ and $x_j^{k+1}$ are exchanged over the edge $e$.

b) Agents $i$ and $j$ exchange their current dual variables $p_{e i}^k$ and $p_{e j}^k$ over the edge $e$. For $q=i, j$,
agents $i$ and $j$ use the obtained values to compute the variable $v^{k+1}$ as Eq. (20), i.e.,
$$
v^{k+1}=\frac{1}{2}\left(-p_{e i}^k-p_{e j}^k\right)+\frac{\beta}{2}\left(A_{e i} x_i^{k+1}+A_{e j} x_j^{k+1}\right) .
$$
and update their estimates $z_{e i}^k$ and $z_{e j}^k$ according to Eq. (19), i.e.,
$$
z_{e q}^{k+1}=\frac{1}{\beta}\left(-p_{e q}^k-v^{k+1}\right)+A_{e q} x_q^{k+1} .
$$
c) Agents $i$ and $j$ update the dual variables $p_{e i}^{k+1}$ and $p_{e j}^{k+1}$ as
$$
p_{e q}^{k+1}=-v^{k+1} \quad \text { for } q=i, j \, .
$$

We tested the method with various step sizes and using the settings from Fig.\ 1a. We compared it with AsylADMM using $\rho = 1.0$. The results are shown \cref{fig:wei}.

\begin{figure}[htbp]
    \centering
    \begin{subfigure}{0.32\textwidth}
        \centering
        \includegraphics[height=3.5cm, width=0.9\linewidth]{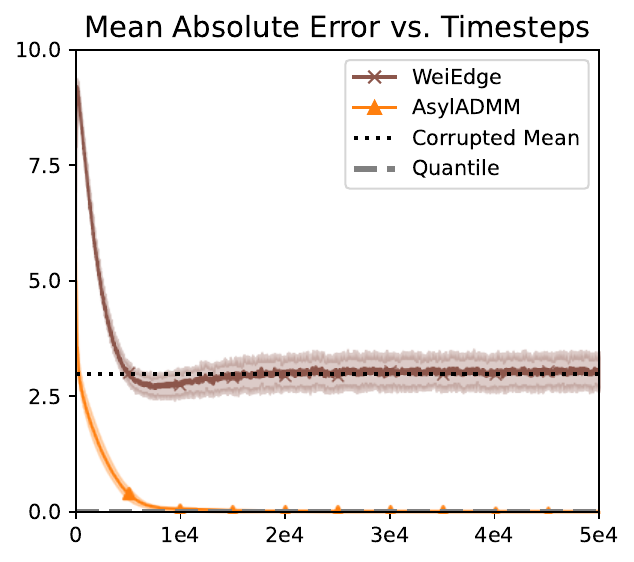}
        \caption{$\beta = 1.0$}
    \end{subfigure}\hfill
    \begin{subfigure}{0.32\textwidth}
        \centering
        \includegraphics[height=3.5cm, width=0.9\linewidth]{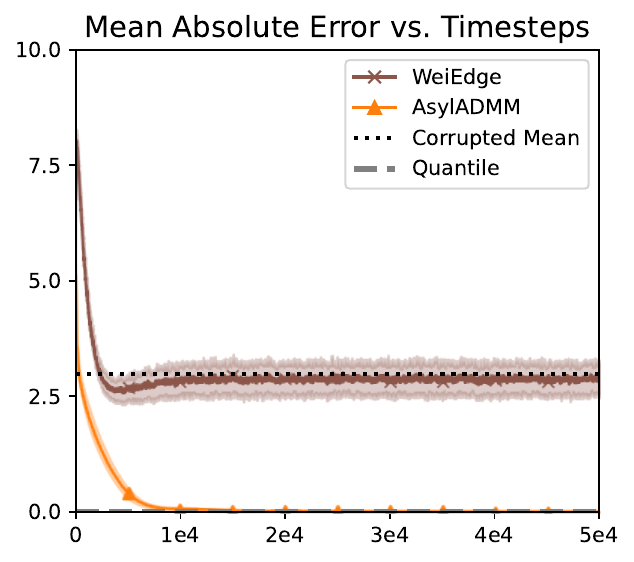}
        \caption{$\beta = 0.5$}
    \end{subfigure}\hfill
    \begin{subfigure}{0.32\textwidth}
        \centering
        \includegraphics[height=3.5cm, width=0.9\linewidth]{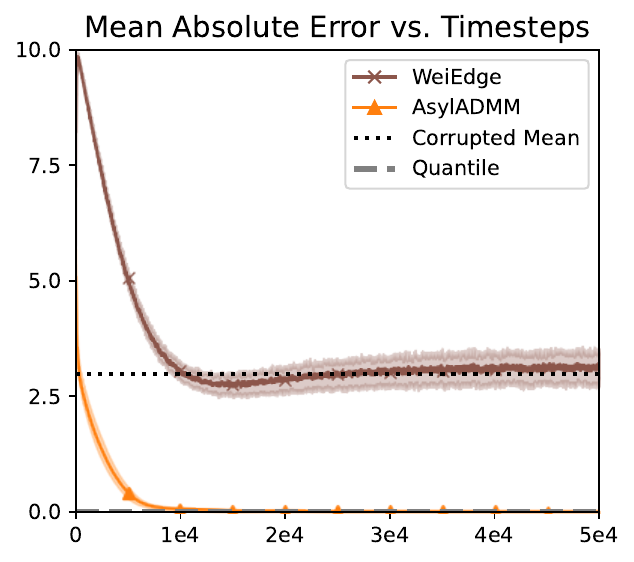}
        \caption{$\beta = 2.0$}
    \end{subfigure}

    \caption{The edge-based formulation from \citet{wei2013o1kconvergenceasynchronousdistributed} fails to converge on the median estimation problem.}
    \label{fig:wei}
\end{figure}

\section{Convergence Analysis of the Synchronous Variant}
\label{app:sync-proof}

In this section, we provide useful theoretical results for the convergence analysis of the synchronous variant.

\subsection{Additional Details}

The algorithm outlined in \cref{alg:sync-admm} corresponds to a reordered variant of ADMM that updates variables in the sequence $(z, y, x)$ rather than the standard $(x, z, y)$ ordering:
$$
\begin{aligned}
z^{t+1} & :=\arg \min _z\left(g(z)+\frac{\rho}{2}\left\|z-Mx^{t}+y^t/\rho\right\|^2\right), \\
y^{t+1} & :=y^t+\rho(z^{t+1}-M x^{t})\\
x^{t+1} & :=\arg \min _x\left(f(x)+\frac{\rho}{2}\left\|z^{t+1}-Mx+y^{t+1}/\rho\right\|^2\right).
\end{aligned}
$$
We now verify that the $z$-update reduces to the averaging step in our algorithm. Recall that $g_e(z_e) = \iota_C(z_e)$ is the convex indicator function for $C = \operatorname{span}(\mathbf{1}_2)$. Since the objective is separable across edges, the minimization decomposes into independent subproblems, each solved by orthogonal projection onto $C$. Consider edge $e = (i, j)$, and let $x_e = (x_i, x_j)$ and $y_e = (y_{e,i}, y_{e,j})$. The $z$-update becomes:
$$
z_e^{t+1} = \Pi_{\mathcal{C}}\left(x_e^t-y_e^t/\rho\right):=\arg \min _{z \in \mathcal{C}}\left\|z-(x_e^t-y_e^t/\rho)\right\|^2.
$$
The projection onto $\operatorname{span}(\mathbf{1}_2)$ yields $z_e = \bar x \mathbf{1}_2$ with $\bar x = (x_i + x_j)/2$.  This follows from the antisymmetry of the dual variables, which ensures  $(y_{e, i}+ y_{e, j})/2 = 0$.

.
\subsection{Proof of Lemma 3.1}

We establish the equivalence of the following two statements: \\
\hspace*{1em} (a) $x_k \in \operatorname{argmin}_{x_k} \mathcal{L}_{\rho}^{(k)}(x_k, z_k, y_k)$ where $z_k =(z_e)_{e \in N_k}$ and $y_k = (y_{e, k})_{e \in N_k}$; \\
\hspace*{1em} (b) $x_k = \operatorname{prox}_{f_k/(\rho d_k)}\left (\hat z_k + \hat \mu_k/\rho\right)  $, where $\hat{z}_k = \mathbf{1}_{d_k}^\top z_k / d_k$ and $\hat{\mu}_k = \mathbf{1}_{d_k}^\top y_k / d_k$. 

Expanding the augmented Lagrangian, we have 
$$
\mathcal{L}_{\rho}^{(k)}(x_k, z_k, y_k)= f_k(x_k) + \sum_{e \in N_k}\left(y_{e,k}^\top(z_e-x_k)+\frac{\rho}{2}\left\|x_k\right\|^2-\rho z_e^\top x_k+\frac{\rho}{2}\left\|z_e\right\|^2\right).
$$
Since we minimize over $x_k$, terms independent of $x_k$ can be discarded. The optimization problem thus reduces to
$$ f_k(x_k) + \sum_{e \in N_k}\left(\frac{\rho}{2}\left\|x_k\right\|^2-(y_{e,k}+ \rho z_e)^\top x_k \right).$$
Substituting the definitions of $\hat{z}_k$ and $\hat{\mu}_k$, and factoring out $\rho d_k / 2$, we obtain $f_k(x_k) + \frac{\rho d_k}{2}\left(\left\|x_k\right\|^2-2(\hat z_k + \hat \mu_k/\rho)^\top x_k\right)$. Letting $u = \hat{z}_k + \hat{\mu}_k/\rho$ and completing the square yields
$$ \operatorname{prox}_{\frac{f_k}{\rho d_k}}\left (u\right) := \operatorname{argmin}_{x} \left\{ f_k(x) + \frac{\rho d_k}{2} \|u-x\|^2 \right\},$$
which completes the proof. \qed

\subsection{Proof of Inequality I}

We aim to establish that $f\left(x^{t+1}\right) - f\left(x^{\star}\right) \leq \left( \rho M \tilde x^{t+1} - y^{t+1}\right)^\top r^{t+1}$. Recall that the dual update takes the form $y^{t+1} = y^t + \rho(z^{t+1} - Mx^t)$. We begin by analyzing the $z$-update. The optimality condition yields $0 \in \partial g_e\left(z_e^{t+1}\right) + \rho\left(z_e^{t+1} - x_e^t + y_e^t / \rho\right)$. Substituting the dual update $y_e^{t+1} = y_e^t + \rho(z_e^{t+1} - x_e^t)$, this simplifies to $ 0 \in \partial g_e\left(z_e^{t+1}\right) + y_e^{t+1}$, or equivalently, $-y_e^{t+1} \in \partial g_e\left(z_e^{t+1}\right)$. Applying the subgradient inequality, we obtain $ g_e\left(z_e^{t+1}\right) - g_e\left(z_e^{\star}\right) \leq -\left(y_e^{t+1}\right)^\top \left(z_e^{t+1} - z_e^{\star}\right)$. Summing over all edges:
\begin{equation}
\label{eq:z_update}
    g\left(z^{t+1}\right) - g\left(z^{\star}\right) \leq -\sum_k \sum_{e \in N_k}{y_{e, k}^{t+1}}^\top (z_e^{t+1}-z_e^{\star}) .
\end{equation}
Next, we analyze the $x$-update. The optimality condition for the proximal operator yields, for $k$, $ 0 \in \partial f_k\left(x_k^{t+1}\right) + \rho d_k \left(x_k^{t+1} - u\right)$, where $u := \hat{z}_k^{t+1} + \hat{\mu}_k^{t+1}/\rho$. Observing that $\rho \left(x_k^{t+1} - u\right) = -\left(\rho r_k^{t+1} + \hat{\mu}_k^{t+1}\right)$, and applying the subgradient inequality, we obtain $f_k\left(x_k^{t+1}\right) - f_k\left(x_k^{\star}\right) \leq d_k \left(\rho r_k^{t+1} + \hat{\mu}_k^{t+1}\right)^\top \left(x_k^{t+1} - x_k^{\star}\right)$.
Summing over all $k \in [n]$ and expressing the result in terms of the original edge variables $z_e$ and $y_{e,k}$, we have
\begin{equation}
    \label{eq:x_update}
    f\left(x^{t+1}\right) - f\left(x^{\star}\right) \leq \sum_k(\sum_{e \in N_k} \rho (z_e^{t+1}- x_k^{t+1}) +  y_{e,k}^{t+1})^\top(x_k^{t+1}-x^{\star}_k).
\end{equation}
Adding \cref{eq:z_update} and \cref{eq:x_update} and using the facts that, by construction, $g(z) = 0$ and $Mx^\star = z^\star$, we arrive at 
$$
f\left(x^{t+1}\right) - f\left(x^{\star}\right) \leq \sum_k(\sum_{e \in N_k}\rho  \tilde  x_k^{t+1} - y_{e,k}^{t+1})^\top r_{e,k}^{t+1}.
$$
Rewriting in matrix form establishes the desired inequality.
\qed

\subsection{Proof of Inequality II}

We now establish inequality $f(x^\star) - f(x^{t+1}) \leq  {y^\star}^\top r^{t+1}$. We first recall the main assumption.
\begin{assumption}\label{asm:saddle}
The unaugmented Lagrangian $\mathcal{L}_0$ has a saddle point. Specifically, there exists $(x^\star, z^\star, y^\star)$ such that
\[
\mathcal{L}_0(x^\star, z^\star, y) \leq \mathcal{L}_0(x^\star, z^\star, y^\star) \leq \mathcal{L}_0(x, z, y^\star)
\]
holds for all $(x, z, y)$.
\end{assumption}
Since $(x^\star, z^\star, y^\star)$ is a saddle point of $\mathcal{L}_0$, we have $\mathcal{L}_0\left(x^\star, z^\star, y^\star\right) \leq \mathcal{L}_0\left(x^{t+1}, z^{t+1}, y^\star\right)$. Expanding both sides and using $Mx^\star = z^\star$, we obtain the desired inequality. \qed

\section{Simplified Convergence Analysis for AsylADMM}
\label{app:conv-asyladmm}

This section presents the proofs underlying the simplified convergence analysis of AsylADMM from Section 3.5.

\subsection{Proof of Lemma 3.4}
\label{app:prox-solution}

Let $0 < \rho \leq 1$. Define $s_k := d_k(\mu_k - \mu^*_k) -\rho (x_k - x^*)$ where $x^* = (1/n)\sum a_k$ and $\mu^*_k = (x^* - a_k)/d_k$.  The proximal operator of $f_k(x)=\left(x-a_k\right)^2/2$ admits the closed form: 
\begin{equation}
\label{eq:prox}
\operatorname{prox}_{\frac{f_k}{\rho d_k}}(z)=\frac{\rho d_k z+a_k}{\rho d_k+1}.
\end{equation}
For edge $e = (i,j)$ is activated, we have the updates
$$
\mu_k^+ = \mu_k + \tfrac{\rho}{d_k}(z - x_k), \qquad x_k^+ = \tfrac{\rho d_k z + d_k \mu_k^+ + a_k}{\rho d_k + 1}, \qquad k \in \{i,j\}
$$
where $z = (x_i + x_j)/2$. We first establish a recursion for $s_k$ in the following lemma. 

\begin{lemma}[Recursion]
\label{lem:sk-rec}
For selected nodes $k \in \{i,j\}$, we have the following recursion
$$
s_k^+ \;=\; A_k\, s_k + B_k\, \tilde z_e, \qquad A_k = \tfrac{\rho(d_k - 1)+1}{\rho d_k + 1} \in (0, 1), \ \ B_k = \tfrac{\rho(1-\rho)}{\rho d_k + 1} \, ,
$$
with $\tilde z_e = \tfrac12\bigl[(x_i - x^*) + (x_j - x^*)\bigr]$. For $k \notin \{i,j\}$, $s_k^+ = s_k$.    
\end{lemma}

\begin{remark}
For $\rho = 1$, it simplifies to a geometric recursion $s_k^+ = A_k\, s_k $. 
\end{remark}

\begin{proof}
From the dual update, we have:
\begin{equation}
 \label{eq:sdual} 
 d_k \mu_k^+ = d_k \mu_k + \rho(z - x_k) .
\end{equation}
From the primal update, multiplied by $D := \rho d_k + 1$, we obtain
\begin{equation}
 \label{eq:sprimal} 
D\, x_k^+ = \rho d_k z + d_k \mu_k^+ + a_k .
\end{equation}
Applying $D \cdot \eqref{eq:sdual} - \rho \cdot \eqref{eq:sprimal} $ gives
$$D(d_k\mu_k^+ - \rho x_k^+) = D(d_k \mu_k + \rho z - \rho x_k) - \rho(\rho d_k z + d_k \mu_k^+ + a_k).$$
Substituting $d_k \mu_k^+$ and collecting terms, the right hand side is made of 4 terms:
$$\underbrace{(D - \rho)}_{=\rho(d_k-1)+1}\cdot (d_k\mu_k), \quad \underbrace{(D\rho - \rho^2 d_k - \rho^2)}_{=\rho(1-\rho)}\cdot\,z, \quad \underbrace{(-D\rho + \rho^2)}_{=-\rho[\rho(d_k-1)+1]}\,\cdot x_k, \quad -\rho a_k \, .$$
The coefficients of $d_k \mu_k$ and $\rho x_k$ are the same $\rho(d_k-1)+1$, so they combine:
$$D(d_k\mu_k^+ - \rho x_k^+) = [\rho(d_k-1)+1](d_k\mu_k - \rho x_k) + \rho(1-\rho)\, z - \rho\, a_k. $$
Dividing by $D$ and denoting $A_k: = (\rho(d_k - 1)+1)/D$, $B_k := \rho(1-\rho)/D$, and $C_k := \rho/D$:
\begin{equation}
 \label{eq:sfirst} 
d_k\mu_k^+ - \rho x_k^+ = A_k(d_k \mu_k - \rho x_k) + B_k z - C_k a_k .
\end{equation}
At fixed point $(x^*,\mu^*_k, x^*)$, defining $c_k := d_k \mu^*_k - \rho x^* = (1-\rho)x^* - a_k$, \eqref{eq:sfirst} gives
$$C_k \,c_k = C_k[(1-\rho)x^* - a_k] = B_k x^* - C_k a_k $$
since $1 - A_k = C_k$ and $B_k = C_k(1-\rho)$.

Subtracting the fixed-point identity $c_k = A_k c_k + B_k x^* - C_k a_k$ from  \eqref{eq:sfirst} :
$$\underbrace{(d_k\mu_k^+ - \rho x_k^+) - c_k}_{s_k^+} = A_k\underbrace{(d_k\mu_k - \rho x_k - c_k)}_{s_k} + B_k\underbrace{(z - x^*)}_{\tilde z_e}.$$
This completes the proof.     
\end{proof}

Now we establish a recursion for $\tilde x_k = x_k - x^*$, as given in the following lemma.  

\begin{lemma}[Primal Recursion]
\label{lem:primal-rec}
 For selected nodes $k \in \{i,j\}$, we have $$x_k^+ - x^* = \frac{s_k + \rho(d_k + 1)\, \tilde z_e}{\rho d_k + 1} ,$$
 where $\tilde z_e$ is defined in the previous lemma.
\end{lemma}

\begin{proof}
From the primal update, multiplied by $D := \rho d_k + 1$, we obtain
$$D\, x_k^+ = \rho d_k z + d_k \mu_k^+ + a_k.$$
Substituting the dual update in the previous expression gives
$$D\, x_k^+ = \rho d_k z + d_k \mu_k + \rho(z - x_k) + a_k$$
which, after subtracting $D x^*$ on both sides and combining terms, gives:
$$D\, (x_k^+ -  x^*) = \rho (d_k + 1)  \tilde z_e + d_k (\mu_k  -\mu^*_k) -\rho (x_k-x^*_k) \, , $$
since $ D = \rho (d_k+1) -\rho +1$ and $d_k\mu^*_k = (x^* - a_k)$, which finishes the proof.     
\end{proof}

Using the recursions established in Lemma \ref{lem:primal-rec} and \ref{lem:sk-rec}, we obtain for $k \in \{i, j\}$,
$$s_k^+ = A_k s_k + B_k \tilde z_e, \qquad x_k^+ - x^* = \frac{s_k + \rho(d_k + 1)\, \tilde z_e}{\rho d_k + 1}, $$
with $A_k = \frac{\rho(d_k-1)+1}{\rho d_k + 1}$, $B_k = \frac{\rho(1-\rho)}{\rho d_k + 1}$. For $k \notin \{i,j\}$, $s_k$ and $x_k$ are unchanged.


Computing the first part of the Lyapunov function:
\begin{equation}
 \label{eq:a_lyap} 
(s_k^+)^2 - s_k^2 = (A_k s_k + B_k \tilde z_e)^2 - s_k^2 = (A_k^2 - 1)s_k^2 + 2 A_k B_k s_k \tilde z_e + B_k^2 \tilde z_e^2 .
\end{equation}
Moreover, since
$$
\alpha\left(x_k^{+}-x^*\right)^2=\frac{\alpha}{D_k^2}\left(s_k+\rho\left(d_k+1\right) z_e\right)^2=\frac{\alpha}{D_k^2} s_k^2+\frac{2 \alpha \rho\left(d_k+1\right)}{D_k^2} s_k z_e+\frac{\alpha \rho^2\left(d_k+1\right)^2}{D_k^2} z_e^2 .
$$
Adding both expressions and collecting terms:
Coefficient of $s_k^2$. Using $A_k=1-\rho / D_k$,
$$
A_k^2-1=-\frac{2 \rho}{D_k}+\frac{\rho^2}{D_k^2}=-\frac{\rho\left(2 D_k-\rho\right)}{D_k^2}=-\frac{\rho\left(2 \rho d_k+2-\rho\right)}{D_k^2} .
$$

Adding $\alpha / D_k^2=\rho(1-\rho) / D_k^2$ :
$$
A_k^2-1+\frac{\alpha}{D_k^2}=\frac{-\rho\left(2 \rho d_k+2-\rho\right)+\rho(1-\rho)}{D_k^2}=\frac{-\rho\left(2 \rho d_k+1\right)}{D_k^2}=-C_k .
$$
Coefficient of $ s_k \tilde z_e$. 
$$
\begin{aligned}
2 A_k B_k+\frac{2 \alpha \rho\left(d_k+1\right)}{D_k^2} &=\frac{2 \rho(1-\rho)}{D_k^2}\left[\rho\left(d_k-1\right)+1+\rho\left(d_k+1\right)\right]\\
&=\frac{2 \rho(1-\rho)\left(2 \rho d_k+1\right)}{D_k^2}=2(1-\rho) C_k .
\end{aligned}
$$
Coefficient of $\tilde z_e$. We can show that the coefficient is
$$
{B_k^2}+{\frac{\alpha \rho^2\left(d_k+1\right)^2}{D_k^2}}-\alpha =-C_k(1-\rho)^2 ,
$$
where we evenly split the contribution of $2\alpha \tilde z_e^2$ from $\alpha \left(x_i-x^*\right)^2+\alpha \left(x_j-x^*\right)^2$ between the two nodes. 
Proof of previous equality. Multiply both sides by $D_k^2$; the claim becomes
$$
\rho^2(1-\rho)^2+\rho^3(1-\rho)\left(d_k+1\right)^2-\rho(1-\rho) D_k^2=-\rho(1-\rho)^2\left(2 \rho d_k+1\right)
$$

Divide by $\rho(1-\rho)$ (nonzero for $\rho \in(0,1)$ ):
\begin{equation*}
\rho(1-\rho)+\rho^2\left(d_k+1\right)^2-D_k^2=-(1-\rho)\left(2 \rho d_k+1\right) 
\end{equation*}
Expand each term:
$$
\rho(1-\rho)=\rho-\rho^2, \quad \rho^2\left(d_k+1\right)^2=\rho^2 d_k^2+2 \rho^2 d_k+\rho^2, \quad D_k^2=\rho^2 d_k^2+2 \rho d_k+1 .
$$
Simplifying the left side, matching the right side:
$$
\left(\rho-\rho^2\right)+\left(\rho^2 d_k^2+2 \rho^2 d_k+\rho^2\right)-\left(\rho^2 d_k^2+2 \rho d_k+1\right)=2 \rho^2 d_k-2 \rho d_k+\rho-1,
$$
 Note that the resulting expression simplifies:
$$E_k := -C_k s_k^2 + 2 (1-\rho) C_k s_k \tilde z_e - C_k (1-\rho)^2 \tilde z_e^2 = -C_k\bigl(s_k - (1-\rho) \tilde z_e\bigr)^2.$$

Using that $[(x_i - x^*)^2 + (x_j - x^*)^2] = 2\tilde z_e^2 + 2w_e^2$, we subtract it from $E_i + E_j$ to obtain the desired result:
$$\Delta V_e = -2 \rho(1-\rho) w_e^2-C_i\left(s_i-(1-\rho) \tilde z_e\right)^2-C_j\left(s_j-(1-\rho) \tilde z_e\right)^2 \leq 0.$$

\subsection{Proof of Theorem 3.5}

Let $\rho \in (0.1)$. Conditioning on the natural filtration $\mathcal{F}_t$, the expected change is $\mathbb{E}\left[V_{t+1}-V_t \mid \mathcal{F}_t\right]=\sum_e p_e \Delta V_e .$ Substitute the expression for $\Delta V_e$ from Lemma 3.4:
$$
\mathbb{E}\left[V_{t+1} \mid \mathcal{F}_t\right] \leq V_t-\sum_{e \in E} p_e \alpha_e.
$$
where $\alpha_e =  2 \rho(1-\rho) w_e^2+C_i\left(s_i-(1-\rho) \tilde z_e\right)^2+C_j\left(s_j-(1-\rho) \tilde z_e\right)^2$. 
Define $\alpha_t = \sum_e p_e \alpha_e \geq 0$. Then $\mathbb{E}\left[V_{t+1} \mid \mathcal{F}_t\right] \leq V_t-\alpha_t .$ We recall a key convergence result for nonnegative supermartingales.

\begin{theorem}[\citet{robbins1971convergence}]
Let $\mathcal{F}_t$ denote the filtration up to time $t$, so that $\mathcal{F}_t \subseteq \mathcal{F}_{t+1}$ for $t \geq 1$. Let $\{e_t\}, \{\alpha_t\}$ be non-negative $\mathcal{F}_t$-adapted sequences. If:
$$\mathbb{E}[e_{t+1} \mid \mathcal{F}_t] \leq e_t - \alpha_t$$
with $\alpha_t \geq 0$, then $e_t$ converges a.s. to some random variable $e_\infty \geq 0$ and $\sum_t \alpha_t < \infty$ a.s. 
\end{theorem}

Since $\rho (1-\rho) > 0$, we have $e_t := V_{t} \geq 0$, and we can apply  Robbins-Siegmund to obain: 1) $V_t$ converges almost surely to a finite random variable $V_{\infty} \geq 0$. 2) $\sum_{t=1}^{\infty} \alpha_t<\infty$ almost surely. Hence, since $\alpha_t$ is a sum of nonnegative terms weighted by fixed probabilities $p_e>0$, summability implies each component must vanish asymptotically along the selected edges:
$$
w_e(t) \rightarrow 0, \quad s_i-(1-\rho) \tilde z_e \rightarrow 0, \quad s_j-(1-\rho) \tilde z_e \rightarrow 0 \quad \text { a.s. }
$$
From $w_e(t) \rightarrow 0$, we deduce that all nodes estimate $x_k$ converge to a consensus $\bar x$ and so does $z_e$ for all edges. Moreover, since $s_i-(1-\rho) \tilde z_e \rightarrow 0$, we deduce that the dual variables $\mu_k$ converge as well to some constant $\bar \mu_k$. Now, we derive $\bar{x}-x^*=d_k\left(\bar{\mu}_k-\mu_k^*\right) .$ Substitute $\mu_k^*=\frac{x^*-a_k}{d_k}$ : $\bar{x}-x^*=d_k \bar{\mu}_k-\left(x^*-a_k\right) \Longrightarrow \bar{x}=d_k \bar{\mu}_k+a_k .$
Recalling that $\sum_k d_k \bar{\mu}_k=0$, we deduce that $\bar{x}=\sum_k a_k/n = x^*$, which implies $\bar \mu_k = \mu_k^*$. We have shown the convergence of the primal and dual variables. 

\subsection{Proof of Theorem 3.6}

We only need to show by induction that, for all $t \geq 0$ and all nodes $k$, $\hat{\mu}_k=({x_k-a_k})/{d_k}$.

\textbf{Base case ($t=0$). }Trivial since $x_k=a_k, \hat{\mu}_k=0,  \forall k$. 

\textbf{First iteration ($t=1$).} If node $i$ is selected, \textit{i.e.}, $i \in\{k, l\}$, then the updates give
$$
x_k=x_l=\frac{a_k+a_l}{2}, \quad \hat{\mu}_k=\frac{x_k-a_k}{d_k}, \quad \hat{\mu}_l=\frac{x_l-a_l}{d_l} .
$$
If node $i$ is not selected, we have $x_i=a_i, \hat{\mu}_i=0$, so the relation $\hat{\mu}_i=({x_i-a_i})/{d_i}$ still holds.

\textbf{Inductive step. }Assume that at some iteration $t > 0$, for all nodes $i$, the relation $\hat{\mu}_i=({x_i-a_i})/{d_i}$ holds.
Consider an edge $(k, l)$ selected at iteration $t+1$. If node $i$ is not selected: nothing changes, and the relation remains true. If node $i$ is selected, \textit{i.e.}, $i \in\{k, l\}$, the primal update is:
$$
x_i^{+}=z_e+\frac{d_i \hat{\mu}_i+a_i-x_i}{d_i+1},
$$
where $z_e = (x_k + x_l)/2$. 
By the inductive hypothesis, $\hat{\mu}_i=({x_i-a_i})/{d_i}$, therefore the primal update simplifies $x_i^{+}=z_e$. Moreover, the update for $\hat{\mu}_i$ becomes
$$
\hat{\mu}_i^{+}=\hat{\mu}_i+\frac{z_e-x_i}{d_i}=\frac{x_i-a_i}{d_i}+\frac{x_i^{+}-x_i}{d_i}=\frac{x_i^{+}-a_i}{d_i} .
$$
This confirms that the relation continues to hold after the update, which concludes the proof. 
\section{Additional Experiments for Section~\ref{sec:exp-admm}}
\label{app:more-exp-quantile}

This section presents additional experiments and ablation studies for Section 3.3.

\subsection{Impact of Network Size using Setup of Plot (a)}

Here, we investigate the impact of network size on various graph topologies under the experimental setup of Plot (a) from
section 3.3.

In \cref{fig:size_plots}, we compared the optimization methods on different graph topologies with different network sizes. First, we consider small networks with $n=21$. Notably, DAPD and AsyncADMM do not converge well on the complete graph. While this may seem surprising given that the complete graph has the best connectivity, it likely stems from both methods' reliance on outdated information, which becomes problematic when nodes have many neighbors. The complete graph may also be inherently harder for ADMM-like algorithms due to the larger number of constraints. On the Watts-Strogatz graph, there is almost no difference between DAPD, Asyl-ADMM, and AsyncADMM. This can be explained by the low average degree (equal to 4), so the outdated information issue does not play a significant role.

Then, we repeat the experiment with $n=101$. All three methods show similar convergence on the cycle graph, which is unsurprising given that it has the lowest average degree of $2$. 

Finally evaluate our algorithm on large-scale networks (with $n=501$ and $n=1001$ nodes). For these experiments, we reduced the range of $\rho$ to $(0.01, 0.1)$. AsylADMM demonstrates even stronger competitive performance relative to the baseline algorithms, achieving consistent fast convergence on both Geometric and Watts-Strogatz topologies.

\begin{figure}[h]
    \centering

    \begin{subfigure}{0.32\textwidth}
        \centering
        \includegraphics[height=3.5cm, width=0.9\linewidth]{fig/appendix/0.5_quantile_101_geometric.pdf}
        \caption{Geometric Graph}
    \end{subfigure}\hfill
    \begin{subfigure}{0.32\textwidth}
        \centering
        \includegraphics[height=3.5cm, width=0.9\linewidth]{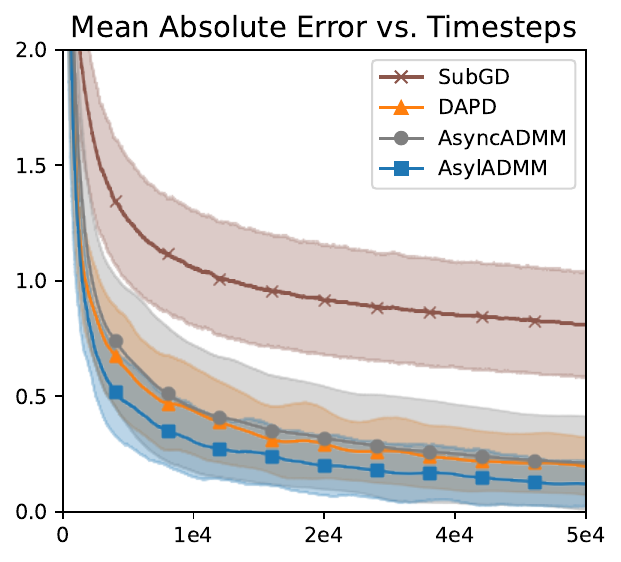}
        \caption{Cycle Graph}
    \end{subfigure}\hfill
    \begin{subfigure}{0.32\textwidth}
        \centering
        \includegraphics[height=3.5cm, width=0.9\linewidth]{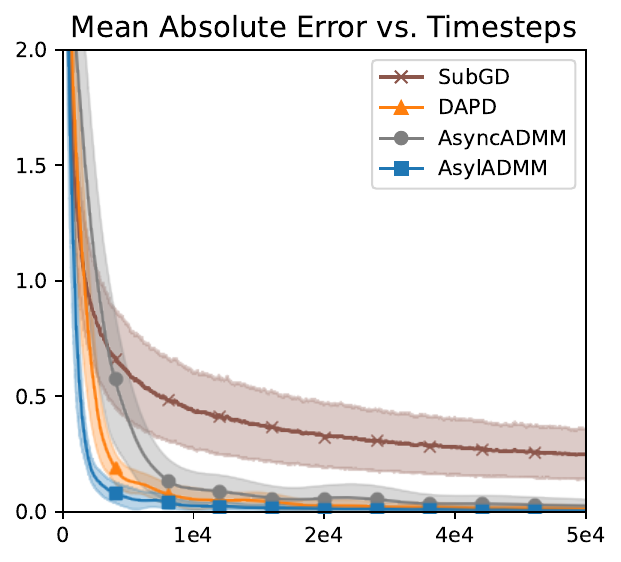}
        \caption{Watts-Strogatz}
    \end{subfigure}

    \vspace{1em}

    \begin{subfigure}{0.32\textwidth}
        \centering
        \includegraphics[height=3.5cm, width=0.9\linewidth]{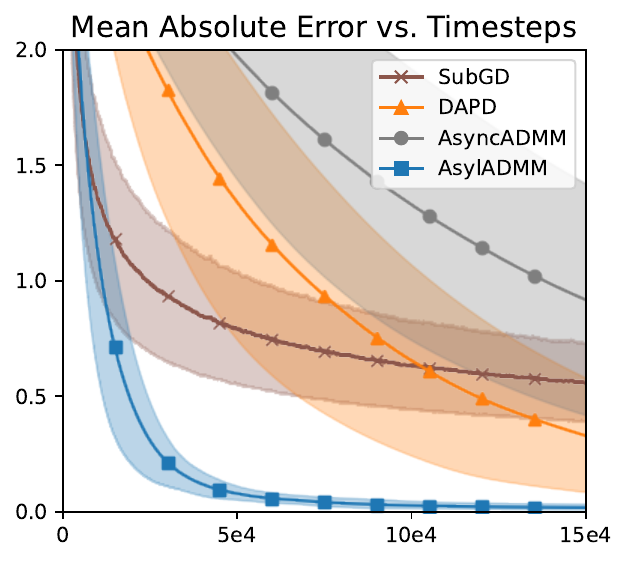}
        \caption{Geometric Graph $n=501$}
    \end{subfigure}\hfill
    \begin{subfigure}{0.32\textwidth}
        \centering
        \includegraphics[height=3.5cm, width=0.9\linewidth]{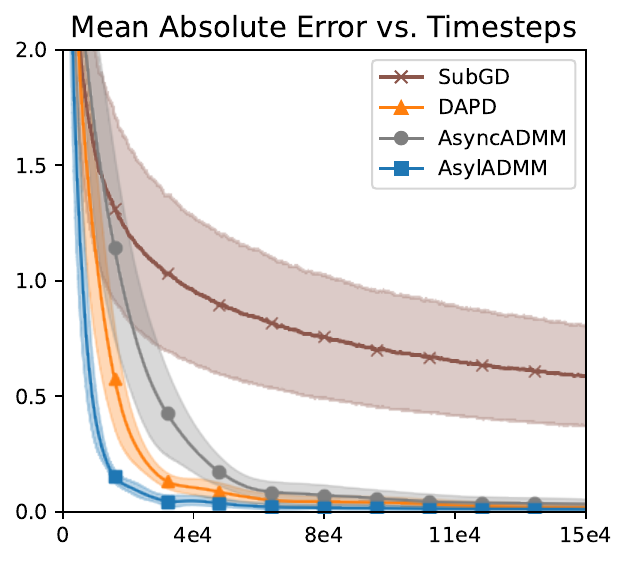}
        \caption{Watts-Strogatz $n=501$}
    \end{subfigure}\hfill
    \begin{subfigure}{0.32\textwidth}
        \centering
        \includegraphics[height=3.5cm, width=0.9\linewidth]{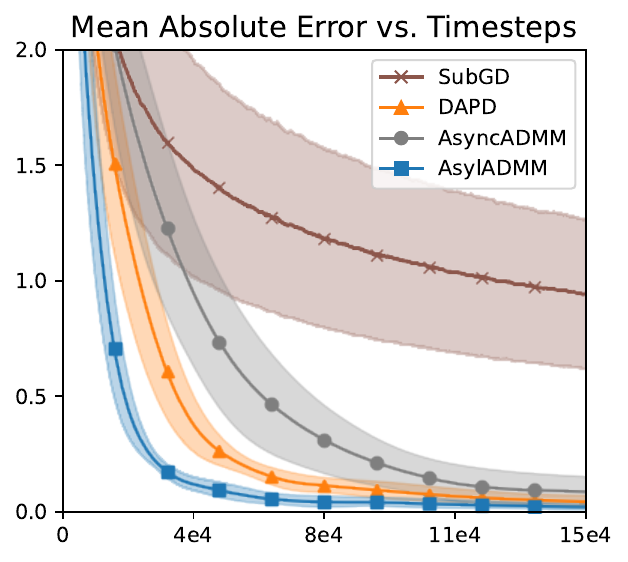}
        \caption{Watts-Strogatz $n=1001$}
    \end{subfigure}

     \vspace{1em}
     
    \begin{subfigure}{0.32\textwidth}
        \centering
        \includegraphics[height=3.5cm, width=0.95\linewidth]{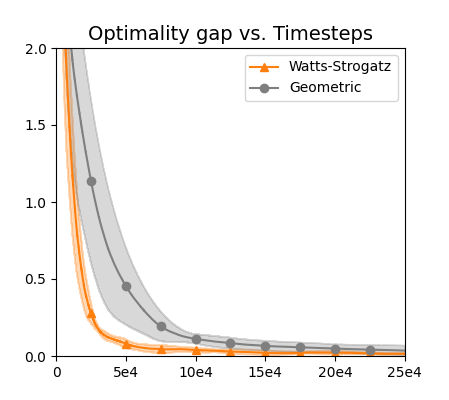}
        \caption{$n=1001$}
    \end{subfigure}\hfill
    \begin{subfigure}{0.32\textwidth}
        \centering
        \includegraphics[height=3.5cm, width=0.95\linewidth]{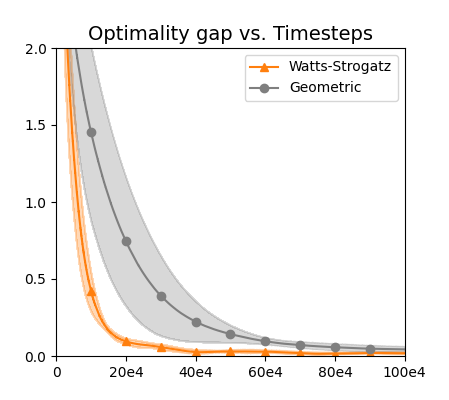}
        \caption{$n=5001$}
    \end{subfigure}\hfill
    \begin{subfigure}{0.32\textwidth}
        \centering
        \includegraphics[height=3.5cm, width=0.95\linewidth]{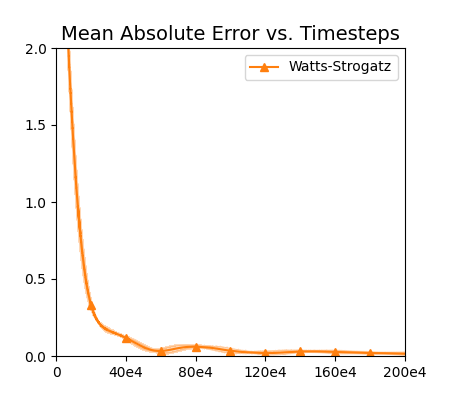}
        \caption{$n=10001$}
    \end{subfigure}

    \caption{Setup is the same as Plot (a) in Section 3 with different network sizes. In Plot (a-c), we take $n=101$. In plot (d-i), the size is specified in the caption.}
    \label{fig:size_plots}
\end{figure}

\subsection{Impact of $\alpha$-level on quantile Estimation using Setup of Plot (b)}

Here, we investigate how varying the quantile level $\alpha$ affects the convergence of the optimization algorithms under the experimental setup described in Section 3.3, Plot (b).

\begin{figure}[htbp]
    \centering
    \begin{subfigure}{0.32\textwidth}
        \centering
        \includegraphics[height=3.5cm, width=0.9\linewidth]{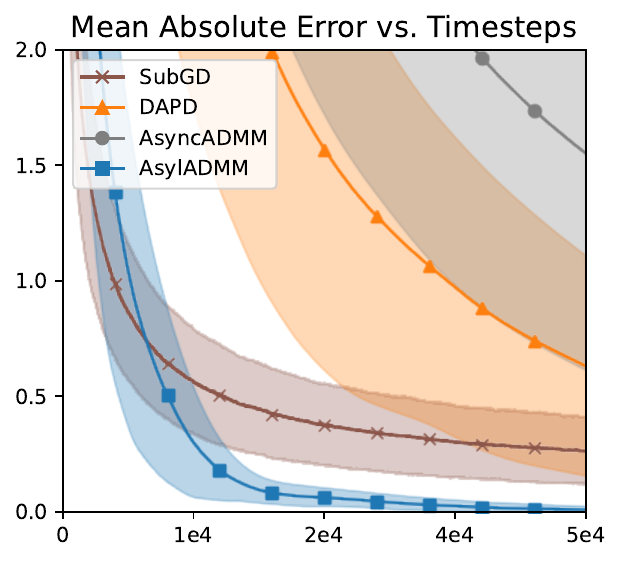}
        \caption{$\alpha=0.1$}
    \end{subfigure}\hfill
    \begin{subfigure}{0.32\textwidth}
        \centering
        \includegraphics[height=3.5cm, width=0.9\linewidth]{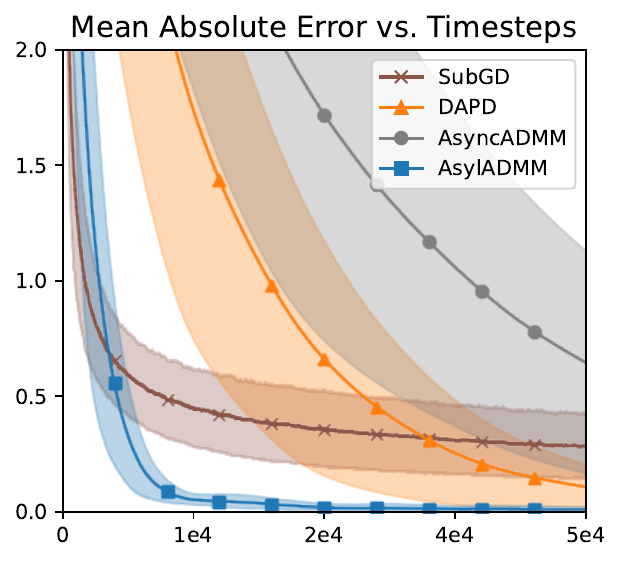}
        \caption{$\alpha=0.2$}
    \end{subfigure}\hfill
    \begin{subfigure}{0.32\textwidth}
        \centering
        \includegraphics[height=3.5cm, width=0.9\linewidth]{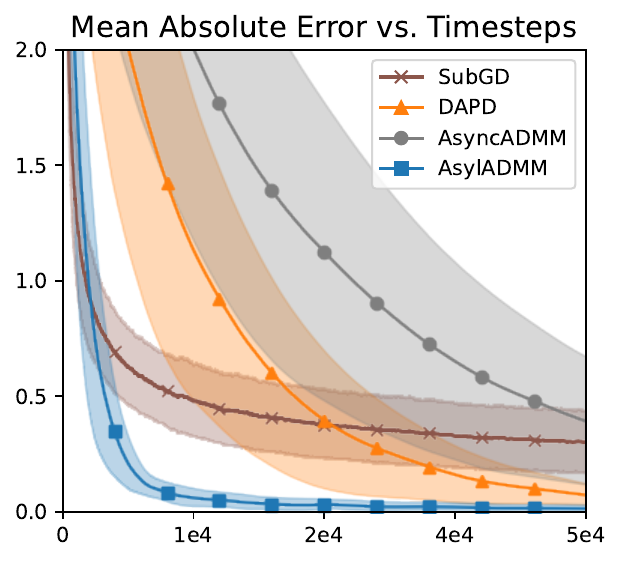}
        \caption{$\alpha=0.4$}
    \end{subfigure}
    
    \vspace{1em}
    
    \begin{subfigure}{0.32\textwidth}
        \centering
        \includegraphics[height=3.5cm, width=0.9\linewidth]{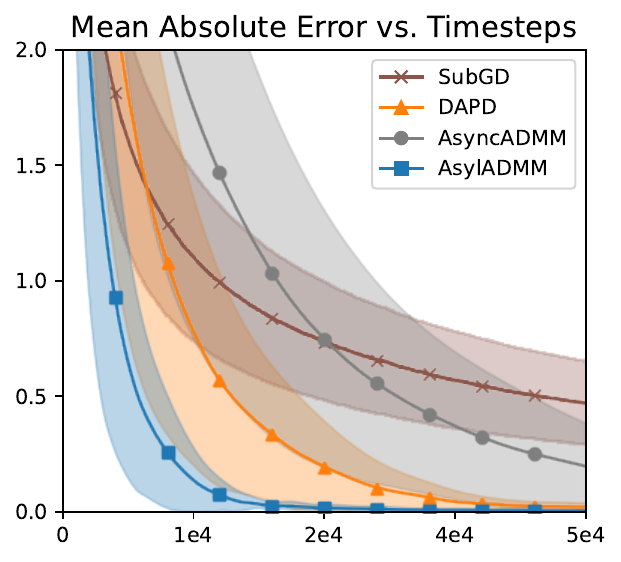}
        \caption{$\alpha=0.1$}
    \end{subfigure}\hfill
    \begin{subfigure}{0.32\textwidth}
        \centering
        \includegraphics[height=3.5cm, width=0.9\linewidth]{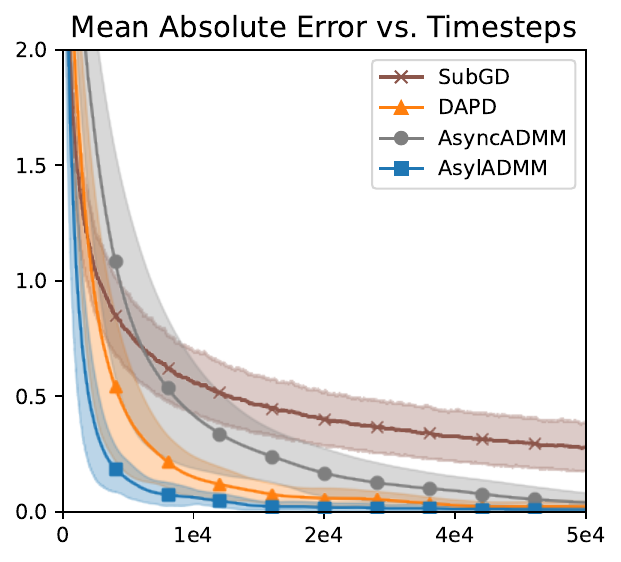}
        \caption{$\alpha=0.2$}
    \end{subfigure}\hfill
    \begin{subfigure}{0.32\textwidth}
        \centering
        \includegraphics[height=3.5cm, width=0.9\linewidth]{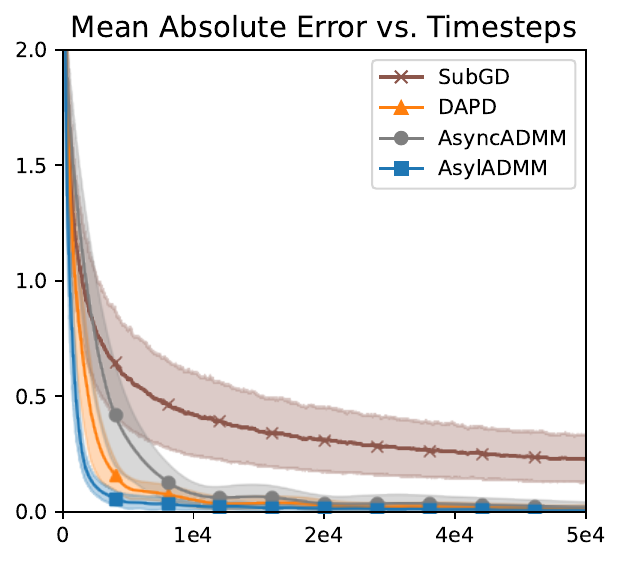}
        \caption{$\alpha=0.3$}
    \end{subfigure}

    \vspace{1em}
    
    \begin{subfigure}{0.32\textwidth}
        \centering
        \includegraphics[height=3.5cm, width=0.9\linewidth]{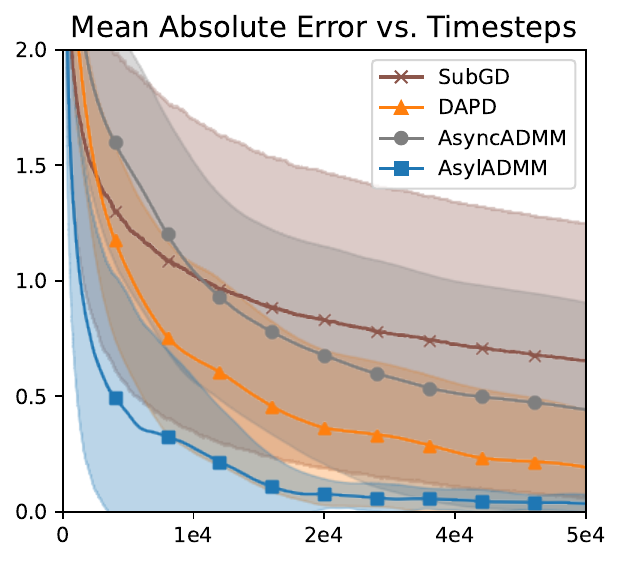}
        \caption{$\alpha=0.8$}
    \end{subfigure}\hfill
    \begin{subfigure}{0.32\textwidth}
        \centering
        \includegraphics[height=3.5cm, width=0.9\linewidth]{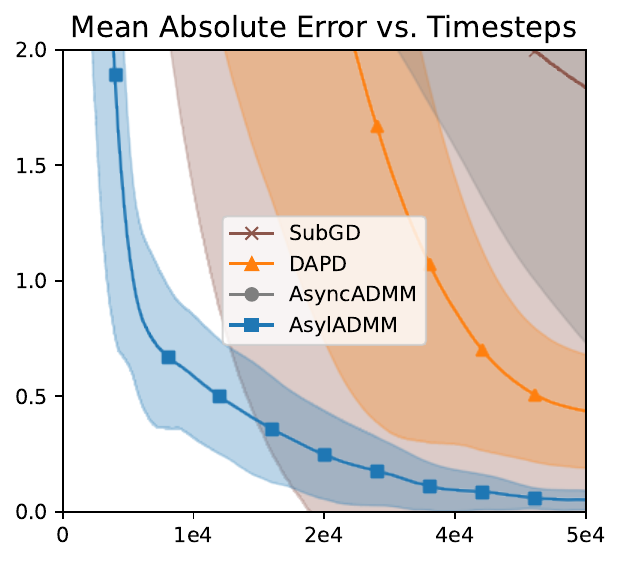}
        \caption{$\alpha=0.9$}
    \end{subfigure}\hfill
    \begin{subfigure}{0.32\textwidth}
        \centering
        \includegraphics[height=3.5cm, width=0.9\linewidth]{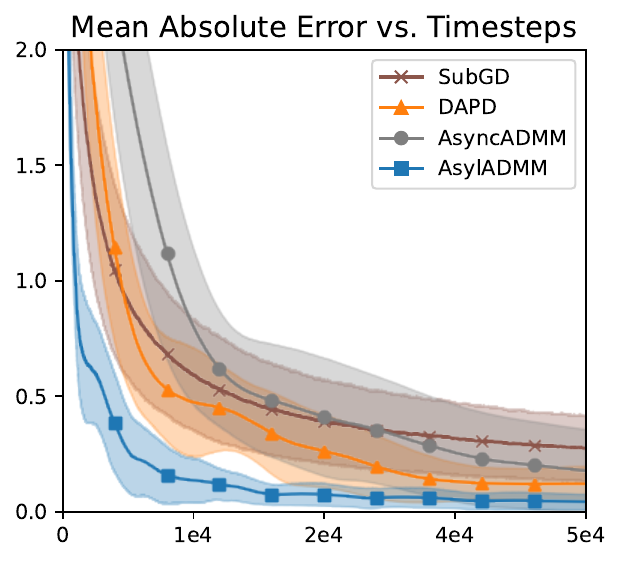}
        \caption{$\alpha=0.9$ without contamination}
    \end{subfigure}
    \caption{Setup is the same as plot (b) in Section 3 with $n=101$ nodes on various graph topologies. Plots (a), (b), and (c) use a Geometric graph. Plots (d), (e), and (f) use a Watts-Strogatz. Plots (g), (h), and (i) use a Geometric graph.}
    \label{fig:b_alpha}
\end{figure}

The results in \cref{fig:b_alpha} indicate that the algorithms show some sensitivity to the choice of $\alpha$, with degraded performance at extreme quantiles such as $\alpha = 0.1$ and $\alpha = 0.9$. This behavior can be attributed to contamination in the tails of the distribution. Performance degradation is particularly pronounced for high quantiles, which is consistent with the fact that the chosen contamination scheme primarily affects large values (see Plot (h)). Indeed, removing the data contamination yields more stable convergence (see Plot (i). Overall, AsylADMM demonstrates robust performance across varying $\alpha$ levels and under data contamination, outperforming the baseline methods, particularly at high quantiles in the presence of contaminated data.

\subsection{Impact of Graph's Connectivity}

We evaluate AsylADMM on median estimation on three graph structures with varying connectivity $c = \lambda_2/|E|$, where $\lambda_2$ denotes the second smallest eigenvalue of the graph Laplacian. The topologies are: Watts-Strogatz ($|E|=202$, $c = 2.28 \times 10^{-3}$), geometric ($|E|=507$, $c = 4.34 \times 10^{-4}$), and cycle ($|E|=101$, $c = 3.83 \times 10^{-5}$). Results demonstrate that graph connectivity directly influences convergence rate.

We also observe that AsylADMM converges quickly across most topologies when the data is not contaminated. Increasing the contamination level to $\varepsilon=0.4$ makes convergence significantly harder, which is not surprising, but the algorithm still eventually converges. We observe similar results on the Cauchy distribution.

\begin{figure}[htbp]
    \centering
    \begin{subfigure}{0.32\textwidth}
        \centering
        \includegraphics[height=3.5cm, width=0.9\linewidth]{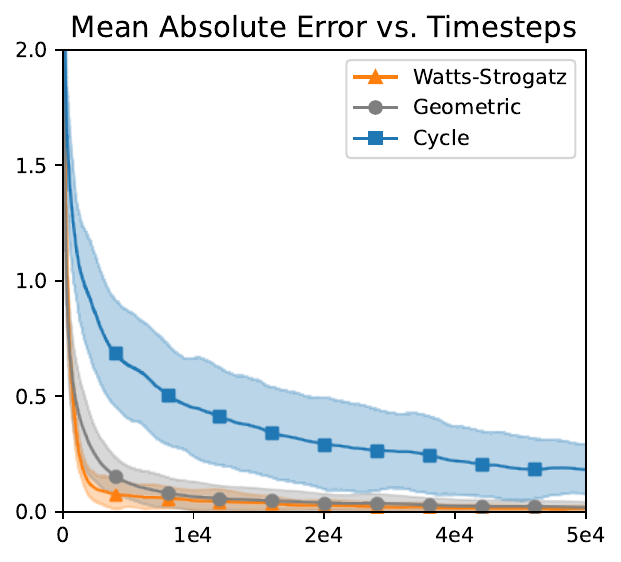}
        \caption{Standard Gaussian $\mathcal{N}(10, 3^2)$}
    \end{subfigure}\hfill
    \begin{subfigure}{0.32\textwidth}
        \centering
        \includegraphics[height=3.5cm, width=0.9\linewidth]{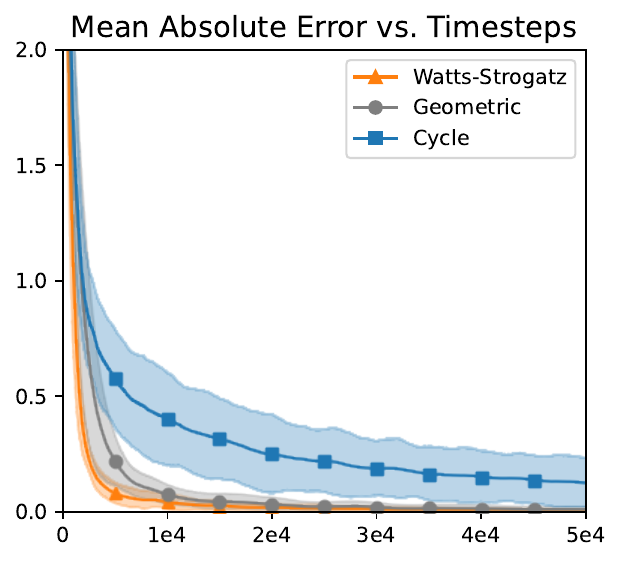}
        \caption{Contaminated Gaussian $\alpha=0.4$}
    \end{subfigure}\hfill
    \begin{subfigure}{0.32\textwidth}
        \centering
        \includegraphics[height=3.5cm, width=0.9\linewidth]{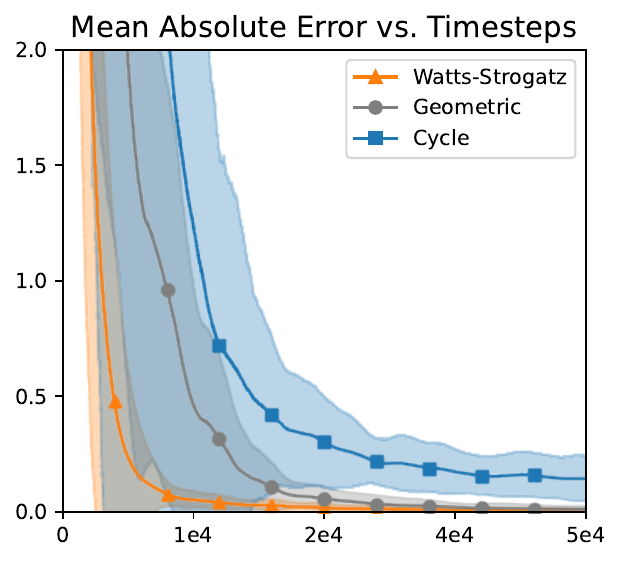}
        \caption{Cauchy with $\mu = 10$, $\gamma = 10$ }
    \end{subfigure}
    \caption{We use the same setup as in plot (c) of Section 3, varying only the data distribution.}
    \label{fig:c_distribution}
\end{figure}

\section{Additional Algorithms and Details for Section 4 }
\label{app:add-alg-four}

\subsection{Geometric Median}

Consider a decentralized geometric median problem. Each agent $i \in\{1, \cdots, n\}$ holds a vector $a_i \in \mathbb{R}^d$, and all the agents collaboratively calculate the geometric median $x \in \mathbb{R}^d$ of all $a_i$. This task can be formulated as solving the following minimization problem:
$$
\min _{x \in \mathbb{R}^d} \frac{1}{n} \sum_{i=1}^n\left\|x-a_i\right\|_2 .
$$
First, we recall the proximal operator for the Euclidean norm; see section 6.5.1 in \citet{parikh2014proximal}. Let $g=\|\cdot\|_2$, the Euclidean norm in $\mathbb{R}^d$. The corresponding proximal operator is given by
$$
\operatorname{prox}_{\lambda g}(v)=\left(1-\lambda /\|v\|_2\right)_{+} v= \begin{cases}\left(1-\lambda /\|v\|_2\right) v & \|v\|_2 \geq \lambda \\ 0 & \|v\|_2<\lambda .\end{cases}
$$
From section 2.2 in \citet{parikh2014proximal}, we can use the precomposition property for $f_i(x) = \|x - a_i\|_2 = g(x - a_i)$ to obtain $\text{prox}_{\lambda f_i}(v) = a_i + \text{prox}_{\lambda g}(v - a_i)$.

\subsection{Quantile-based Trimming versus Rank-based Trimming}


Quantiles play a key role in the construction of trimming rules for robust statistical methods. Trimming excludes observations falling in the tails of the empirical distribution, typically those below the $\alpha$-th quantile $q_{\alpha}$ or above the $(1-\alpha)$-th quantile $q_{1-\alpha}$. The resulting trimmed sample forms the basis for estimators, such as the trimmed mean, that are more robust to data contamination. 

We consider two trimming rules. In quantile-based trimming, an observation or gradient is included if it lies within $[q_{\alpha}, q_{1-\alpha}]$, requiring two instances of AsylADMM for estimating each quantile. In rank-based trimming, each node maintains a local rank estimate and includes its observation or gradient if the corresponding rank falls within $[m+1, n-m]$, where $n$ denotes the sample size and $m=\lfloor \alpha n \rfloor$ \cite{van2025robust}. For this rank-based rule, we use Asynchronous GoRank \cite{van2025asynchronous}. For trimmed mean estimation, we use Adaptive GoTrim \cite{van2025asynchronous} with the two rules described above. The aforementioned algorithms are outlined below, along with an analysis showing how trimming weight errors are controlled by quantile estimation and ranking errors. We adapt the Adaptive GoTrim algorithm presented in \citet{van2025asynchronous} to include the quantile-based rule. The trimming rule \texttt{trim} corresponds to the chosen algorithm (either GoRank or AsylADMM) and its associated weight rule. The function \texttt{init} handles algorithm initialization, \texttt{update} performs the local update step, and \texttt{exchange} executes the communication step in the gossip protocol (either averaging or swapping). The function \texttt{estimate} applies the appropriate weight function: for quantile-based trimming, $W_k(t) = \mathbb{I}{\{ X_k \in [q_k^{\alpha}(t), q_k^{1-\alpha}(t)]\}}$, while for rank-based trimming, $W_k(t) = \mathbb{I}{\{R_k(t) \in I_{n,\alpha}\}}$.

\begin{algorithm}[htbp]
        \caption{Asynchronous GoRank}
        \label{alg:async-gorank}
        \begin{algorithmic}[1]
        \STATE \textbf{Init:} For each \(k\in[n]\), \(Y_k \gets X_k\), \(R'_k \gets 0\), $C_k \gets 1$.  
        \FOR{\(t=0, 1, \ldots\)}
        \STATE Draw \(e=(i, j) \in E\) with probability $p_e>0$.
        \FOR{\(k\in \{i, j\}\)}
        \STATE Set $R'_k \leftarrow\left(1-1/C_k\right) R'_k +(1/C_k)\mathbb{I}_{\{X_k > Y_k\}}$.
        \STATE Update rank estimate: \(R_k \leftarrow n R'_k + 1\).
        \STATE Set $C_k \gets C_k + 1$.
        \ENDFOR
        \STATE Swap auxiliary observation: \(Y_i \leftrightarrow Y_j\). 
        \ENDFOR
        \end{algorithmic}
\end{algorithm}

\begin{algorithm}[htbp]
\caption{Adaptive GoTrim}
\label{alg:gotrim-bias}
\begin{algorithmic}[1]
\STATE \textbf{Input:} Trimming level $\alpha\in (0,1/2)$, choice of trimming rule \texttt{trim}  (\textit{e.g.}, rank-based or quantile-based).
\STATE \textbf{Init:} \(\forall k\), \(N_k \leftarrow 0\), \(M_k \leftarrow 0\) \(W_k \leftarrow 0\) and \(R_k \leftarrow \texttt{trim}.\texttt{init}(k)\).
\FOR{\(t=1, 2, \ldots\)}
\STATE Draw \(e=(i, j) \in E\) with probability $p_e > 0$.
\FOR{\(k\in \{i,j\}\)}
\STATE Update rank: \(R_k \leftarrow \texttt{trim.update}(k, t)\).
\STATE Set \(W^{\prime}_k \leftarrow \texttt{trim.estimate}(k, t)\).
\STATE Set \(N_k \leftarrow N_k + (W^{\prime}_k - W_k) \cdot X_k\).
\STATE Set \(M_k \leftarrow M_k + (W^{\prime}_k - W_k)\)
\STATE Set \(W_k \leftarrow W^{\prime}_k\).
\ENDFOR
\STATE Set \(N_i, N_j \leftarrow (N_i + N_j)/2\).
\STATE Set \(M_i, M_j \leftarrow (M_i + M_j)/2\).
\STATE Swap auxiliary variables: \texttt{trim.exchange(i, j)}
\ENDFOR
\STATE \textbf{Output:} Estimate of trimmed mean \(N_k/\operatorname{max}(1, M_k)\).
\end{algorithmic}
\end{algorithm}

\textbf{Analysis of Quantile-based Trimming.} Denote $q_1 = q^{\alpha}$ and $q_2 = q^{1-\alpha}$ with their corresponding estimates at node $k$ and iteration $t$, $q_1(t) = q_k^{\alpha}(t)$ and $q_2(t) = q_k^{1-\alpha}(t)$. We now analyze the conditions under which the estimated weights $W_k(t) = \mathbb{I}\{{q_1(t) \leq X_k \leq q_2(t)}\}$ differ from their true weights $w_k = \mathbb{I}\{{q_1 \leq X_k \leq q_2}\}$. Let $\epsilon_1(t) = |q_1(t) - q_1|$ and $\epsilon_2(t) = |q_2(t) - q_2|$ denote the estimation errors of the quantiles, and let $d_1 = |X_k - q_1|$ and $d_2 = |X_k - q_2|$ denote the distances from $X_k$ to the boundary quantiles. A disagreement $W_k(t) \neq w_k$ can only occur when $X_k$ falls within an $\epsilon_j(t)$-neighborhood of at least one boundary, i.e., $W_k(t) \neq w_k$ implies $X_k \in B(q_1, \epsilon_1(t)) \cup B(q_2, \epsilon_2(t))$. Applying the union bound yields $$\mathbb{E}|W_k(t)-w_k| = \mathbb{P}(W_k(t) \neq w_k) \leq \mathbb{P}(d_1 \leq \epsilon_1(t)) + \mathbb{P}(d_2 \leq \epsilon_2(t)).$$ Thus, the probability of error is controlled by the estimation errors $\epsilon_1(t)$ and $\epsilon_2(t)$ combined with the distance of $X_k$ to the trimming quantiles. 

\textbf{Analysis of Rank-based Trimming}. The analysis directly follows from Lemma 1 in \citet{van2025robust}. Recall $I_{n, \alpha}=[a, b]$ and define
$$
\gamma_k= \begin{cases}\min \left(b-r_k, r_k-a\right), & \text { if } a \leq r_k \leq b, \\ a-r_k, & \text { if } r_k<a, \\ r_k-b, & \text { if } r_k>b .\end{cases}
$$
Observe that $\gamma_k \geq \frac{1}{2}$ since $r_k$ is always discrete. We now analyze the three different cases.
Case 1: $a \leq r_k \leq b$. Then, $\left|p_k(t)-\mathbb{I}_{\left\{r_k \in I_{n, \alpha}\right\}}\right|=\left|1-p_k(t)\right|=\mathbb{P}\left(R_k(t) \notin I_{n, \alpha}\right)$. Since we have $\mathbb{P}\left(\left|R_k(t)-r_k\right| \leq \gamma_k\right) \leq \mathbb{P}\left(R_k(t) \in I_{n, \alpha}\right)$, it follows that the probability can be upper-bounded as $\mathbb{P}\left(R_k(t) \notin I_{n, \alpha}\right) \leq \mathbb{P}\left(\left|R_k(t)-r_k\right|>\gamma_k\right) \leq \mathbb{P}\left(\left|R_k(t)-r_k\right| \geq \gamma_k\right)$.
Case 2: $r_k<a$. Here, $\left|p_k(t)-\mathbb{I}_{\left\{r_k \in I_{n, \alpha}\right\}}\right|=p_k(t)=\mathbb{P}\left(R_k(t) \in I_{n, \alpha}\right)$. Since it holds that $\mathbb{P}\left(\left|R_k(t)-r_k\right|<\gamma_k\right) \leq \mathbb{P}\left(R_k(t) \notin I_{n, \alpha}\right)$, we obtain $\mathbb{P}\left(R_k(t) \in I_{n, \alpha}\right) \leq \mathbb{P}\left(\left|R_k(t)-r_k\right| \geq \gamma_k\right)$.
Case 3: $r_k>b$ This case follows symmetrically from the previous one.
In all cases, we have $\left|\mathbb{P}\left(R_k(t) \in I_{n, \alpha}\right)-\mathbb{I}_{\left\{r_k \in I_{n, \alpha}\right\}}\right| \leq \mathbb{P}\left(\left|R_k(t)-r_k\right| \geq \gamma_k\right)$.

\begin{figure}[htbp]
    \centering
    \begin{subfigure}{0.32\textwidth}
        \centering
        \includegraphics[height=3.5cm, width=0.9\linewidth]{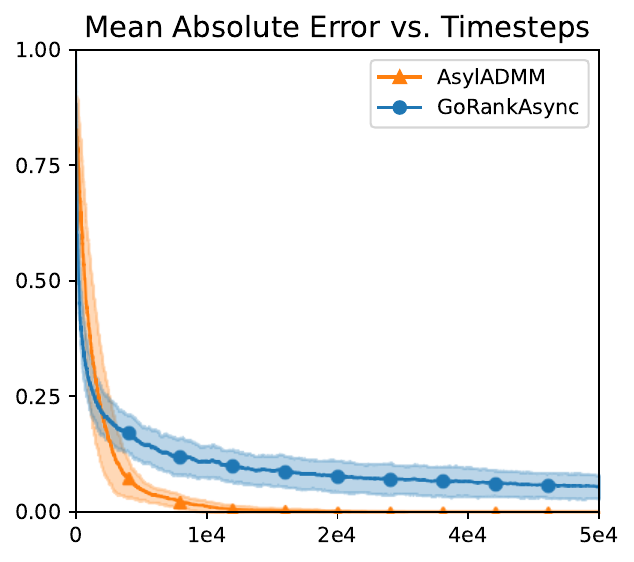}
        \caption{$\alpha = 0.2$, $\varepsilon = 0.1$}
    \end{subfigure}\hfill
    \begin{subfigure}{0.32\textwidth}
        \centering
        \includegraphics[height=3.5cm, width=0.9\linewidth]{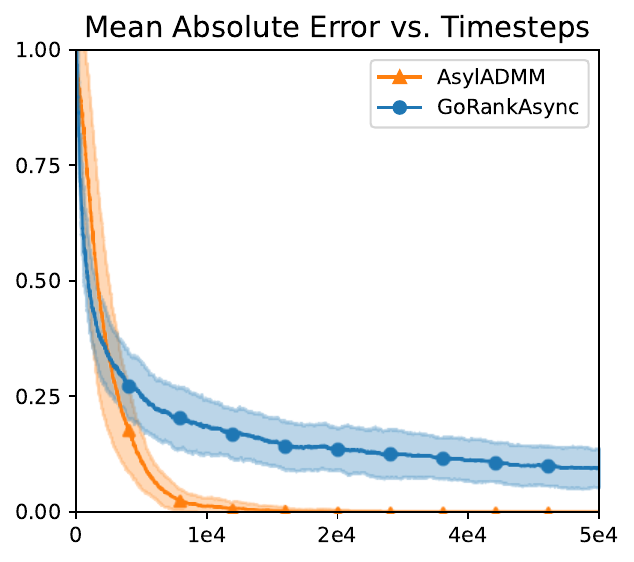}
        \caption{$\alpha = 0.3$, $\varepsilon = 0.2$}
    \end{subfigure}\hfill
    \begin{subfigure}{0.32\textwidth}
        \centering
        \includegraphics[height=3.5cm, width=0.9\linewidth]{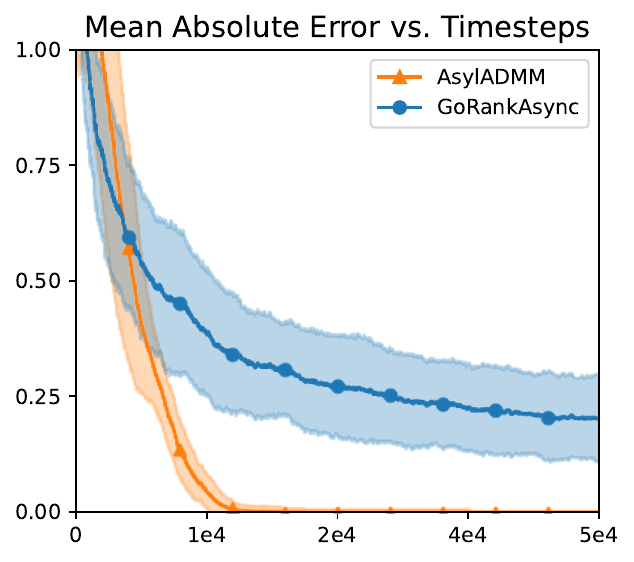}
        \caption{$\alpha = 0.4$, $\varepsilon = 0.3$}
    \end{subfigure}

    \caption{Comparison of rank-based method versus quantile-based method on trimmed means}
    \label{fig:trimmed-mean-app}
\end{figure}

\subsection{Estimation of Data Depths and its $\alpha$-Quantile}

In $\mathbb{R}^d$ for $d \geq 2$, unlike the univariate case, there is no natural ordering of vectors. Data depth functions address this by assigning each point a value reflecting how central it lies relative to a data cloud, thereby generalizing the concept of rank to the multivariate setting \cite{mosler2013depth, serfling2006depth}. This enables the construction of $\alpha$-trimmed multivariate estimators, which discard the fraction $\alpha$ of points with the lowest depth. Note that the $L_2$-depth \cite{mosler2013depth} is particularly convenient for our decentralized setting. We use trimming parameter $\alpha=0.3$ and observe strong empirical convergence, though not as fast as the geometric median. Nevertheless, this is an encouraging indication that depth-based trimming is effective and could be applied to more challenging problems like robust optimization.

We recall the definition of the $L_2$-depth of $\bz$:
\[
D^{L_2}\left(\bz \mid \bx_1, \ldots, \bx_n\right)=\left(1+\frac{1}{n} \sum_{i=1}^n\left\|\bz-\bx_i\right\|\right)^{-1} \enspace.
\]
A key observation is that the mean $(1/n) \sum_{i=1}^n\left\|\bz-\bx_i\right\|$ can be estimated using a running average at each node. The resulting procedure is outlined below (see GoDepth); it builds on ideas similar to those of GoRank. However, since depth values are only meaningful in relation to one another, we must also rank them across nodes or find the $\alpha$-quantile of the estimated depths. Once each node has estimated its $L_2$ depth, we can apply AsylADMM to estimate the $\alpha$-quantile of these depths, which then serves as the trimming threshold for computing the $\alpha$-trimmed multivariate mean. In fact, since AsylADMM is an optimization algorithm, we can do this simultaneously by updating the input data of AsylADMM with the latest depth estimation at each step (see algorithm below).

\begin{algorithm}[htbp]
    \caption{Asynchronous GoDepth (L2 Depth)}
    \label{alg:async-l2depth}
    \begin{algorithmic}[1]
    \STATE \textbf{Init:} For each \(k\in[n]\), \(\by_k \gets \bx_k\), \(z_k \gets 0\), $c_k \gets 1$.  
    \FOR{\(t=0, 1, \ldots\)}
    \STATE Draw \(e=(i, j) \in E\) with probability $p_e>0$.
    \STATE Swap auxiliary observations: \(\by_i \leftrightarrow \by_j\). 
    \FOR{\(k\in \{i, j\}\)}
    \STATE Set $c_k \gets c_k + 1$.
    \STATE Set $z_k \leftarrow\left(1-1/c_k\right) z_k + (1/c_k)\|\bx_k - \by_k\|$.
    \STATE Update depth estimate: \(d_k \leftarrow 1/(1 + z_k)\).
    \ENDFOR
    \ENDFOR
    \end{algorithmic}
\end{algorithm}

\begin{algorithm}
    \caption{AsylADMM+GoDepth: estimation of $\alpha$-quantile of data depths}
    \label{alg:asyl_godepth}
    \begin{algorithmic}[1]
        \STATE \textbf{Input:} Initial vectors $a_1, \ldots, a_n$; step size $\rho > 0$.
        \STATE \textbf{Initialization:} For all nodes $k=1, \ldots, n$:
        \STATE \(\texttt{godepth}.\texttt{init}(k)\) ; \(\texttt{asyladmm}.\texttt{init}(k)\).
        \FOR{$t=0, 1, \ldots$}
            \STATE Draw $e = (i,j) \in E$ with probability $p_e$.
            \FOR{$k \in \{i,j\}$}
                \STATE \(d_k  \gets \texttt{godepth}.\texttt{update}(k)\)  \texttt{// Standard update}
                \STATE \(x_k \gets \texttt{asyladmm}.\texttt{update}(k, d_k)\)  \texttt{// Update using new depth estimate $d_k$ instead of $a_k$}
            \ENDFOR
        \ENDFOR
    \end{algorithmic}
\end{algorithm}

\newpage
\section{DAPD versus AsylADMM on Lasso Regression Task}
\label{app:lasso}

\subsection{Problem and Experiment Setup: Lasso Regression}

In Lasso regression setting, each local objective has the form $  f_n(x) = \tfrac{1}{2}\|A_n x - b_n\|_2^2$, and a local $\ell_1$-regularizer $g_n(x) = \mu \|x\|_1$. At each iteration, an edge is selected and the connected nodes communicate and perform a local update. 

For the numerical experiments, we generate a ground-truth vector $x_{\text{true}} \in \mathbb{R}^d$ with entries drawn independently from the standard normal distribution. For each agent $i \in \{1, \dots, n\}$, we construct a local measurement matrix $A_i \in \mathbb{R}^{m \times d}$ with i.i.d. standard Gaussian entries and a corresponding observation vector $b_i = A_i x_{\text{true}} + \varepsilon_i$, where $\varepsilon_i \in \mathbb{R}^m$ is additive Gaussian noise with standard deviation $0.1$. The true Lasso regression coefficient, used as a reference solution, is obtained by solving the centralized Lasso problem with regularization parameter $\mu = 0.5$ using scikit-learn's \texttt{Lasso} estimator with no intercept. Each agent is initialized with the ordinary least squares solution of its local system, $x_i^{(0)} = \arg\min_{x} \|A_i x - b_i\|_2^2$.  

The gradient $\nabla f_k$ is Lipschitz continuous with constant
\begin{equation}\label{eq:lipschitz}
  L_k = \|A_k^\top A_k\| = \lambda_{\max}(A_k^\top A_k) = \sigma_{\max}(A_k)^2,
\end{equation}
where $\sigma_{\max}(A_k)$ denotes the largest singular value of~$A_k$. Let $\bar{L} = \max_{k} L_k$ be the largest Lipschitz constant across all agents
and let $d_{\min} = \min_{k} d_k$ be the minimum degree in the communication graph. According to \cite{bianchi2015coordinate}, convergence of the primal--dual iterations requires the step size~$\tau$ to satisfy
\begin{equation}\label{eq:tau_condition}
  \tau^{-1} - \rho^{-1} > \frac{\bar{L}} {2\,d_{\min}},
\end{equation}
which, after rearranging, yields the upper bound for $\tau < \tau_{\max}$. Here, for simplicity, we choose $\rho \in [0.1, 1.0]$ and set $\tau = 0.99 \cdot \tau_{\max}$. Algorithm~\ref{alg:step_size} summarizes the computation.

\begin{algorithm}[ht]
\caption{Step size selection for fixed $\rho$}\label{alg:step_size}
\begin{algorithmic}[1]
\STATE \textbf{Input:} Local data matrices $\{A_k\}_{k=1}^{n}$; penalty $\rho > 0$.
\FOR{$k = 1, \dots, n$}
  \STATE $L_k \gets \sigma_{\max}(A_k)^2$ \quad \texttt{// Lipschitz constant of $\nabla f_k$}
\ENDFOR
\STATE $\bar{L} \gets \max_{k} L_k$, \, $d_{\min} \gets \min\{d_k : d_k > 0\}$
\STATE $\tau \gets \left(\rho^{-1} + \bar{L}\,/\,(2\,d_{\min})\right)^{-1}$
\STATE Return $\tau$
\end{algorithmic}
\end{algorithm}

\subsection{DAPD for Lasso Regression}
\label{app:async_dapd}

We present the DAPD algorithm introduced in \cite{bianchi2015coordinate} applied to the distributed Lasso problem. The algorithm is outlined in \cref{alg:lasso_dapd}. The proximal operator for the weighted $\ell_1$-norm with threshold $\alpha = \tau\,\mu / d_p$, with $d_p$ the node degree, is the component-wise soft-thresholding operator:
\[
  \bigl[\operatorname{prox}_{\alpha\|\cdot\|_1}(v)\bigr]_j
  = \operatorname{sign}(v_j)\,\max\{|v_j| - \alpha,\; 0\}.
\]

\begin{algorithm}[ht]
\caption{DAPD for Lasso regression}\label{alg:lasso_dapd}
\begin{algorithmic}[1]
\STATE \textbf{Input:} Data $\{(A_k, b_k)\}_{k=1}^{n}$;
       parameters $0 <\rho \leq 1$, $\tau > 0$ satisfying~\eqref{eq:tau_condition}; regularization $\mu$.
\STATE \textbf{Init:} $x_k \in \mathbb{R}^p$ for all $k$;
           $\lambda_{\{k,l\}}(k) = \mathbf{0}$, $\bar{x}_k^{(l)} \gets x_k$ for all $(k,l) \in E$.
\FOR{$t = 0, 1, 2, \dots$}
  \STATE Select a random edge $(i,j) \in E$

   \STATE Update stored values: $\bar{x}_i^{(j)} \gets x_j$, $\bar{x}_j^{(i)} \gets x_i$
  \STATE \textbf{Dual update:}
      \STATE $\lambda_{\{i,j\}}^{+}(i) \gets
        \dfrac{\lambda_{\{i,j\}}(i) - \lambda_{\{i,j\}}(j)}{2}
        + \dfrac{x_i - \bar{x}_i^{(j)}}{2\rho}$, \, $\lambda_{\{i,j\}}^{+}(j) \gets -\,\lambda_{\{i,j\}}^{+}(i)$
    \STATE \textbf{Primal update:}

   \FOR{$p \in \{i, j\}$}
    \STATE $v_p \gets \left(1 - \dfrac{\tau}{\rho}\right) x_p
      - \dfrac{\tau}{d_p}\,\nabla f_p(x_p)
      + \dfrac{\tau}{d_p}\displaystyle\sum_{m \sim p}
        \left(\frac{\bar{x}_p^{(m)}}{\rho} + \lambda_{\{p,m\}}(m)\right)$
    \STATE $x_p^{+} \gets \operatorname{prox}_{\tau\,\mu / d_p \,\|\cdot\|_1}
      \!\bigl(v_p\bigr)$
\ENDFOR
\STATE Update stored values: $\bar{x}_i^{(j)} \gets x_j^+$, $\bar{x}_j^{(i)} \gets x_i^+$
 
\ENDFOR
\end{algorithmic}
\end{algorithm}

\subsection{Generalized AsylADMM for Lasso Regression}
\label{app:async_asyladmm}

We describe AsylADMM applied to the distributed Lasso problem, where the main difference with DAPD is the choice of heuristics for the dual variables and neighbor average.  The algorithm is outlined in \cref{alg:lasso_asyl}.

\begin{algorithm}[ht]
\caption{AsylADMM for Lasso regression}\label{alg:lasso_asyl}
\begin{algorithmic}[1]
\STATE \textbf{Input:} Data $\{(A_k, b_k)\}_{k=1}^{n}$;
       parameters $0 <\rho \leq 1$, $\tau > 0$ satisfying~\eqref{eq:tau_condition}; regularization $\mu$.
\STATE \textbf{Init:} $x_k \in \mathbb{R}^p$, $\hat \mu_k = \mathbf{0}$ for all $k$.
\FOR{$t = 0, 1, 2, \dots$}
  \STATE Select a random edge $(i,j) \in E$
  \STATE \textbf{Dual update:}
   \STATE $\hat{\mu}_i^{+} \gets \hat{\mu}_i
    + \dfrac{x_j - x_i}{2\,\rho\,d_i}$, \, $\hat{\mu}_j^{+} \gets \hat{\mu}_j
    + \dfrac{x_i - x_j}{2\,\rho\,d_j}$

    \STATE \textbf{Primal update:}

   \FOR{$p \in \{i, j\}$}
    \STATE $v_p \gets \left(1 - \dfrac{\tau}{\rho}\right) x_p
      - \dfrac{\tau}{d_p}\,\nabla f_p(x_p)
      + \tau 
        \left(\frac{\tilde x_p}{\rho} + \hat \mu_p^+\right)$ where $\tilde x_i = x_j$ and $\tilde x_j = x_i$ 
    \STATE $x_p^{+} \gets \operatorname{prox}_{\tau\,\mu / d_p \,\|\cdot\|_1}
      \!\bigl(v_p\bigr)$
\ENDFOR

\ENDFOR
\end{algorithmic}
\end{algorithm}

\subsection{Additional Numerical Experiments for Lasso Regression using Setup of Fig 1c}

We provide additional experiments using setup of Fig 1c with different dimension parameter, network size and network type. We set $\mu = 0.5$, and generate a random dataset for the task. We generate a ground-truth vector $x^{\star} \sim \mathcal{N}(0, I_p)$. We generate a ground-truth vector $x^{\star} \sim \mathcal{N}(0, I_p)$.
Each agent $k$ receives a local measurement matrix $A_k \in \mathbb{R}^{1 \times p}$ with i.i.d.\ standard normal entries and observations
$b_k = A_k x^{\star} + \varepsilon_k$, where $\varepsilon_k \sim \mathcal{N}(0, 0.01)$.


\begin{figure*}[ht]
    \centering
    \begin{subfigure}{0.32\textwidth}
        \centering
        \includegraphics[height=\myheight, width=\mywidth]{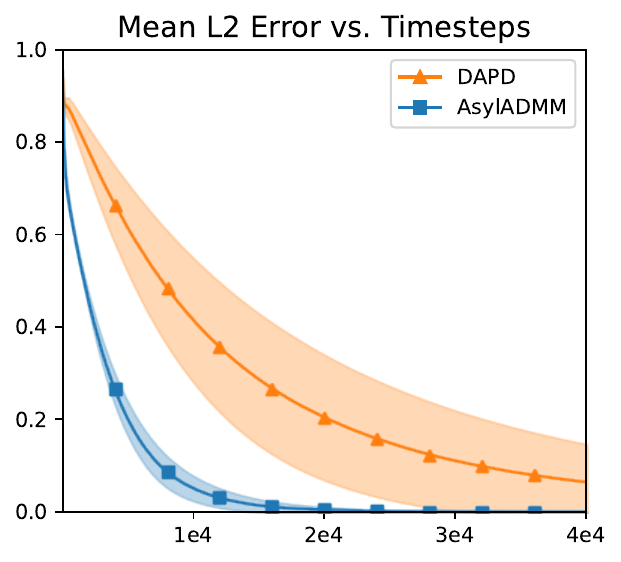}
        \caption{$p=5$, Geometric graph}
    \end{subfigure}\hfill
    \begin{subfigure}{0.32\textwidth}
        \centering
        \includegraphics[height=\myheight, width=\mywidth]{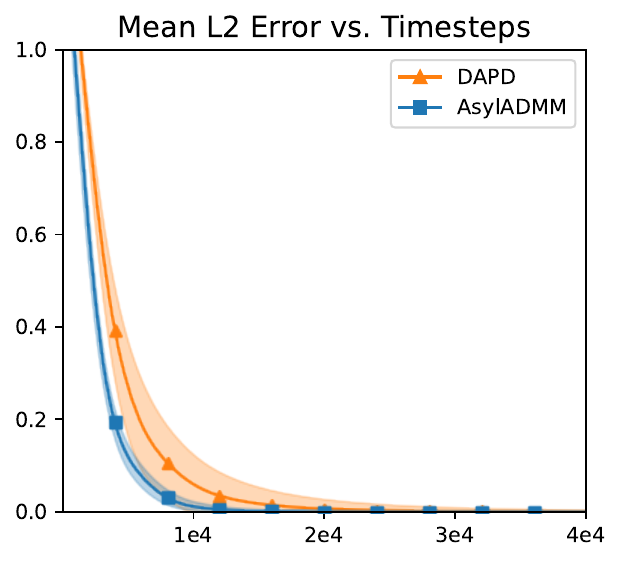}
        \caption{$p=10$, Watts-Strogatz graph}
    \end{subfigure}\hfill
    \begin{subfigure}{0.32\textwidth}
        \centering
        \includegraphics[height=\myheight, width=\mywidth]{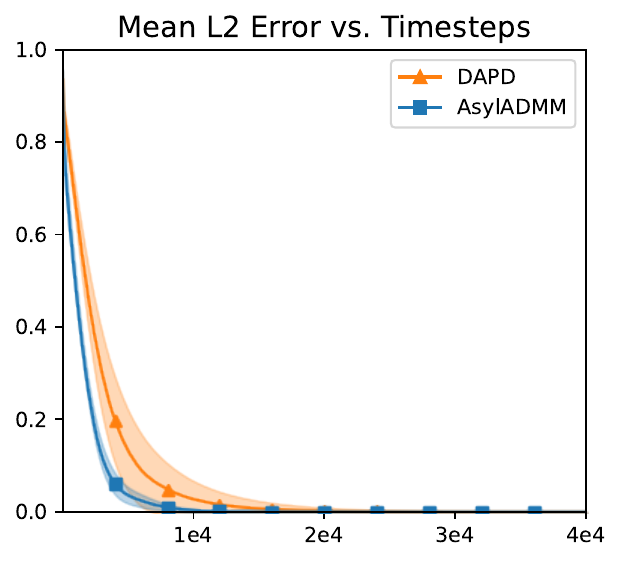}
        \caption{$p=5$, Watts-Strogatz graph}
    \end{subfigure}
    \caption{Convergence performance of AsylADMM compared to DAPD on lasso regression task. All plots show mean L1 error (with respect to true Lasso solution using \texttt{scikit-learn}) versus number of iterations, averaged over 10 trials with corresponding standard deviation.}
    \label{fig:lasso}
\end{figure*}


\section{Additional Experiments for Section 4}
\label{app:add-exp-4}

\subsection{Impact of contamination level $\varepsilon$ using Setup of Fig 2a}

We evaluate geometric median estimation using a two-dimensional contaminated Gaussian distribution. The clean data is drawn from a bivariate normal distribution with mean $\boldsymbol{\mu} = (10, 10)^\top$ and covariance matrix $\boldsymbol{\Sigma} =[[5,3],[3,5]]$. To simulate adversarial contamination, we introduce outliers uniformly distributed on a circular arc centered at the mean with radius 30. The contamination rate is $\varepsilon = 0.3$, and the total sample size is $n = 101$. We observe that contamination level has an impact on the convergence rate. 

\begin{figure}[htbp]
    \centering
    \begin{subfigure}{0.32\textwidth}
        \centering
        \includegraphics[height=3.5cm, width=0.9\linewidth]{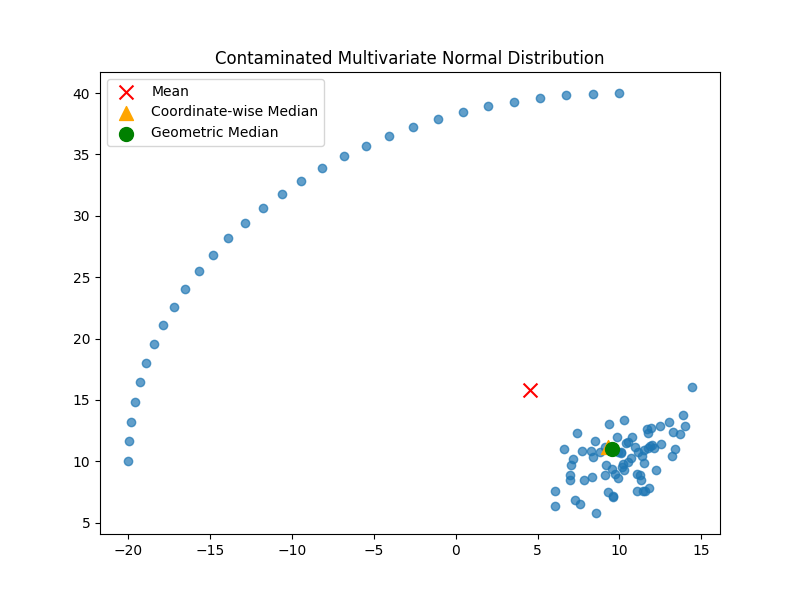}
        \caption{Contaminated 2D gaussian}
    \end{subfigure}\hfill
    \begin{subfigure}{0.32\textwidth}
        \centering
        \includegraphics[height=3.5cm, width=0.9\linewidth]{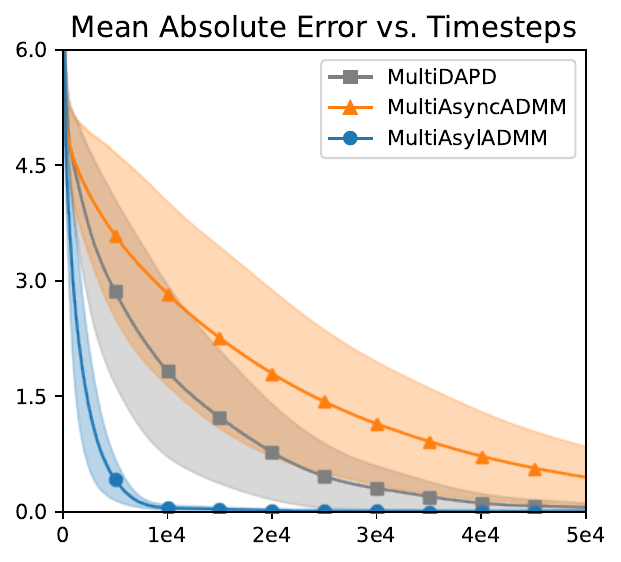}
        \caption{$\varepsilon=0.2$}
    \end{subfigure}\hfill
    \begin{subfigure}{0.32\textwidth}
        \centering
        \includegraphics[height=3.5cm, width=0.9\linewidth]{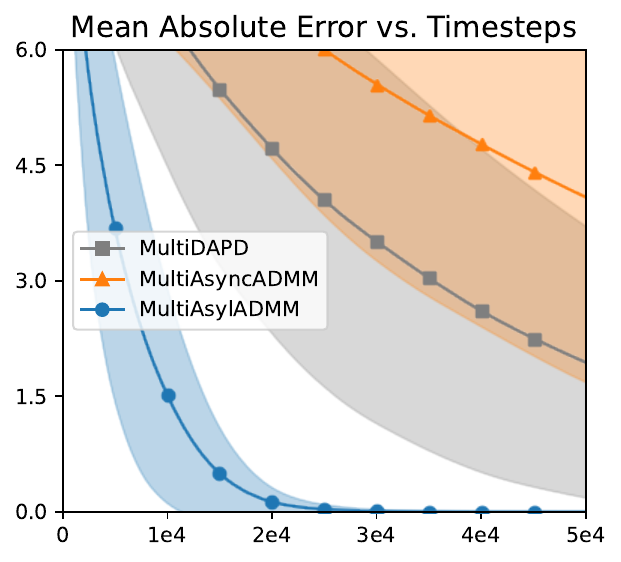}
        \caption{$\varepsilon=0.4$}
    \end{subfigure}

    \caption{We compare existing optimization methods on geometric median estimation under various contamination levels.}
    \label{fig:geometric-median}
\end{figure}

\subsection{Impact of trimming level $\alpha$ and contamination level $\varepsilon$ for Trimmed Means Estimation}

We compare rank-based versus quantile-based trimming using the same setup as Plot (b) in \cref{fig:first_fig}. Specifically, we consider a contaminated Gaussian distribution with contamination level $\varepsilon = 0.2$ (\textit{i.e.}, 20\% of the data is contaminated) and a trimming level of $\alpha = 0.3$. We evaluate performance using the mean absolute error of the estimated weights, which indicate whether each observation should be included. For quantile-based trimming, the weight at node $k$ is defined as $W_k(t) = \mathbb{I}{\{ X_k \in [q_k^{\alpha}(t), q_k^{1-\alpha}(t)]\}}$, where $q_k^{\alpha}(t)$ and $q_k^{1-\alpha}(t)$ are the quantile estimates obtained via AsylADMM. For rank-based trimming, the weight is given by $W_k(t) = \mathbb{I}{\{R_k(t) \in I_{n,\alpha}\}}$, where $R_k(t)$ is the rank estimate computed via Asynchronous GoRank and $I_{n,\alpha}$ is defined in Section 4.2. The results demonstrate that quantile-based trimming significantly outperforms rank-based trimming, particularly in later iterations where it achieves perfect accuracy. We investigate the impact of the trimming level $\alpha$ and contamination level $\varepsilon$ across various graph topologies. Quantile-based trimming consistently outperforms rank-based trimming on sparse graphs. However, on complete graphs, rank-based trimming performs well, whereas quantile-based trimming converges slower, particularly when both $\alpha$ and $\varepsilon$ are high.

\begin{figure}[htbp]
    \centering
    \begin{subfigure}{0.32\textwidth}
        \centering
        \includegraphics[height=3.5cm, width=0.9\linewidth]{fig/appendix/trimming_0.2_eps_0.1_101_geometric.pdf}
        \caption{$\alpha=0.2$ and $\varepsilon=0.1$}
    \end{subfigure}\hfill
    \begin{subfigure}{0.32\textwidth}
        \centering
        \includegraphics[height=3.5cm, width=0.9\linewidth]{fig/appendix/trimming_0.4_eps_0.3_101_geometric.pdf}
        \caption{$\alpha=0.4$ and $\varepsilon=0.3$}
    \end{subfigure}\hfill
    \begin{subfigure}{0.32\textwidth}
        \centering
        \includegraphics[height=3.5cm, width=0.9\linewidth]{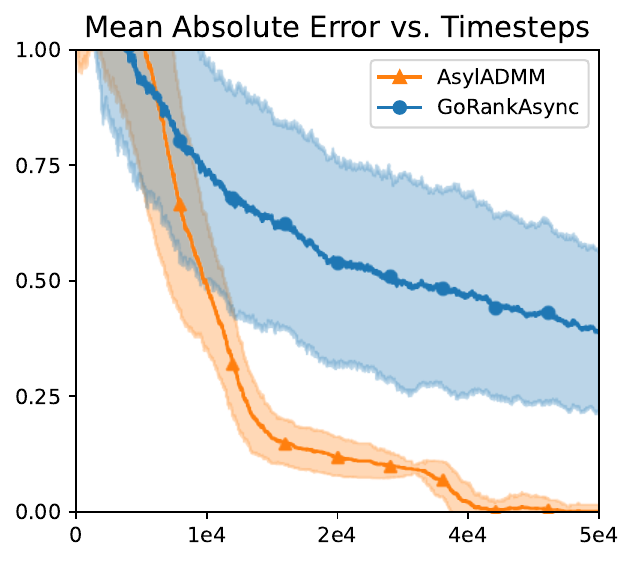}
        \caption{$\alpha=0.45$ and $\varepsilon=0.4$}
    \end{subfigure}

    \caption{Comparison of quantile-based (AsylADMM) versus rank-based trimming (Asynchronous GoRank). The setup follows Plot (a) in Section 3 with varying trimming levels $\alpha$ and contamination levels $\varepsilon$. Plots (a)–(c) use a geometric graph.}
    \label{fig:trimming_rule}
\end{figure}

\vspace{5em}

\begin{figure}[htbp]
    \centering
    \begin{subfigure}{0.32\textwidth}
        \centering
        \includegraphics[height=3.5cm, width=0.9\linewidth]{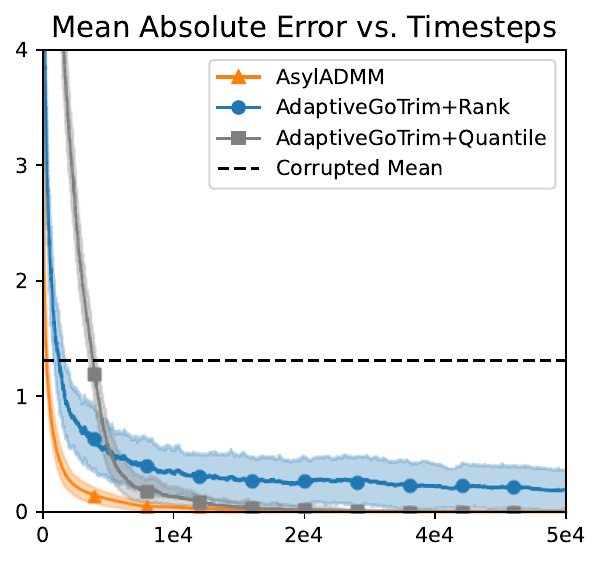}
        \caption{$\alpha=0.2$ and $\varepsilon=0.1$}
    \end{subfigure}\hfill
    \begin{subfigure}{0.32\textwidth}
        \centering
        \includegraphics[height=3.5cm, width=0.9\linewidth]{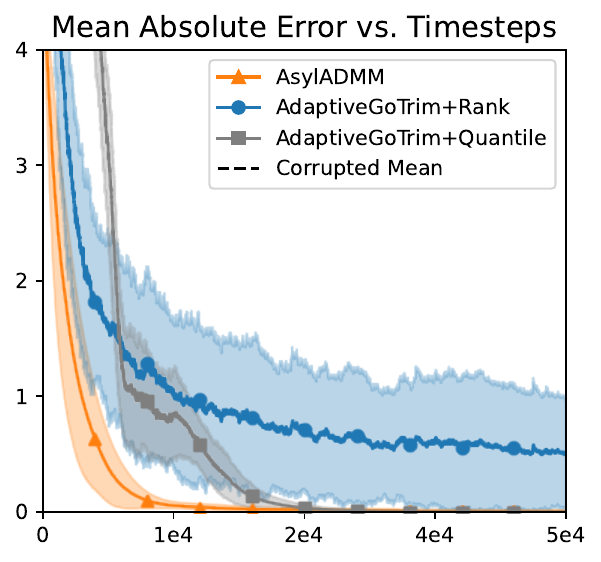}
        \caption{$\alpha=0.4$ and $\varepsilon=0.3$}
    \end{subfigure}\hfill
    \begin{subfigure}{0.32\textwidth}
        \centering
        \includegraphics[height=3.5cm, width=0.9\linewidth]{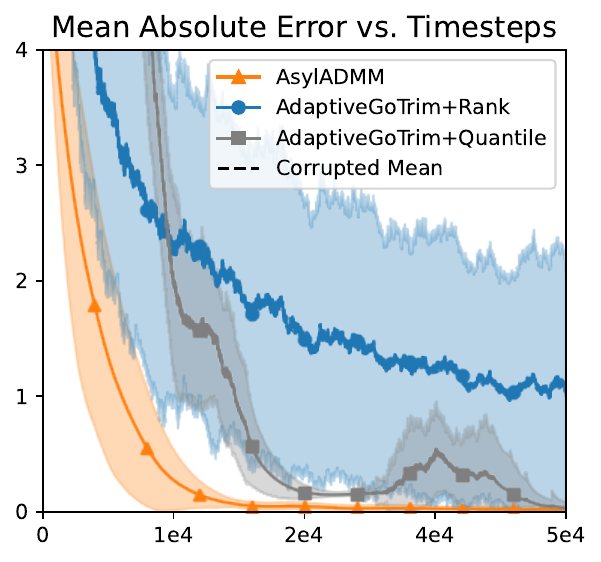}
        \caption{$\alpha=0.45$ and $\varepsilon=0.4$}
    \end{subfigure}

    \caption{Comparison of median (AsylADMM), quantile-based trimmed mean (GoTrim+AsylADMM) and rank-based trimmed-mean (GoTrim+AsynGoRank). The setup follows Fig 2b. with varying trimming levels $\alpha$ and contamination levels $\varepsilon$. Plots (a)–(c) use a geometric graph.}
    \label{fig:third_trimming_mean}
\end{figure}

\subsection{Impact of trimming level $\alpha$ and contamination level $\varepsilon$ using Setup of Fig 2c}

\cref{fig:depth_trimming_mean_bis} corresponds to the same experimental setting as Fig. 2c, examining performance across varying trimming levels, contamination levels, and graph topologies. \cref{fig:depth_quantile} presents the mean absolute error of the GoDepth algorithm, specifically, the error in $L_2$ depth estimation as well as the error in estimating the $\alpha$-quantile of the depths.

\vspace{5em}

\begin{figure}[htbp]
    \centering
    \begin{subfigure}{0.32\textwidth}
        \centering
        \includegraphics[height=3.5cm, width=0.9\linewidth]{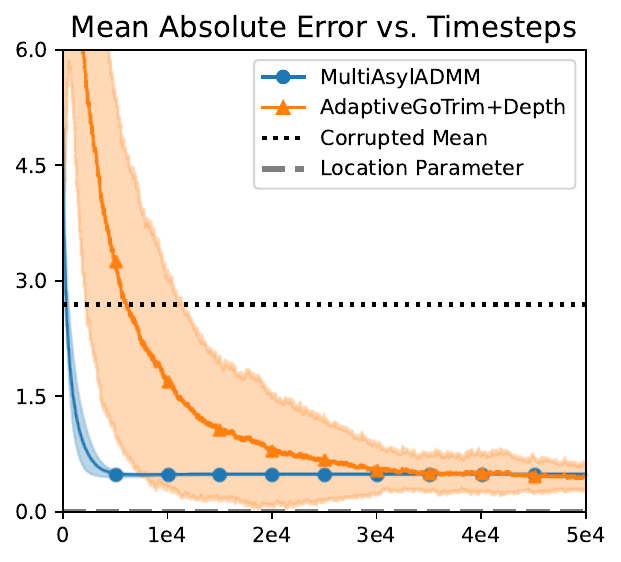}
        \caption{$\alpha=0.2$ and $\varepsilon=0.1$}
    \end{subfigure}\hfill
    \begin{subfigure}{0.32\textwidth}
        \centering
        \includegraphics[height=3.5cm, width=0.9\linewidth]{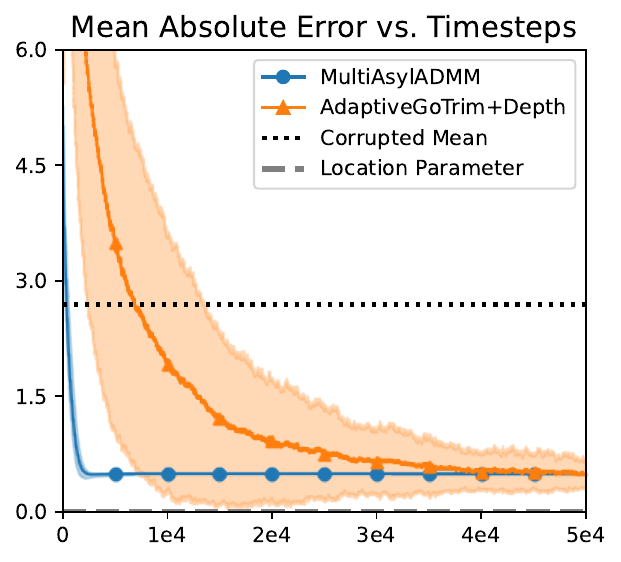}
        \caption{$\alpha=0.2$ and $\varepsilon=0.1$}
    \end{subfigure}\hfill
    \begin{subfigure}{0.32\textwidth}
        \centering
        \includegraphics[height=3.5cm, width=0.9\linewidth]{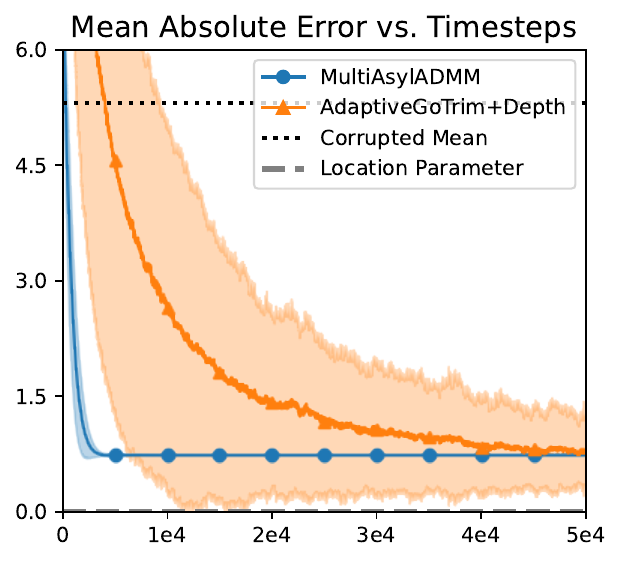}
        \caption{$\alpha=0.3$ and $\varepsilon=0.2$}
    \end{subfigure}

    \caption{Comparison of geometric median (AsylADMM) with depth-based trimming. The setup follows Plot (c) in Section 4 with varying trimming levels $\alpha$ and contamination levels $\varepsilon$. Plot (a) uses a geometric graph and (b)-(c) use a Watts-Strogatz graph.}
    \label{fig:depth_trimming_mean_bis}
\end{figure}

\begin{figure}[htbp]
    \centering
    \begin{subfigure}{0.32\textwidth}
        \centering
        \includegraphics[height=3.5cm, width=0.9\linewidth]{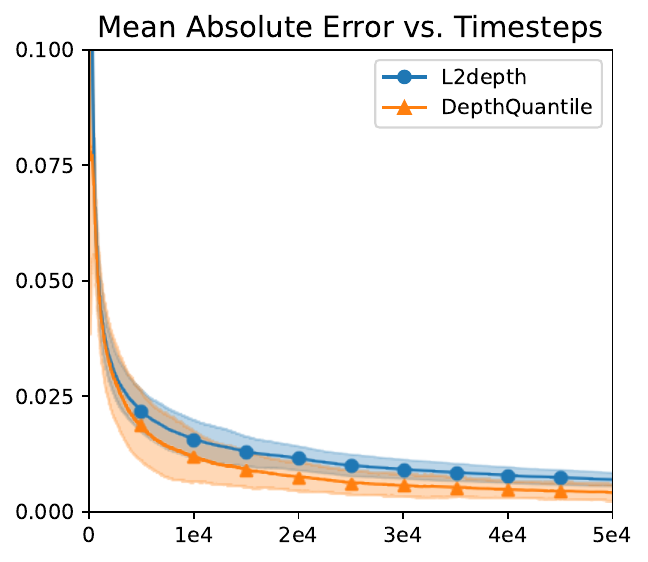}
        \caption{$\alpha=0.2$ and $\varepsilon=0.1$}
    \end{subfigure}\hfill
    \begin{subfigure}{0.32\textwidth}
        \centering
        \includegraphics[height=3.5cm, width=0.9\linewidth]{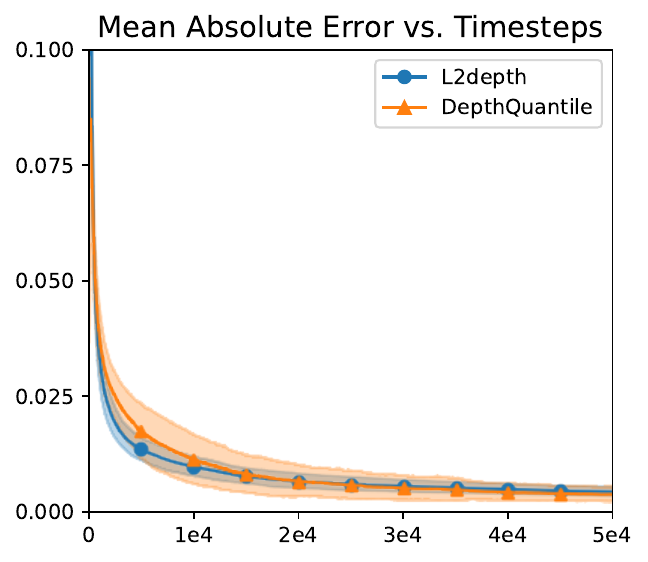}
        \caption{$\alpha=0.3$ and $\varepsilon=0.2$}
    \end{subfigure}\hfill
    \begin{subfigure}{0.32\textwidth}
        \centering
        \includegraphics[height=3.5cm, width=0.9\linewidth]{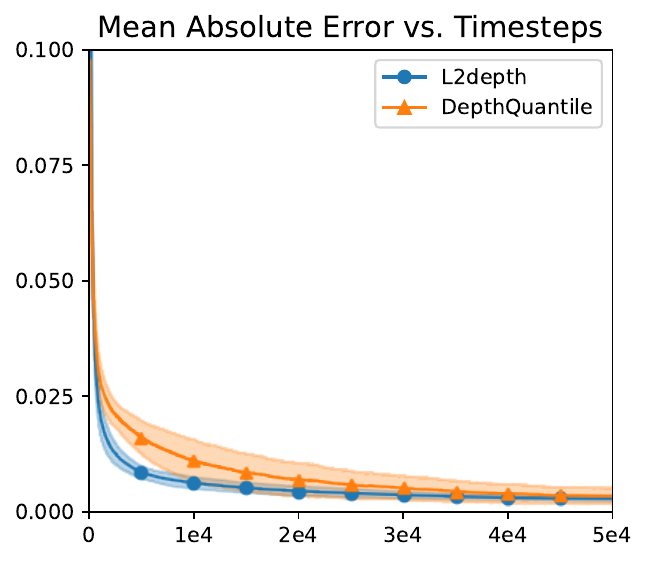}
        \caption{$\alpha=0.4$ and $\varepsilon=0.3$}
    \end{subfigure}
    \caption{Empirical performance of depth estimation with corresponding quantile estimation on geometric graph.}
    \label{fig:depth_quantile}
\end{figure}

\newpage
\section{Comparison of Synchronous versus Asynchronous setting}
\label{app:sync-vs}

\begin{figure}[htbp]
    \centering
    \begin{subfigure}{0.32\textwidth}
        \centering
        \includegraphics[height=3.5cm, width=0.9\linewidth]{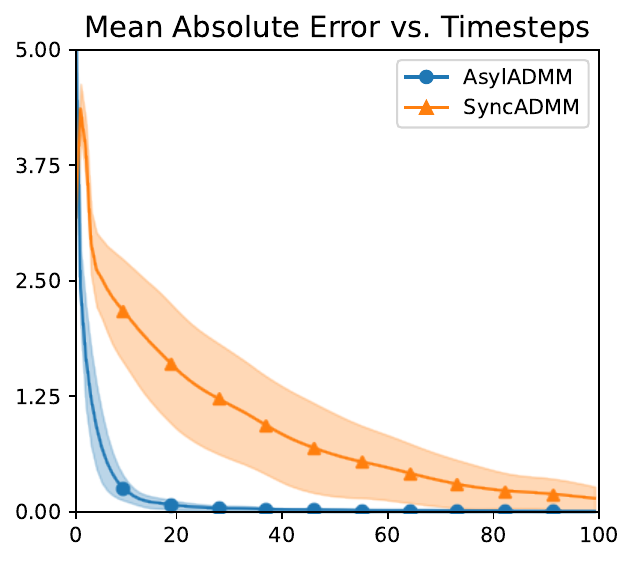}
        \caption{$\alpha=0.5$}
    \end{subfigure}\hfill
    \begin{subfigure}{0.32\textwidth}
        \centering
        \includegraphics[height=3.5cm, width=0.9\linewidth]{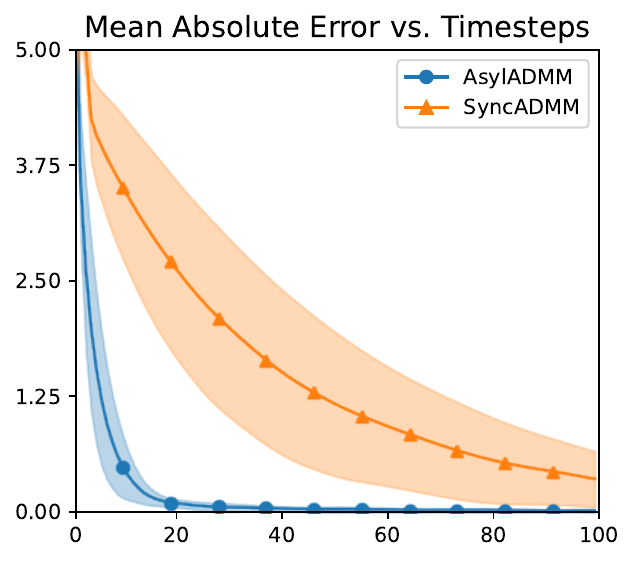}
        \caption{$\alpha=0.3$}
    \end{subfigure}\hfill
    \begin{subfigure}{0.32\textwidth}
        \centering
        \includegraphics[height=3.5cm, width=0.9\linewidth]{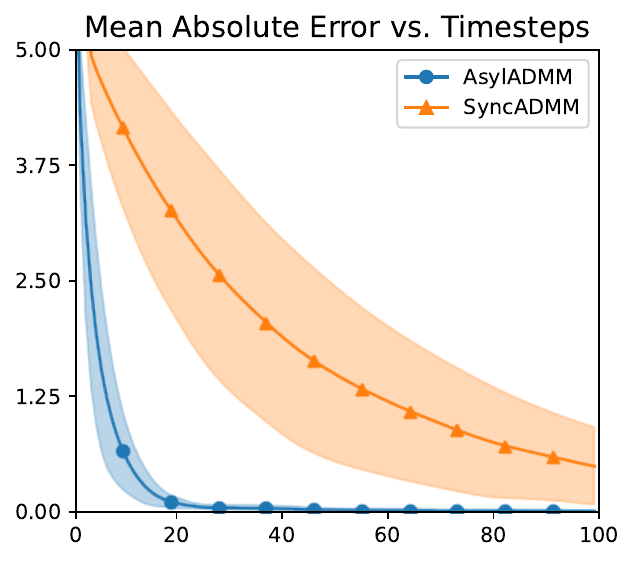}
        \caption{$\alpha=0.2$}
    \end{subfigure}

    \caption{Comparison of AsylADMM versus its synchronous variant on median/quantile estimation using a geometric graph with 101 nodes. One iteration corresponds to one full graph use ($|E|$ asynchronous updates).}
    \label{fig:sync}
\end{figure}

\section{Validity of MAE as an Evaluation Metric}
\label{app:mae-discussion}

In this section, we show that reporting the mean absolute error as the evaluation metric is a suitable proxy for the convergence of the algorithm in the median and quantile estimation experiments from Figures 1a and 1b. As an alternative metric, we consider the optimality gap, which decomposes into two components that directly reflect what our ADMM framework optimizes: (i) the average loss $(1/n)\sum_{i=1}^n f(x_i) - f(x^*)$, where $f$ is the sum of pinball losses and $x^*$ is the minimizer, and (ii) the mean consensus error $(1/|E|)\sum_{(i,j) \in E}|x_i - x_j|$, measuring agreement across edges. 

\begin{figure}[htbp]
    \centering
    \begin{subfigure}{0.45\textwidth}
        \centering
        \includegraphics[height=4.5cm, width=0.9\linewidth]{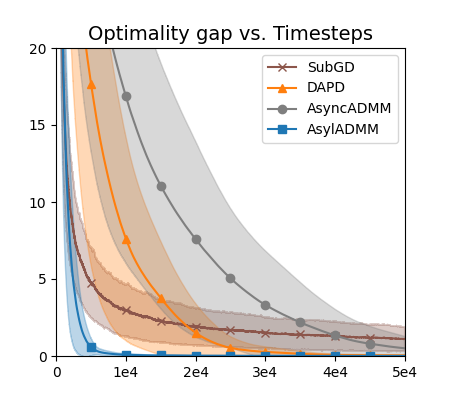}
        \caption{Fig. 1a (median estimation)}
    \end{subfigure}\hfill
    \begin{subfigure}{0.45\textwidth}
        \centering
        \includegraphics[height=4.5cm, width=0.9\linewidth]{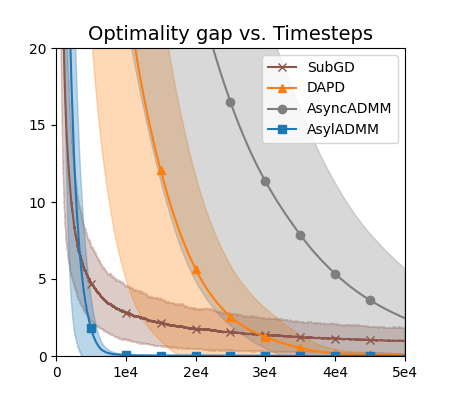}
        \caption{Fig. 1b (quantile estimation)}
    \end{subfigure}\hfill

    \caption{Optimality Gap as Metric for Experiment Setup of Fig 1a and Fig 1b}
    \label{fig:metric-optimality-gap}
\end{figure}

Figure 16 shows that this metric yields the same qualitative conclusions as the MAE: our method consistently outperforms the baselines, and the relative ordering of methods is preserved. We therefore conclude that MAE is a faithful proxy for the full optimality gap in this setting, while being considerably cheaper to compute.

The table below reports final estimation error using the setup of Fig. 1b after 50000 iterations (MAE of quantile estimation and optimality gap, as discussed in L2), and downstream performance on quantile-based trimming across all baselines. For downstream evaluation, we report the F2 score to assess whether the correct samples are flagged as outliers (the bottom $\alpha$ fraction). The F2 score is defined as (5 * precision * recall) / (4 * precision + recall) and places more weight on recall then on precision. This is appropriate in our setting, since failing to detect a true outlier (false negative) is more detrimental than incorrectly flagging a clean sample (false positive). These results show that AsylADMM still outperforms existing methods in terms of final metrics (here at iteration 50000), confirming that the faster convergence observed in our plots does not come at the cost of final estimation quality.

\begin{table}[h]
\centering
\begin{tabular*}{0.8\columnwidth}{@{\extracolsep{\fill}}@{\hspace{0.5em}}lccc}
\toprule
\textbf{Method} &  MAE &  Optimality Gap &  F2 score \\
\midrule
SubGD & 0.276 ± 0.114 & 0.974 ± 0.827 & 0.957 ± 0.037 \\
AsyncADMM & 0.469 ± 0.372 & 2.416 ± 3.196 & 0.869 ± 0.099 \\
DAPD & 0.089 ± 0.057 & 0.068 ± 0.069 & 0.989 ± 0.020 \\
AsylADMM & 0.017 ± 0.015 & 0.010 ± 0.011 & 1.000 ± 0.000 \\
\bottomrule
\end{tabular*}
\end{table}

\section{Robust Regression: Decentralized Least Trimmed Squares}
\label{app:rob-reg}

We consider a linear regression problem where each node $k$ holds a sample $(\mathbf A_k, \mathbf b_k)$ with $\mathbf A_k \in \mathbb{R}^{1 \times d}$ and $\mathbf b_k \in \mathbb{R}$. Each agent holds a local least-squares problem with feature dimension $d \in \{1, 3, 5\}$. Data is generated with 20\% of corrupted nodes (outliers). The procedure is as follows. For $d=1$, each inlier node $k$ generates features $a_{k,i} \sim \mathrm{Uniform}(0, 10)$, forms rows $\mathbf A_k = [\mathbf a_k \mid \mathbf{1}]$, and produces observations $\mathbf b_k = \mathbf A_k \mathbf x^* + \varepsilon_k$ with $\mathbf x^* = (2.5,\; 1.0)^\top$ and $ \varepsilon_k \sim \mathcal{N}(\mathbf{0}, \sigma^2)$. A fraction of the nodes are outliers: they generate features from $\mathrm{Uniform}(7, 12)$ and observations according to a different model $\mathbf x_{\mathrm{out}} = (-1.5,\; 12.0)^\top$, producing data that follows a conflicting slope. For the general case ($d > 1$), the true parameter is drawn as $x^\star \sim \mathcal{N}(0, 4I_d)$, and the outlier parameter is constructed as a perturbed reversal $x_{\mathrm{out}} = -\mathrm{flip}(x^\star) + \epsilon$, $\epsilon \sim \mathcal{N}(0, 1.5^2 I_d)$. Inlier nodes receive standard Gaussian features $A_k \sim \mathcal{N}(0, I_d)$ with observations $b_k = A_k x^\star + \xi_k$, while outlier nodes receive shifted and scaled features $A_k \sim \mathcal{N}(\mu, 9I_d)$ with $\mu \sim \mathcal{N}(0, 4I_d)$ and observations $b_k = A_k x_{\mathrm{out}} + \xi_k$, where $\xi_k \sim \mathcal{N}(0, \sigma^2 I)$ in both cases. This ensures that outlier nodes are adversarial in both their feature distribution and their underlying linear model, making the robust regression task non-trivial.  Standard ordinary least squares (OLS), which minimizes $\sum_{k=1}^n \|\mathbf A_k \mathbf x - \mathbf b_k\|^2$, is highly sensitive to such outliers. To achieve robustness, we employ a \emph{least trimmed squares} (LTS) approach: the data is fit by iteratively solving OLS, computing residuals, discarding the nodes with the largest residuals, and refitting on the remaining nodes. In our setting this refit procedure is repeated $5$ times, obtain a L2 error of $0.0335$ (while the standard OLS is clearly not robust and has an L2 error of $7.3178$). Three decentralized methods are compared: (1) Decentralized $\alpha$-Trimmed Least Squares using AsylADMM for distributed quantile estimation ($\alpha = 0.75$), meaning the top 25\% of residuals are trimmed), (2) Decentralized OLS (standard gossip-based gradient descent with no robustness), and (3) Decentralized Huber Regression (gossip averaging with Huber loss instead of squared loss). Huber regression mitigates outlier influence by using a hybrid loss that is quadratic for small residuals and linear for large ones. Given a threshold $\varepsilon$, the gradient with residual $r_k = A_k x_k - b_k$ is:

$$g_k = A_k^\top \, h_\varepsilon(r_k), \quad \text{where} \quad [h_\varepsilon(r)]_j = \begin{cases} r_j & \text{if } |r_j| \leq \varepsilon \\ \varepsilon \cdot \mathrm{sign}(r_j) & \text{otherwise} \end{cases}$$
This clips large residual components, capping the influence of any single outlier on the parameter update. All methods use gossip-style pairwise communication. Each trial runs with a fixed learning rate of $0.05$. Across 100 independent trials, the AsylADMM penalty $\rho$ is drawn uniformly from $[0.1, 1.0]$, and the Huber threshold $\varepsilon$ is likewise drawn from $[0.1, 1.0]$. This randomization tests robustness to hyperparameter sensitivity. Nodes are initialized with their local least-squares solutions.The metric is the mean L2 parameter error across all nodes, $\|x_k - x_{\text{true}}\|$, averaged over nodes and then over the 100 trials where $x_{\text{true}}$ is the parameter used for the non-contaminated data generation. 

\begin{figure}[htbp]
    \centering
    \begin{subfigure}{0.32\textwidth}
        \centering
        \includegraphics[height=3.5cm, width=0.9\linewidth]{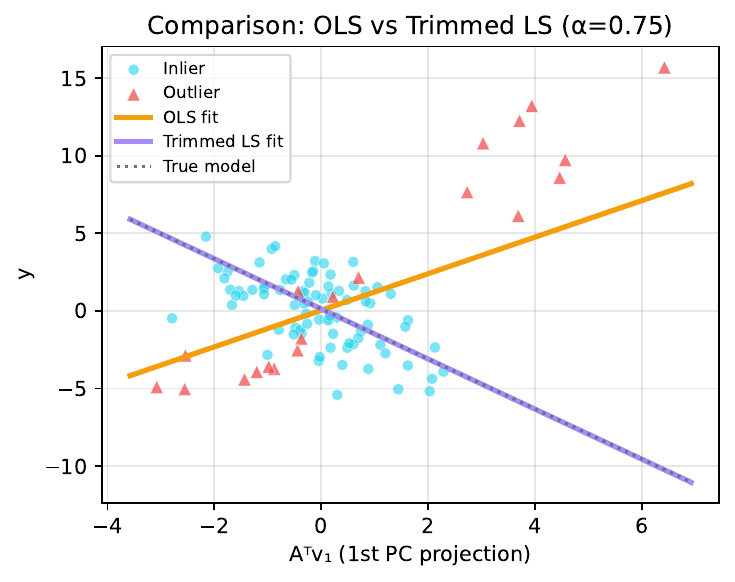}
        \caption{Illustration of contaminated data}
    \end{subfigure}\hfill
    \begin{subfigure}{0.32\textwidth}
        \centering
        \includegraphics[height=3.5cm, width=0.9\linewidth]{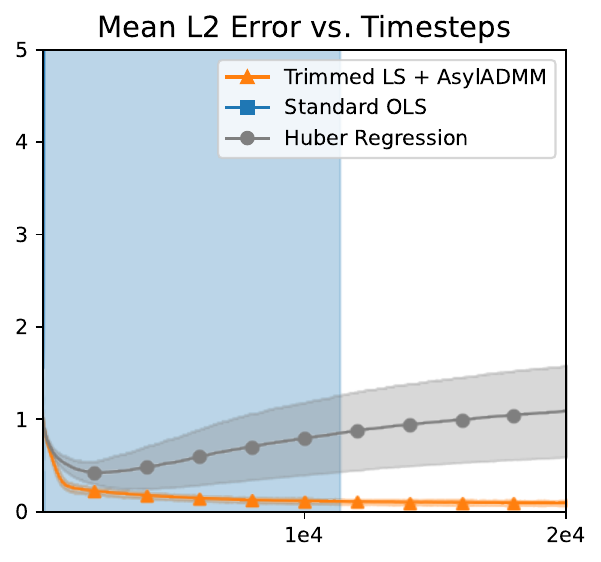}
        \caption{$d=1$} 
    \end{subfigure}\hfill
    \begin{subfigure}{0.32\textwidth}
        \centering
        \includegraphics[height=3.5cm, width=0.9\linewidth]{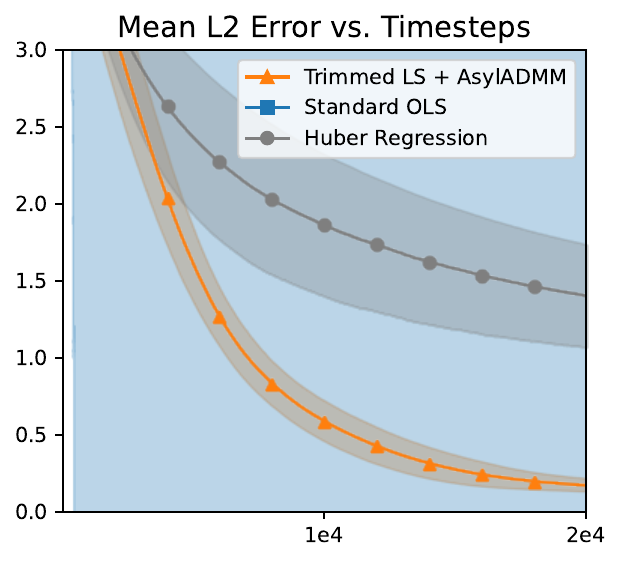}
        \caption{$d=5$}
    \end{subfigure}
    \caption{Robust decentralized regression using LST compared to OLS and Huber regression}
\end{figure}    

We define the local residual as $r_k(\mathbf x) = \|\mathbf A_k \mathbf x - \mathbf b_k\|$ and local gradient as $\mathbf g_k(\mathbf x) = \mathbf A_k^\top(\mathbf A_k \mathbf x - \mathbf b_k)$, which is the gradient of the  local least-squares cost $f_k(\mathbf x) = \|\mathbf A_k \mathbf x - \mathbf b_k\|^2 / 2$. To exclude the influence of outlier nodes, we maintain a running (local) estimate of the $(1-\alpha)$-quantile of the  residuals across the network using AsylADMM. At each iteration, any node whose local residual exceeds this threshold has its gradient discarded from the optimization, effectively trimming the contributions of the nodes with the largest residuals.

\begin{algorithm}[htbp]
    \caption{Decentralized Least Trimmed Squares}
    \label{alg:decentralized-ols}
    \begin{algorithmic}[1]
    \STATE \textbf{Input:} Learning rate \(\eta > 0\), Trimming parameter $\alpha$.
    \STATE \textbf{Init:} For each \(k\in[n]\), \(\mathbf x_k \gets \mathbf x_k^{(0)}\), $\texttt{asyladmm}.\texttt{init}()$. 
    \FOR{\(t=0, 1, \ldots\)}
    \STATE Draw \(e=(i, j) \in E\) with probability $p_e>0$.
    \STATE Gossip averaging: \(\mathbf x_i \gets \tfrac{1}{2}(\mathbf x_i + \mathbf x_j)\), \quad \(\mathbf x_j \gets \mathbf x_i\).
     \STATE Compute local residuals: \(a_k \gets r_k(\mathbf x_k)\) for \(k \in \{i,j\}\).
     \STATE \(q_k \gets \texttt{asyladmm}.\texttt{update}(k, a_k)\) for \(k \in \{i,j\}\). 
    \FOR{\(k\in \{i, j\}\)}
     \STATE Compute local residual: \(a_k \gets r_k(\mathbf x_k)\).
     \IF{\(a_k \leq q_k\)}
            \STATE Compute local gradient: \(\mathbf g_k \gets \nabla f_k(\mathbf x_k)\).
            \STATE Gradient step: \(\mathbf x_k \gets \mathbf x_k - \eta\, \mathbf g_k\).
        \ENDIF
    \ENDFOR
    \ENDFOR
    \end{algorithmic}
\end{algorithm}


\end{document}